\tikzset{
  jump/.style={
     to path={
         let \p1=(\tikztostart),\p2=(\tikztotarget),\n1={atan2(\y2-\y1,\x2-\x1)} in
         (\tikztostart) -- ($($(\tikztostart)!#1!(\tikztotarget)$)!0.15cm!(\tikztostart)$)
         arc[start angle=\n1+180,end angle=\n1,radius=0.15cm] -- (\tikztotarget)}
  },
  jump/.default={0.5}
}
\tikzstyle{mynode}=[circle, thick, minimum size=0.7cm, draw=black!80]
\tikzstyle{observed}=[circle, thick, minimum size=0.7cm, draw=black!100, fill=black!20]
\tikzstyle{latent}=[circle, thick, minimum size=0.7cm, draw=black!80]
\tikzstyle{plate}=[rectangle, thick, inner sep=0.3cm, draw=black!100]
\tikzstyle{shadeplate}=[rectangle, thick, inner sep=0.4cm, draw=black!100]
\tikzstyle{table}=[circle,fill=blue!20,draw=black!100,inner sep=1pt, minimum size=30pt]
\tikzstyle{client}=[rectangle,fill=blue!20,draw=black!100,inner sep=1pt, minimum size=12pt]
\newcommand{\todosb}[2][]{\todo[color=Cyan!20,size=\tiny,#1]{SB: #2}} % Soumya's comments
\newcommand{\commentout}[1]{}
\newcommand{\junk}[1]{}
\Crefname{corollary}{Corollary}{Corollaries}
\Crefname{proposition}{Proposition}{Propositions}
\Crefname{theorem}{Theorem}{Theorems}
\Crefname{definition}{Definition}{Definitions}
\Crefname{assumption}{Assumption}{Assumptions}
\Crefname{example}{Example}{Examples}
\Crefname{remark}{Remark}{Remarks}
\Crefname{setting}{Setting}{Settings}
\Crefname{lemma}{Lemma}{Lemmas}
\declaretheorem[name=Theorem,refname={Theorem,Theorems},Refname={Theorem,Theorems}]{theorem}
\declaretheorem[name=Lemma,refname={Lemma,Lemmas},Refname={Lemma,Lemmas},sibling=theorem]{lemma}
\declaretheorem[name=Assumption,refname={Assumption,Assumptions},Refname={Assumption,Assumptions}]{assumption}
\declaretheorem[name=Proposition,refname={Proposition,Propositions},Refname={Proposition,Propositions},sibling=theorem]{proposition}
\newcommand{\cA}{\mathcal{A}}
\newcommand{\cE}{\mathcal{E}}
\newcommand{\cL}{\mathcal{L}}
\newcommand{\cN}{\mathcal{N}}
\newcommand{\realset}{\mathbb{R}}
\newcommand{\R}{\mathbb{R}}
\newcommand{\diag}[1]{\mathrm{diag}\left(#1\right)}
\newcommand{\E}[1]{\mathbb{E} \left[#1\right]}
\newcommand{\condE}[2]{\mathbb{E} \left[#1 \,\middle|\, #2\right]}
\newcommand{\prob}[1]{\mathbb{P} \left(#1\right)}
\newcommand{\condprob}[2]{\mathbb{P} \left(#1 \,\middle|\, #2\right)}
\newcommand{\abs}[1]{\left|#1\right|}
\newcommand*\dif{\mathop{}\!\mathrm{d}}
\newcommand{\I}[1]{\mathds{1} \! \left\{#1\right\}}
\newcommand{\set}[1]{\left\{#1\right\}}
\newcommand{\T}{^\top}
\DeclareMathOperator*{\argmax}{arg\,max\,}
\let\det\relax
\DeclareMathOperator{\det}{det}
\mathchardef\mhyphen="2D
\newcommand{\adats}{\ensuremath{\tt AdaTS}\xspace}
\newcommand{\adatsminus}{\ensuremath{\tt AdaTS^-}\xspace}
\newcommand{\adatsplus}{\ensuremath{\tt AdaTS^+}\xspace}
\newcommand{\metats}{\ensuremath{\tt MetaTS}\xspace}
\newcommand{\oraclets}{\ensuremath{\tt OracleTS}\xspace}
\newcommand{\ts}{\ensuremath{\tt TS}\xspace}
\title{No Regrets for Learning the Prior in Bandits}
\author{
  Soumya Basu \\
  Google \\
  \And
  Branislav Kveton \\
  Google Research \\
  \And
  Manzil Zaheer \\
  Google Research \\
  \And
  Csaba Szepesv\'{a}ri \\
  DeepMind / University of Alberta \\
}
\begin{document}

\maketitle

\begin{abstract}
We propose \adats, a Thompson sampling algorithm that adapts sequentially to bandit tasks that it interacts with. The key idea in \adats is to adapt to an unknown task prior distribution by maintaining a distribution over its parameters. When solving a bandit task, that uncertainty is marginalized out and properly accounted for. \adats is a fully-Bayesian algorithm that can be implemented efficiently in several classes of bandit problems. We derive upper bounds on its Bayes regret that quantify the loss due to not knowing the task prior, and show that it is small. Our theory is supported by experiments, where \adats outperforms prior algorithms and works well even in challenging real-world problems.
\end{abstract}

\section{Introduction}
\label{sec:introduction}

We study the problem of maximizing the total reward, or minimizing the total regret, in a sequence of stochastic bandit instances \citep{lai85asymptotically,auer02finitetime,lattimore19bandit}. We consider a Bayesian version of the problem, where the bandit instances are drawn from some distribution. More specifically, the learning agent interacts with $m$ bandit instances in $m$ tasks, with one instance per task. The interaction with each task is for $n$ rounds and with $K$ arms. The reward distribution of arm $i \in [K]$ in task $s \in [m]$ is $p_i(\cdot; \theta_{s, *})$, where $\theta_{s, *}$ is a shared parameter of all arms in task $s$. When arm $i$ is pulled in task $s$, the agent receives a random reward from $p_i(\cdot; \theta_{s, *})$. The parameters $\theta_{1, *}, \dots, \theta_{m, *}$ are drawn independently of each other from a \emph{task prior} $P(\cdot; \mu_*)$. The task prior is parameterized by an unknown \emph{meta-parameter} $\mu_*$, which is drawn from a \emph{meta-prior} $Q$. The agent does not know $\mu_*$ or $\theta_{1, *}, \dots, \theta_{m, *}$. However, it knows $Q$ and the parametric forms of all distributions, which help it to learn about $\mu_*$. This is a form of \emph{meta-learning} \citep{thrun96explanationbased,thrun98lifelong,baxter98theoretical,baxter00model}, where the agent learns to act from interactions with bandit instances.

A simple approach is to ignore the hierarchical structure of the problem and solve each bandit task independently with some bandit algorithm, such as \emph{Thompson sampling (TS)} \citep{thompson33likelihood,chapelle11empirical,agrawal12analysis,russo18tutorial}. This may be highly suboptimal. To illustrate this, imagine that arm $1$ is optimal for any $\theta$ in the support of $P(\cdot; \mu_*)$. If $\mu_*$ was known, any reasonable algorithm would only pull arm $1$ and have zero regret over any horizon. Likewise, a clever algorithm that learns $\mu_*$ should eventually pull arm $1$ most of the time, and thus have diminishing regret as it interacts with a growing number of tasks. Two challenges arise when designing the clever algorithm. First, can it be computationally efficient? Second, what is the regret due to adapting to $\mu_*$?

We make the following contributions. First, we propose a Thompson sampling algorithm for our problem, which we call \adats. \adats maintains a distribution over the meta-parameter $\mu_*$, which concentrates over time and is marginalized out when interacting with individual bandit instances. Second, we propose computationally-efficient implementations of \adats for multi-armed bandits \citep{lai85asymptotically,auer02finitetime}, linear bandits \citep{dani08stochastic,abbasi-yadkori11improved}, and combinatorial semi-bandits \citep{gai12combinatorial,chen14combinatorial,kveton15tight}. These implementations are for specific reward distributions and conjugate task priors. Third, we bound the $n$-round Bayes regret of \adats in linear bandits and semi-bandits, and multi-armed bandits as a special case. The Bayes regret is defined by taking an expectation over all random quantities, including $\mu_* \sim Q$. Our bounds show that not knowing $\mu_*$ has a minimal impact on the regret as the number of tasks grows, of only $\tilde{O}(\sqrt{m n})$. This is in a sharp contrast to prior work \citep{kveton21metathompson}, where this is $\tilde{O}(\sqrt{m} n^2)$. Finally, our experiments show that \adats quickly adapts to the unknown meta-parameter $\mu_*$, is robust to meta-prior misspecification, and performs well even in challenging classification problems.

We present a general framework for learning to explore from similar past exploration problems. One potential application is cold-start personalization in recommender systems where users are tasks. The users have similar preferences, but neither the individual preferences nor their similarity is known in advance. Another application could be online regression with bandit feedback (\cref{sec:classification experiments}) where individual regression problems are tasks. Similar examples in the tasks have similar mean responses, which are unknown in advance.

\section{Setting}
\label{sec:setting}

We first introduce our notation. The set $\set{1, \dots, n}$ is denoted by $[n]$. The indicator $\I{E}$ denotes that event $E$ occurs. The $i$-th entry of vector $v$ is $v_i$. If the vector or its index are already subindexed, we write $v(i)$. We use $\tilde{O}$ for the big-O notation up to polylogarithmic factors. A diagonal matrix with entries $v$ is denoted $\diag{v}$. We use the terms \say{arm} and \say{action} interchangeably, depending on the context.

\begin{wrapfigure}{r}{2in}
  \centering
  \vspace{-0.2in}
  \begin{tikzpicture}[>=latex,text height=1.5ex,text depth=0.25ex]
  \matrix[row sep=0.9cm,column sep=0.1cm] {
    \node (muq) {$\mu_q$}; &
    \node (sigmaq) {$\Sigma_q$}; &
    \node (sigma0) {$\Sigma_0$}; &&&&&&&
    \node (sigma) {$\sigma^2$};
    \\
    \node (mu) [latent]{$\mu_{*}$}; &&
    \node (theta) [latent]{$\theta_{s,*}$}; &&&&&&&
    \node (y) [observed]{$Y_{s,t}$};
    \\
  };
  \path[->]
  (muq) edge[thick] (mu)
  (sigmaq) edge[thick] (mu)
  (sigma0) edge[thick] (theta)
  (sigma) edge[thick] (y)
  (mu) edge[thick] (theta)
  (theta) edge[thick] (y)
  ;
    \begin{pgfonlayer}{background}
      \node (obs) [plate, fit=(y)] {};
      \node [above = 2mm, left] at (obs.south east) {\tiny $t=1,...,n$};
      \node (tasks) [plate, fit=(theta) (y) (obs)] {\ \\[16mm]\tiny $s=1,...,m$};
    \end{pgfonlayer}
  \end{tikzpicture}
  \caption{Graphical model of our environment.}
  \vspace{-0.05in}
  \label{fig:environment}
\end{wrapfigure}
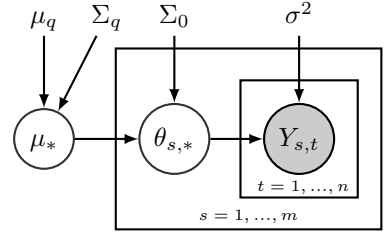

Our setting was proposed in \citet{kveton21metathompson} and is defined as follows. Each bandit \emph{problem instance} has $K$ arms. Each arm $i \in [K]$ is defined by distribution $p_i(\cdot; \theta)$ with parameter $\theta \in \Theta$. The parameter $\theta$ is shared among all arms. The mean of $p_i(\cdot; \theta)$ is denoted by $r(i; \theta)$. The learning agent interacts with $m$ instances, one at each of $m$ \emph{tasks}. At the beginning of task $s \in [m]$, an instance $\theta_{s, *} \in \Theta$ is sampled i.i.d.\ from a \emph{task prior} $P(\cdot; \mu_*)$, which is parameterized by $\mu_*$. The agent interacts with $\theta_{s, *}$ for $n$ rounds. In round $t \in [n]$, it pulls one arm and observes a stochastic realization of its reward. We denote the pulled arm in round $t$ of task $s$ by $A_{s, t} \in [K]$, the realized rewards of all arms in round $t$ of task $s$ by $Y_{s, t} \in \realset^K$, and the reward of arm $i \in [K]$ by $Y_{s, t}(i) \sim p_i(\cdot; \theta_{s, *})$. We assume that the realized rewards $Y_{s, t}$ are i.i.d.\ with respect to both $s$ and $t$. A graphical model of our environment is drawn in \cref{fig:environment}. We define the distribution-specific parameters $\mu_q$, $\Sigma_q$, $\Sigma_0$, and $\sigma^2$ when we instantiate our framework. Our terminology is summarized in \cref{sec:appendix}.

The \emph{$n$-round regret} of an agent or algorithm over $m$ tasks with task prior $P(\cdot; \mu_*)$ is defined as
\begin{align}
  R(m, n; \mu_*)
  = \sum_{s = 1}^m \condE{\sum_{t = 1}^n r(A_{s, *}; \theta_{s, *}) -
  r(A_{s, t}; \theta_{s, *})}{\mu_*}\,,
  \label{eq:regret}
\end{align}
where $A_{s,*} = \argmax_{i \in [K]} r(i; \theta_{s, *})$ is the \emph{optimal arm} in the random problem instance $\theta_{s, *}$ in task $s \in [m]$. The above expectation is over problem instances $\theta_{s, *} \sim P(\cdot; \mu_*)$, their realized rewards, and also pulled arms. Note that $\mu_*$ is fixed. \citet{russo14learning} showed that the Bayes regret, which matches the definition in \eqref{eq:regret} in any task, of Thompson sampling in a $K$-armed bandit with $n$ rounds is $\tilde{O}(\sqrt{K n})$. So, when TS is applied independently in each task, $R(m, n; \mu_*) = \tilde{O}(m \sqrt{K n})$.

Our goal is to attain a comparable regret without knowing $\mu_*$. We frame this problem in a Bayesian fashion, where $\mu_* \sim Q$ before the learning agent interacts with the first task. The agent knows $Q$ and we call it a \emph{meta-prior}. Accordingly, we consider $R(m,n) = \E{R(m, n; \mu_*)}$ as a metric and call it the \emph{Bayes regret}. Our approach is motivated by hierarchical Bayesian models \citep{gelman13bayesian}, where the uncertainty in prior parameters, such as $\mu_*$, is represented by another distribution, such as $Q$. In these models, $Q$ is called a \emph{hyper-prior} and $\mu_*$ is called a \emph{hyper-parameter}. We attempt to learn $\mu_*$ from sequential interactions with instances $\theta_{s, *} \sim P(\cdot; \mu_*)$, which are also unknown. The agent can only observe their noisy realizations $Y_{s, t}$.

\section{Algorithm}
\label{sec:algorithm}

Our algorithm is presented in this section. To describe it, we need to introduce several notions of \emph{history}, the past interactions of the agent. We denote by $H_s = (A_{s, t}, Y_{s, t}(A_{s, t}))_{t = 1}^n$ the history in task $s$ and by $H_{1 : s} = H_1 \oplus \dots \oplus H_s$ a concatenated vector of all histories in the first $s$ tasks. The history up to round $t$ in task $s$ is $H_{s, t} = (A_{s, \ell}, Y_{s, \ell}(A_{s, \ell}))_{\ell = 1}^{t - 1}$ and all history up to round $t$ in task $s$ is $H_{1 : s, t} = H_{1 : s - 1} \oplus H_{s, t}$. We denote the conditional probability distribution given history $H_{1 : s, t}$ by $\mathbb{P}_{s, t}(\cdot) = \condprob{\cdot}{H_{1 : s, t}}$ and the corresponding conditional expectation by $\mathbb{E}_{s, t}[\cdot] = \condE{\cdot}{H_{1 : s, t}}$.

Our algorithm is a form of Thompson sampling \cite{thompson33likelihood,chapelle11empirical,agrawal12analysis,russo18tutorial}. TS pulls arms proportionally to being optimal with respect to the posterior. In particular, let $\cL_{s, t}(\theta) = \prod_{\ell = 1}^{t - 1} p_{A_{s, \ell}}(Y_{s, \ell}(A_{s, \ell}); \theta)$ be the \emph{likelihood of observations} in task $s$ up to round $t$. If the prior $P(\cdot; \mu_*)$ was known, the posterior of instance $\theta$ in round $t$ would be $P_{s, t}^\textsc{ts}(\theta) \propto \cL_{s, t}(\theta) \, P(\theta; \mu_*)$. TS would sample $\tilde{\theta}_t \sim P_{s, t}^\textsc{ts}$ and pull arm $A_t = \argmax_{i \in [K]} r(i; \tilde{\theta}_t)$.

We address the case of unknown $\mu_*$. The key idea in our method is to maintain a posterior density of $\mu_*$, which we call a \emph{meta-posterior}. This density represents uncertainty in $\mu_*$ given history. In task $s$, we denote it by $Q_s$ and define it such that $\condprob{\mu_* \in B}{H_{1 : s - 1}} = \int_{\mu \in B} Q_s(\mu) \, d \kappa_1(\mu)$ holds for any set $B$, where $\kappa_1$ is the reference measure for $\mu$. We use this more general notation, as opposing to $d \mu$, because $\mu$ can be both continuous and discrete. When solving task $s$, $Q_s$ is used to compute an \emph{uncertainty-adjusted task prior} $P_s$, which is a posterior density of $\theta_{s, *}$ given history. Formally, $P_s$ is a density such that $\condprob{\theta_{s, *} \in B}{H_{1 : s - 1}} = \int_{\theta \in B} P_s(\theta) \, d \kappa_2(\theta)$ holds for any set $B$, where $\kappa_2$ is the reference measure for $\theta$. After computing $P_s$, we run TS with prior $P_s$ to solve task $s$. To maintain $Q_s$ and $P_s$, we find it useful expressing them using a recursive update rule below.

\begin{restatable}[]{proposition}{posterior}
\label{prop:posterior} Let $\cL_s(\theta) = \prod_{\ell = 1}^n p_{A_{s, \ell}}(Y_{s, \ell}(A_{s, \ell}); \theta)$ be the likelihood of observations in task $s$. Then for any task $s \in [m]$,
\begin{align*}
  P_s(\theta)
  = \int_\mu P(\theta; \mu) \, Q_s(\mu) \, d \kappa_1(\mu)\,, \quad
  Q_s(\mu) 
  = \int_\theta \cL_{s - 1}(\theta) \, P(\theta; \mu) \, d \kappa_2(\theta) \, Q_{s - 1}(\mu)\,.
\end{align*}
\end{restatable}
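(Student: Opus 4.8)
The plan is to derive both identities directly from Bayes' rule together with the conditional-independence structure encoded in \cref{fig:environment}, treating the two equations separately and handling $Q_s$ by induction on $s$ with base case $Q_1 = Q$ (since $H_{1:0}$ is empty).

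For the uncertainty-adjusted prior $P_s$, I would start from its definition as the posterior density of $\theta_{s,*}$ given $H_{1:s-1}$ and marginalize over the meta-parameter, writing $\condprob{\theta_{s,*} \in B}{H_{1:s-1}} = \int_\mu \condprob{\theta_{s,*} \in B}{\mu_* = \mu, H_{1:s-1}} \, Q_s(\mu) \, d\kappa_1(\mu)$. The key observation is that in the graphical model $\theta_{s,*}$ is drawn from $P(\cdot; \mu_*)$ independently of everything in the first $s-1$ tasks once $\mu_*$ is fixed. Hence the inner conditional probability collapses to $\int_{\theta \in B} P(\theta; \mu) \, d\kappa_2(\theta)$, and after swapping the order of integration by Fubini I can read off $P_s(\theta) = \int_\mu P(\theta; \mu) \, Q_s(\mu) \, d\kappa_1(\mu)$. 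This direction is essentially bookkeeping once the conditional independence is invoked.

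For the meta-posterior $Q_s$, I would apply Bayes' rule to pass from the posterior given $H_{1:s-2}$, which is $Q_{s-1}$, to the posterior given $H_{1:s-1} = H_{1:s-2} \oplus H_{s-1}$, so that $Q_s(\mu) \propto \condprob{H_{s-1}}{\mu_* = \mu, H_{1:s-2}} \, Q_{s-1}(\mu)$. Using again that task $s-1$ is conditionally independent of the earlier tasks given $\mu_*$, I would marginalize the incremental likelihood over the instance $\theta_{s-1,*}$, obtaining $\condprob{H_{s-1}}{\mu_* = \mu, H_{1:s-2}} = \int_\theta \condprob{H_{s-1}}{\theta_{s-1,*} = \theta, H_{1:s-2}} \, P(\theta; \mu) \, d\kappa_2(\theta)$. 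It then remains to show that the within-task likelihood $\condprob{H_{s-1}}{\theta_{s-1,*} = \theta, H_{1:s-2}}$ equals $\cL_{s-1}(\theta)$ up to a factor independent of $\theta$ and $\mu$, which yields the claimed recursion.

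The main obstacle is this last step, which is exactly where the adaptivity of the data-collection policy enters. Because the pulled arms $A_{s-1,t}$ are chosen by \adats as a function of the within-task history, the likelihood of $H_{s-1}$ factors into the product of the policy's action-selection probabilities and the reward likelihood $\prod_t p_{A_{s-1,t}}(Y_{s-1,t}(A_{s-1,t}); \theta) = \cL_{s-1}(\theta)$. I would argue that, conditioned on $H_{1:s-2}$, these action-selection probabilities depend only on the observed rewards within task $s-1$, through the fixed prior $P_{s-1}$, and not on the latent $\theta$ or $\mu$; they therefore constitute a common multiplicative factor that is absorbed into the normalization and cancels between numerator and denominator. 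With that cancellation the recursion holds up to the normalizing constant that makes $Q_s$ integrate to one under $\kappa_1$, and I would note that the displayed equation is to be read in this proportional sense. Care also has to be taken to justify Fubini and to keep the reference measures $\kappa_1, \kappa_2$ explicit, since the framework allows $\mu$ and $\theta$ to be either continuous or discrete.
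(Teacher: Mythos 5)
Your proposal follows essentially the same route as the paper's own proof: Bayes' rule combined with the conditional independences of the hierarchical model, with the crucial step being exactly the paper's observation that the adaptive action-selection probabilities $\condprob{A_t}{H_{1 : s - 1, t}}$ are constant in $\theta$ and $\mu$, so the within-task likelihood reduces to $\cL_{s-1}(\theta)$ up to factors absorbed by normalization. The differences are only presentational: the paper argues in the countable case and leaves the proportionality-versus-equality issue implicit, whereas you keep the reference measures $\kappa_1, \kappa_2$ explicit and flag that the recursion holds up to the normalizing constant.
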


The claim is proved in \cref{sec:appendix}. The proof uses the Bayes rule, where we carefully account for the fact that the observations are collected adaptively, the pulled arm in round $t$ of task $s$ depends on history $H_{1 : s, t}$. The pseudocode of our algorithm is in \cref{alg:ada-ts}. Since the algorithm adapts to the unknown task prior $P(\cdot; \mu_*)$, we call it \adats. \adats can be implemented efficiently when $P_s$ is a conjugate prior for rewards, or a mixture of conjugate priors. We discuss several exact and efficient implementations starting from \cref{sec:gaussian bandit}.

\begin{algorithm}[t]
  \caption{\adats: Instance-adaptive exploration in Thompson sampling.}
  \label{alg:ada-ts}
  \begin{algorithmic}[1]
    \State Initialize meta-prior $Q_0 \gets Q$
    \For{$s = 1, \dots, m$}
      \State Compute meta-posterior $Q_s$ (\cref{prop:posterior})
      \State Compute uncertainty-adjusted task prior $P_s$ (\cref{prop:posterior})
      \For{$t = 1, \dots, n$}
        \State Compute posterior of $\theta$ in task $s$, $P_{s, t}(\theta) \propto \cL_{s, t}(\theta) P_s(\theta)$
        \State Sample $\tilde{\theta}_{s, t} \sim P_{s, t}$, pull arm $A_{s, t} \gets \argmax_{i \in [K]} r(i; \tilde{\theta}_{s, t})$, and observe $Y_{s, t}(A_{s, t})$
      \EndFor
    \EndFor
  \end{algorithmic}
\end{algorithm}

The design of \adats is motivated by \metats \cite{kveton21metathompson}, which also maintains a meta-posterior $Q_s$. The difference is that \metats samples $\tilde{\mu}_s \sim Q_s$ in task $s$ to be optimistic with respect to the unknown $\mu_*$. Then it runs TS with prior $P(\cdot; \tilde{\mu}_s)$. While simple and intuitive, the sampling of $\tilde{\mu}_s$ induces a high variance and leads to a conservative worst-case analysis. We improve \metats by avoiding the sampling step. This leads to tighter and more general regret bounds (\cref{sec:regret bounds}), beyond multi-armed bandits; while the practical performance also improves significantly (\cref{sec:experiments}).

\subsection{Gaussian Bandit}
\label{sec:gaussian bandit}

We start with a $K$-armed Gaussian bandit with mean arm rewards $\theta \in \realset^K$. The reward distribution of arm $i$ is $p_i(\cdot; \theta) = \cN(\cdot; \theta_i, \sigma^2)$, where $\sigma > 0$ is reward noise and $\theta_i$ is the mean reward of arm $i$. A natural conjugate prior for this problem class is $P(\cdot; \mu) = \cN(\cdot; \mu, \Sigma_0)$, where $\Sigma_0 = \diag{(\sigma_{0, i}^2)_{i = 1}^K}$ is known and we learn $\mu \in \realset^K$.

Because the prior is a multivariate Gaussian, \adats can be implemented efficiently with a Gaussian meta-prior $Q(\cdot) = \cN(\cdot; \mu_q, \Sigma_q)$, where $\mu_q = (\mu_{q, i})_{i = 1}^K$ and $\Sigma_q = \diag{(\sigma_{q, i}^2)_{i = 1}^K}$ are known mean parameter vector and covariance matrix, respectively. In this case, the meta-posterior in task $s$ is also a Gaussian $Q_s(\cdot) = \cN(\cdot; \hat{\mu}_s, \hat{\Sigma}_s)$, where $\hat{\mu}_s = (\hat{\mu}_{s, i})_{i = 1}^K$ and $\hat{\Sigma}_s = \diag{(\hat{\sigma}_{s, i}^2)_{i = 1}^K}$ are defined as
\begin{align}
  \hat{\mu}_{s, i}
  = \hat{\sigma}_{s, i}^2 \left(\frac{\mu_{q, i}}{\sigma_{q, i}^2} +
  \sum_{\ell = 1}^{s - 1} \frac{T_{\ell, i}}{T_{\ell, i} \, \sigma_{0, i}^2 + \sigma^2}
  \frac{B_{\ell, i}}{T_{\ell, i}}\right)\,, \quad
  \hat{\sigma}_{s, i}^{-2}
  = \sigma_{q, i}^{-2} +
  \sum_{\ell = 1}^{s - 1} \frac{T_{\ell, i}}{T_{\ell, i} \, \sigma_{0, i}^2 + \sigma^2}\,.
  \label{eq:mab meta-posterior}
\end{align}
Here $T_{\ell, i} = \sum_{t = 1}^n \I{A_{\ell, t} = i}$ is the number of pulls of arm $i$ in task $\ell$ and the total reward from these pulls is $B_{\ell, i} = \sum_{t = 1}^n \I{A_{\ell, t} = i} Y_{\ell, t}(i)$. The above formula has a very nice interpretation. The posterior mean $\hat{\mu}_{s, i}$ of the meta-parameter of arm $i$ is a weighted sum of the noisy estimates of the means of arm $i$ from the past tasks $B_{\ell, i} / T_{\ell, i}$ and the prior. In this sum, each bandit task is essentially a single observation. The weights are proportional to the number of pulls in a task, giving the task with more pulls a higher weight. They vary from $(\sigma_{0, i}^2 + \sigma^2)^{-1}$, when the arm is pulled only once, up to $\sigma_{0, i}^{-2}$. This is the minimum amount of uncertainty that cannot be reduced by more pulls.

The update in \eqref{eq:mab meta-posterior} is by \cref{lem:multi-task bayesian regression} in \cref{sec:appendix}, which we borrow from \citet{kveton21metathompson}. From \cref{prop:posterior}, we have that the uncertainty-adjusted prior for task $s$ is $P_s(\cdot) = \cN(\cdot; \hat{\mu}_s, \hat{\Sigma}_s + \Sigma_0)$.

\subsection{Linear Bandit with Gaussian Rewards}
\label{sec:linear bandit}

Now we generalize \cref{sec:gaussian bandit} and consider a \emph{linear bandit} \cite{dani08stochastic,abbasi-yadkori11improved} with $K$ arms and $d$ dimensions. Let $\cA \subset \realset^d$ be an \emph{action set} such that $\abs{\cA} = K$. We refer to each $a \in \cA$ as an \emph{arm}. Then, with a slight abuse of notation from \cref{sec:setting}, the reward distribution of arm $a$ is $p_a(\cdot; \theta) = \cN(\cdot; a\T \theta, \sigma^2)$, where $\theta \in \realset^d$ is shared by all arms and $\sigma > 0$ is reward noise. A conjugate prior for this problem class is $P(\cdot; \mu) = \cN(\cdot; \mu, \Sigma_0)$, where $\Sigma_0 \in \realset^{d \times d}$ is known and we learn $\mu \in \realset^d$.

As in \cref{sec:gaussian bandit}, \adats can be implemented efficiently with a meta-prior $Q(\cdot) = \cN(\cdot; \mu_q, \Sigma_q)$, where $\mu_q \in \realset^d$ is a known mean parameter vector and $\Sigma_q \in \realset^{d \times d}$ is a known covariance matrix. In this case, $Q_s(\cdot) = \cN(\cdot; \hat{\mu}_s, \hat{\Sigma}_s)$, where
\begin{align*}
  \hat{\mu}_s
  & = \hat{\Sigma}_s \left(\Sigma_q^{-1} \mu_q +
  \sum_{\ell = 1}^{s - 1} \frac{B_\ell}{\sigma^2} - \frac{G_\ell}{\sigma^2}
  \left(\Sigma_0^{-1} + \frac{G_\ell}{\sigma^2}\right)^{-1} \frac{B_\ell}{\sigma^2}\right)\,, \\
  \hat{\Sigma}_s^{-1}
  & = \Sigma_q^{-1} +
  \sum_{\ell = 1}^{s - 1} \frac{G_\ell}{\sigma^2} - \frac{G_\ell}{\sigma^2}
  \left(\Sigma_0^{-1} + \frac{G_\ell}{\sigma^2}\right)^{-1} \frac{G_\ell}{\sigma^2}\,.
\end{align*}
Here $G_\ell = \sum_{t = 1}^n A_{\ell, t} A_{\ell, t}\T$ is the outer product of the feature vectors of the pulled arms in task $\ell$ and $B_\ell = \sum_{t = 1}^n A_{\ell, t} Y_{\ell, t}(A_{\ell, t})$ is their sum weighted by their rewards. The above update follows from \cref{lem:multi-task bayesian regression} in \cref{sec:appendix}, which is due to \citet{kveton21metathompson}. From \cref{prop:posterior}, the uncertainty-adjusted prior for task $s$ is $P_s(\cdot) = \cN(\cdot; \hat{\mu}_s, \hat{\Sigma}_s + \Sigma_0)$.

We note in passing that when $K = d$ and $\cA$ is the standard Euclidean basis of $\realset^d$, the linear bandit reduces to a $K$-armed bandit. Since the covariance matrices are unrestricted here, the formulation in this section also shows how to generalize \cref{sec:gaussian bandit} to arbitrary covariance matrices.

\subsection{Semi-Bandit with Gaussian Rewards}
\label{sec:semi-bandit}

A \emph{stochastic combinatorial semi-bandit} \cite{gai12combinatorial,chen13combinatorial,kveton14matroid,kveton15tight,wen15efficient}, or \emph{semi-bandit} for short, is a $K$-armed bandit where at most $L \leq K$ arms are pulled in each round. After the arms are pulled, the agent observes their individual rewards and its reward is the sum of the individual rewards. Semi-bandits can be used to solve online combinatorial problems, such as learning to route.

We consider a Gaussian reward distribution for each arm, as in \cref{sec:gaussian bandit}. The difference in the semi-bandit formulation is that the \emph{action set} is $\cA \subseteq \Pi_L(K)$, where $\Pi_L(K)$ is the set of all subsets of $[K]$ of size at most $L$. In round $t$ of task $s$, the agents pulls arms $A_{s, t} \in \cA$. The meta-posterior is updated analogously to \cref{sec:gaussian bandit}. The only difference is that $\I{A_{\ell, t} = i}$ becomes $\I{i \in A_{\ell, t}}$.

\subsection{Exponential-Family Bandit with Mixture Priors}
\label{sec:exponential-family bandit}

We consider a general $K$-armed bandit with mean arm rewards $\theta \in \realset^K$. The reward distribution of arm $i$ is any one-dimensional exponential-family distribution parameterized by $\theta_i$. In a Bernoulli bandit, this would be $p_i(\cdot; \theta) = \mathrm{Ber}(\cdot; \theta_i)$. A natural prior for this reward model would be a product of per-arm conjugate priors, such as the product of betas for Bernoulli rewards.

It is challenging to generalize our approach beyond Gaussian models because we require more than the standard notion of conjugacy. Specifically, to apply \adats to an exponentially-family prior, such as the product of betas, we need a computationally tractable prior for that prior. In this case, it does not exist. We circumvent this issue by discretization. More specifically, let $\set{P(\cdot; j)}_{j = 1}^L$ be a set of $L$ potential conjugate priors, where each $P(\cdot; j)$ is a product of one-dimensional exponential-family priors. Then a suitable meta-prior is a vector of initial beliefs into each potential prior. In particular, it is $Q(\cdot) = \mathrm{Cat}(\cdot; w_q)$, where $w_q \in \Delta_{L - 1}$ is the belief and $\Delta_L$ is the $L$-simplex.

In this case, $Q_s(j) = \int_\theta \cL_{s - 1}(\theta) P(\theta; j) \, d \kappa_2(\theta) \, Q_{s - 1}(j)$ in \cref{prop:posterior} has a closed form, since it is a standard conjugate posterior update for a distribution over $\theta$ followed by integrating out $\theta$. In addition, $P_s(\theta) = \sum_{j = 1}^L Q_s(j) P(\theta; j)$ is a mixture of exponential-family priors over $\theta$. This is an instance of latent bandits \cite{hong20latent}. For these problems, Thompson sampling can be implemented exactly and efficiently. We do not analyze this setting because prior-dependent Bayes regret bounds for this problem class do not exist yet.

\section{Regret Bounds}
\label{sec:regret bounds}

We first introduce common notation used in our proofs. The action set $\cA \subseteq \mathbb{R}^d$ is fixed. Recall that a matrix $X \in \R^{d \times d}$ is \emph{positive semi-definite (PSD)} if it is symmetric and its smallest eigenvalue is non-negative. For such $X$, we define $\sigma^2_{\max}(X) = \max_{a \in \cA} a\T X a$. Although $\sigma^2_{\max}(X)$ depends on $\cA$, we suppress this dependence because $\cA$ is fixed. We denote by $\lambda_1(X)$ the maximum eigenvalue of $X$ and by $\lambda_d(X)$ the minimum eigenvalue of $X$.

We also need basic quantities from information theory. For two probability measures $P$ and $Q$ over a common measurable space, we use $D(P || Q)$ to denote the relative entropy of $P$ with respect to $Q$. It is defined as $D(P || Q) = \int \log(\frac{d P}{d Q}) \, d P $, where $d P / d Q$ is the Radon-Nikodym derivative of $P$ with respect to $Q$; and is infinite when $P$ is not absolutely continuous with respect to $Q$. We slightly abuse our notation and let $P(X)$ denote the probability distribution of random variable $X$, $P(X \in \cdot)$. For jointly distributed random variables $X$ and $Y$, we let $P(X \mid Y)$ be the conditional distribution of $X$ given $Y$, $P(X \in \cdot \mid Y)$, which is $Y$-measurable and depends on random $Y$. The \emph{mutual information} between $X$ and $Y$ is $I(X; Y) = D(P(X, Y) || P(X) P(Y))$, where $P(X) P(Y)$ is the distribution of the product of $P(X)$ and $P(Y)$. Intuitively, $I(X; Y)$ measures the amount of information that either $X$ or $Y$ provides about the other variable. For jointly distributed $X$, $Y$, and $Z$, we also need the \emph{conditional mutual information} between $X$ and $Y$ conditioned on $Z$. We define this quantity as $I(X; Y \mid Z) = \mathbb{E}[\hat{I}(X; Y \mid Z)]$, where $\hat{I}(X; Y \mid Z) = D(P(X, Y \mid Z) || P(X \mid Z) P(Y \mid Z))$ is the \emph{random conditional mutual information} between $X$ and $Y$ given $Z$. Note that $\hat{I}(X; Y \mid Z)$ is a function of $Z$. By the chain rule for the random conditional mutual information, $\hat{I}(X; Y_1, Y_2 \mid Z) = \mathbb{E}[\hat{I}(X; Y_1 \mid Y_2, Z) \mid Z] + \hat{I}(X; Y_2 \mid Z)$, where expectation is over $Y_2 \mid Z$. We would get the usual chain rule $I(X; Y_1, Y_2) = I(X; Y_1 \mid Y_2) + I(X; Y_2)$ without $Z$.

\subsection{Generic Regret Bound}
\label{sec:generic}

We start with a generic adaptation of the analysis of \citet{lu2019information} to our setting. In round $t$ of task $s$, we denote the pulled arm by $A_{s, t}$, its observed reward by $Y_{s, t} \sim p_{A_{s, t}}(\cdot; \theta_{s, *})$, and the suboptimality gap by $\Delta_{s, t} = r(A_{s, *}; \theta_{s, *}) - r(A_{s, t}; \theta_{s, *})$. For random variables $X$ and $Y$, we denote by $I_{s, t}(X; Y) = \hat{I}(X; Y \mid H_{1 : s, t})$ the random mutual information between $X$ and $Y$ given history $H_{1 : s, t}$ of all observations from the first $s - 1$ tasks and the first $t - 1$ rounds of task $s$. Similarly, for random variables $X$, $Y$, and $Z$, we denote by $I_{s, t}(X; Y \mid Z) = \mathbb{E}[\hat{I}(X; Y \mid Z, H_{1 : s, t}) \mid H_{1 : s, t}]$ the random mutual information between $X$ and $Y$ conditioned on $Z$, given history $H_{1 : s, t}$. It is helpful to think of $I_{s, t}$ as the conditional mutual information of $X \mid H_{1 : s, t}$, $Y \mid H_{1 : s, t}$, and $Z \mid H_{1 : s, t}$.

Let $\Gamma_{s, t}$ and $\epsilon_{s, t}$ be potentially history-dependent non-negative random variables such that 
\begin{align}
  \mathbb{E}_{s, t}[\Delta_{s, t}]
  \leq \Gamma_{s, t} \sqrt{I_{s, t}(\theta_{s, *}; A_{s, t}, Y_{s, t})} + \epsilon_{s, t}
  \label{a:info}
\end{align} 
holds almost surely. We want to keep both $\Gamma_{s, t}$ and $\epsilon_{s, t}$ \say{small}. The following lemma provides a bound on the total regret over $n$ rounds in each of $m$ tasks in terms of $\Gamma_{s, t}$ and $\epsilon_{s, t}$.

\begin{restatable}[]{lemma}{decomp}
\label{lemm:decomp} Suppose that \eqref{a:info} holds for all $s \in [m]$ and $t \in [n]$, for some $\Gamma_{s, t}, \epsilon_{s, t} \geq 0$. In addition, let $(\Gamma_s)_{s \in [m]}$ and $\Gamma$ be non-negative constants such that $\Gamma_{s, t} \leq \Gamma_s \leq \Gamma$ holds for all $s \in [m]$ and $t \in [n]$ almost surely. Then
\begin{align*}
  R(m, n)
  \leq \Gamma \sqrt{m n I(\mu_*; H_{1 : m})} +
  \sum_{s = 1}^m \Gamma_s \sqrt{n I(\theta_{s, *}; H_s \mid \mu_*, H_{1 : s - 1})} +
  \sum_{s = 1}^m \sum_{t = 1}^n \E{\epsilon_{s, t}}\,.
\end{align*}
\end{restatable}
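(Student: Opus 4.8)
The plan is to start from the per-round information inequality \eqref{a:info}, aggregate it over rounds and then over tasks, and split the resulting per-task mutual information into a part that measures learning about the shared meta-parameter $\mu_*$ and a part that measures the residual learning about $\theta_{s,*}$ once $\mu_*$ is known. Concretely, by the tower rule and the definition of the regret, $\E{R(n,m)} = \sum_{s=1}^m\sum_{t=1}^n \E{\mathbb{E}_{s,t}[\Delta_{s,t}]}$; plugging in \eqref{a:info} and using $\Gamma_{s,t}\le\Gamma_s$ almost surely (with $\Gamma_s$ a constant and $\sqrt{I_{s,t}}\ge 0$) gives
\[
  \E{R(n,m)} \le \sum_{s=1}^m \Gamma_s \sum_{t=1}^n \E{\sqrt{I_{s,t}(\theta_{s,*};A_{s,t},Y_{s,t})}} + \sum_{s=1}^m\sum_{t=1}^n \E{\epsilon_{s,t}}.
\]
The last double sum is already the third term of the claim, so all the work lies in the information term. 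First I would collapse the round index: by Cauchy--Schwarz followed by Jensen's inequality (concavity of $\sqrt{\cdot}$) and the identity $\E{\hat I(\,\cdot\,;\cdot\mid H_{1:s,t})}=I(\,\cdot\,;\cdot\mid H_{1:s,t})$,
\[
  \sum_{t=1}^n \E{\sqrt{I_{s,t}(\theta_{s,*};A_{s,t},Y_{s,t})}} \le \sqrt{n}\,\sqrt{\textstyle\sum_{t=1}^n I(\theta_{s,*};A_{s,t},Y_{s,t}\mid H_{1:s,t})}.
\]
Since $H_{1:s,t}=H_{1:s-1}\oplus H_{s,t}$ and $H_s=(A_{s,t},Y_{s,t}(A_{s,t}))_{t=1}^n$, the chain rule for conditional mutual information telescopes the inner sum to $I(\theta_{s,*};H_s\mid H_{1:s-1})$.

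The crux is to decompose this per-task quantity. Applying the chain rule to the pair $(\mu_*,\theta_{s,*})$ against $H_s$ in the two possible orders and equating yields
\[
  I(\theta_{s,*};H_s\mid H_{1:s-1}) + I(\mu_*;H_s\mid \theta_{s,*},H_{1:s-1}) = I(\mu_*;H_s\mid H_{1:s-1}) + I(\theta_{s,*};H_s\mid \mu_*,H_{1:s-1}).
\]
The key structural fact, read off the graphical model in \cref{fig:environment}, is that given $\theta_{s,*}$ and the past $H_{1:s-1}$ the within-task interaction $H_s$ depends on $\mu_*$ only through $\theta_{s,*}$: rewards are drawn from $p_i(\cdot;\theta_{s,*})$ and each arm $A_{s,t}$ is a function of observed history and internal randomness. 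Hence $I(\mu_*;H_s\mid \theta_{s,*},H_{1:s-1})=0$, so
\[
  I(\theta_{s,*};H_s\mid H_{1:s-1}) = I(\theta_{s,*};H_s\mid \mu_*,H_{1:s-1}) + I(\mu_*;H_s\mid H_{1:s-1}).
\]
Using $\sqrt{a+b}\le\sqrt a+\sqrt b$ then separates the $\Gamma_s\sqrt{n}$-weighted contribution into exactly the second term of the claim plus a residual $\sum_{s=1}^m \Gamma_s\sqrt{n\,I(\mu_*;H_s\mid H_{1:s-1})}$.

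Finally I would aggregate this residual across tasks. Bounding $\Gamma_s\le\Gamma$ and applying Cauchy--Schwarz over $s$ gives $\sum_{s=1}^m \sqrt{I(\mu_*;H_s\mid H_{1:s-1})} \le \sqrt{m}\,\sqrt{\sum_{s=1}^m I(\mu_*;H_s\mid H_{1:s-1})}$, and the cross-task chain rule telescopes $\sum_{s=1}^m I(\mu_*;H_s\mid H_{1:s-1}) = I(\mu_*;H_{1:m})$, producing $\Gamma\sqrt{mn\,I(\mu_*;H_{1:m})}$, which is the first term. I expect the main obstacle to be justifying the conditional-independence step $I(\mu_*;H_s\mid \theta_{s,*},H_{1:s-1})=0$: it is precisely what allows the shared-prior information to be aggregated across tasks (via a single $I(\mu_*;H_{1:m})$) rather than paid once per task, and it needs care because the sampled actions do depend on $\mu_*$ through the meta-posterior $Q_s$ — the resolution being that this dependence is entirely mediated by the conditioned quantities $H_{1:s-1}$, $H_{s,t}$, and $\theta_{s,*}$, so no additional information about $\mu_*$ leaks once $\theta_{s,*}$ is fixed. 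The rest is bookkeeping: applying Cauchy--Schwarz at the right granularity (once over rounds, once over tasks) and the chain rule in the right order so that the two square-root terms land with the advertised $\sqrt{mn}$ and $\sqrt{n}$ scalings.
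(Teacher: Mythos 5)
Your proof is correct and arrives at exactly the stated bound, but it routes the argument differently from the paper. The paper performs the meta/task split at the level of the \emph{random, per-round} mutual information: it bounds $I_{s,t}(\theta_{s,*};A_{s,t},Y_{s,t}) \le I_{s,t}(\theta_{s,*},\mu_*;A_{s,t},Y_{s,t})$ (monotonicity of mutual information under enlarging one argument) and then applies the chain rule to obtain $I_{s,t}(\mu_*;A_{s,t},Y_{s,t})+I_{s,t}(\theta_{s,*};A_{s,t},Y_{s,t}\mid\mu_*)$; only afterwards does it aggregate the two resulting streams via $\sqrt{a+b}\le\sqrt{a}+\sqrt{b}$, Jensen, Cauchy--Schwarz, and the chain rule --- the $\mu_*$ stream over all $mn$ rounds at once, the conditional stream within each task. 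You instead aggregate the \emph{undecomposed} per-round terms within a task first, telescoping them to $I(\theta_{s,*};H_s\mid H_{1:s-1})$, and only then split this task-level quantity, using the two-orderings chain-rule identity together with the structural fact $I(\mu_*;H_s\mid\theta_{s,*},H_{1:s-1})=0$, before aggregating the $\mu_*$ residual across tasks. Both routes produce the same three terms with the same constants. What your route buys is an exact task-level decomposition with a clean interpretation (information about $\theta_{s,*}$ equals information flowing through $\mu_*$ plus the residual given $\mu_*$); what the paper's route buys is that it never needs any conditional-independence argument about the interaction protocol --- only the generic inequality $I(X;Z)\le I(X,Y;Z)$, which holds for any policy whatsoever. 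In fact, the obstacle you flag as the crux is dispensable even in your own proof: since conditional mutual information is non-negative, the identity $I(\theta_{s,*};H_s\mid H_{1:s-1}) = I(\mu_*;H_s\mid H_{1:s-1}) + I(\theta_{s,*};H_s\mid\mu_*,H_{1:s-1}) - I(\mu_*;H_s\mid\theta_{s,*},H_{1:s-1})$ already yields the inequality you need by simply dropping the last (non-negative) term. Your independence claim is nevertheless true, for exactly the reason you give: the pulled arms depend on $\mu_*$ only through the conditioned history, and the rewards only through $\theta_{s,*}$, so establishing it merely upgrades the inequality to an equality at that step.
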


The first term above is the price for learning $\mu_*$, while the second is the price for learning all $\theta_{s, *}$ when $\mu_*$ is known. Accordingly, the price for learning $\mu_*$ is negligible when the mutual information terms grow slowly with $m$ and $n$. Specifically, we show shortly in linear bandits that $\Gamma_{s, t}$ and $\epsilon_{s, t}$ can be set so that the last term of the bound is comparable to the rest, while $\Gamma_{s, t}$ grows slowly with $m$ and $n$. At the same time, $I(\mu_*; H_{1 : m})$ and $I(\theta_{s, *}; H_s \mid \mu_*, H_{1 : s - 1})$ are only logarithmic in $m$ and $n$. Thus the price for learning $\mu_*$ is $\tilde{O}(\sqrt{m n})$ while that for learning all $\theta_{s, *}$ is $\tilde{O}(m \sqrt{n})$. Now we are ready to prove \cref{lemm:decomp}.

\begin{proof}
First, we use the chain rule of random conditional mutual information and derive
\begin{align*}
  I_{s, t}(\theta_{s, *}; A_{s, t}, Y_{s, t})
  \leq I_{s, t}(\theta_{s, *}, \mu_*; A_{s, t}, Y_{s, t})
  = I_{s, t}(\mu_*; A_{s, t}, Y_{s, t}) + I_{s, t}(\theta_{s, *}; A_{s, t}, Y_{s, t} \mid \mu_*)\,.
\end{align*} 
Now we take the square root of both sides, apply $\sqrt{a + b} \leq \sqrt{a} + \sqrt{b}$ to the right-hand side, and multiply both sides by $\Gamma_{s, t}$. This yields
\begin{align}
  \Gamma_{s, t} \sqrt{I_{s, t}(\theta_{s, *}; A_{s, t}, Y_{s, t})} 
  \leq \Gamma_{s, t} \sqrt{I_{s, t}(\mu_*; A_{s, t}, Y_{s, t})} +
  \Gamma_{s, t} \sqrt{I_{s, t}(\theta_{s, *}; A_{s, t}, Y_{s, t} \mid \mu_*)}\,.
  \label{eq:basicub}
\end{align}
We start with the second term in \eqref{eq:basicub}. Fix task $s$. From $\Gamma_{s, t} \le \Gamma_s$, followed by the Cauchy-Schwarz and Jensen's inequalities, we have
\begin{align*}
  \E{\sum_{t = 1}^n \Gamma_{s, t}
  \sqrt{I_{s, t}(\theta_{s, *}; A_{s, t}, Y_{s, t} \mid \mu_*)}}
  & \leq \Gamma_s 
  \sqrt{n \E{\sum_{t = 1}^n I_{s, t}(\theta_{s, *}; A_{s, t}, Y_{s, t} \mid \mu_*)}}\,.
\end{align*}
Thanks to $\E{I_{s, t}(\theta_{s, *}; A_{s, t}, Y_{s, t} \mid \mu_*)} = I(\theta_{s, *}; A_{s, t}, Y_{s, t} \mid \mu_*, H_{1 : s, t})$ and the chain rule of mutual information, we have $\E{\sum_{t = 1}^n I_{s, t}(\theta_{s, *}; A_{s, t}, Y_{s, t} \mid \mu_*)} = I(\theta_{s, *}; H_s \mid \mu_*, H_{1 : s - 1})$.

Now we consider the first term in \eqref{eq:basicub}. We bound $\Gamma_{s, t}$ using $\Gamma$, then apply the Cauchy-Schwarz and Jensen's inequalities, and obtain $\E{\sum_{s = 1}^m \sum_{t = 1}^n \Gamma_{s, t} \sqrt{I_{s, t}(\mu_*; A_{s, t}, Y_{s, t})}} \leq \Gamma \sqrt{m n I(\mu_*; H_{1 : m})}$; where we used the chain rule to get
\begin{align*}
  \E{\sum_{s = 1}^m \sum_{t = 1}^n I_{s, t}(\mu_*; A_{s, t}, Y_{s, t})}
  = \sum_{s = 1}^m \sum_{t = 1}^n I(\mu_*; A_{s, t}, Y_{s, t} \mid H_{1 : s, t})
  = I(\mu_*; H_{1 : m})\,.
\end{align*} 
This completes the proof.
\end{proof}

\subsection{Linear Bandit with Gaussian Rewards}
\label{sec:mutual linear}

Now we derive regret bounds for linear bandits (\cref{sec:linear bandit}). Without loss of generality, we make an assumption that the action set is bounded.

\begin{assumption}
\label{a:bounded} The arms are vectors in a unit ball, $\max_{a \in \cA} \|a\|_2 \leq 1$.
\end{assumption}

Our analysis is for \adats with a small amount of \emph{forced exploration} in each task. This guarantees that our estimate of $\mu_*$ improves uniformly in all directions after each task $s$. Therefore, we assume that the action set is diverse enough to explore in all directions.

\begin{assumption}
\label{a:forced explore} There exist arms $\set{a_i}_{i = 1}^d \subseteq \cA$ such that $\lambda_d(\sum_{i = 1}^d a_i a_i\T) \geq \eta$ for some $\eta > 0$.
\end{assumption}

This assumption is without loss of generality. In particular, if such a set does not exist, the action set $\cA$ can be projected into a lower dimensional space where the assumption holds. \adats is modified as follows. In each task, we initially pulls the arms $\set{a_i}_{i = 1}^d$ to explore all directions.

We start by showing that \eqref{a:info} holds for suitably \say{small} $\Gamma_{s, t}$ and $\epsilon_{s, t}$. In \adats, in round $t$ of task $s$, the posterior distribution of $\theta_{s, *}$ is $\cN(\hat{\mu}_{s, t}, \hat{\Sigma}_{s, t})$, where
\begin{align*}
  \hat{\mu}_{s, t}
  = \hat{\Sigma}_{s, t} \left((\Sigma_0 + \hat{\Sigma}_s)^{-1} \hat{\mu}_s +
  \sum_{\ell = 1}^{t - 1} \frac{A_{s, \ell} Y_{s, \ell}}{\sigma^2}\right)\,, \quad
  \hat{\Sigma}_{s, t}^{-1}
  = (\Sigma_0 + \hat{\Sigma}_s)^{-1} +
  \sum_{\ell = 1}^{t - 1} \frac{A_{s, \ell} A_{s, \ell}^\top}{\sigma^2}\,,
\end{align*} 
and $\hat{\mu}_s$ and $\hat{\Sigma}_s$ are defined in \cref{sec:linear bandit}. 
Then, from the properties of Gaussian distributions and that \adats samples from the posterior, we get a bound on $\Gamma_{s, t}$ and $\epsilon_{s, t}$ as a function of a tunable parameter $\delta\in (0, 1]$.

\begin{restatable}[]{lemma}{concentration}
\label{lemm:concentration} For all tasks $s \in [m]$, rounds $t \in [n]$, and any $\delta \in (0, 1]$, \eqref{a:info} holds almost surely for
\begin{align*}
  \Gamma_{s, t}
  = 4 \sqrt{\frac{\sigma^2_{\max}(\hat{\Sigma}_{s, t})}
  {\log(1 + \sigma^2_{\max}(\hat{\Sigma}_{s, t}) / \sigma^2)}
  \log(4 |\cA| / \delta)}\,, \quad
  \epsilon_{s, t}
  = \sqrt{2 \delta \sigma^2_{\max}(\hat{\Sigma}_{s, t})} +
  2 \cE_{s, t} \mathbb{E}_{s, t}[\|\theta_{s, *}\|_2]\,,
\end{align*}
where $\cE_{s, t}$ is the indicator of forced exploration in round $t$ of task $s$. Moreover, for each task $s$, the following history-independent bound holds almost surely,
\begin{align}
  \sigma^2_{\max}(\hat{\Sigma}_{s, t})
  \leq \lambda_1(\Sigma_0)
  \left(1 + \frac{\lambda_1(\Sigma_q)
  \left(1 + \frac{\sigma^2}{\eta \lambda_1(\Sigma_0)}\right)}
  {\lambda_1(\Sigma_0) + \sigma^2 / \eta + s \lambda_1(\Sigma_q)}\right)\,.
  \label{eq:smsqbound}
\end{align}
\end{restatable}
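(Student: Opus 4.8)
The plan is to prove the two claims separately: the information inequality \eqref{a:info} follows from an information-theoretic Thompson sampling argument in the spirit of \citet{lu2019information}, while the deterministic bound \eqref{eq:smsqbound} on $\sigma^2_{\max}(\hat{\Sigma}_{s,t})$ is a linear-algebra computation that exploits forced exploration. Throughout I write $w_{s,t}(a) = \sqrt{a^\top \hat{\Sigma}_{s,t} a}$ for the posterior standard deviation of the reward of arm $a$, so that $w_{s,t}(a)^2 \leq \sigma^2_{\max}(\hat{\Sigma}_{s,t})$ for all $a \in \mathcal{A}$, and I fix the confidence radius $\beta = \sqrt{2 \log(4|\mathcal{A}|/\delta)}$.

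For \eqref{a:info}, first consider a non-exploration round, so $\mathcal{E}_{s,t} = 0$ and $A_{s,t}$ is greedy with respect to a fresh posterior sample. Since the posterior of $\theta_{s,*}$ is $\mathcal{N}(\hat{\mu}_{s,t}, \hat{\Sigma}_{s,t})$, a Gaussian tail bound and a union bound over the $|\mathcal{A}|$ arms give that, with conditional probability at least $1 - \delta/2$, $|a^\top(\theta_{s,*} - \hat{\mu}_{s,t})| \leq \beta\, w_{s,t}(a)$ simultaneously for all $a$; call this the good event. I decompose $\mathbb{E}_{s,t}[\Delta_{s,t}]$ through the confidence value $U_{s,t}(a) = a^\top \hat{\mu}_{s,t} + \beta\, w_{s,t}(a)$ and use the defining property of Thompson sampling, namely that $A_{s,t}$ and $A_{s,*}$ share the same conditional law given $H_{1:s,t}$, so that $\mathbb{E}_{s,t}[U_{s,t}(A_{s,*})] = \mathbb{E}_{s,t}[U_{s,t}(A_{s,t})]$. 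On the good event this collapses $\mathbb{E}_{s,t}[\Delta_{s,t}]$ to at most $2\beta\, \mathbb{E}_{s,t}[w_{s,t}(A_{s,t})]$, and on its complement the two confidence-violation terms are controlled by Cauchy--Schwarz: each has conditional second moment at most $\sigma^2_{\max}(\hat{\Sigma}_{s,t})$ and the complement has probability at most $\delta/2$, producing the summand $\sqrt{2\delta\, \sigma^2_{\max}(\hat{\Sigma}_{s,t})}$ in $\epsilon_{s,t}$. To convert width into information I use that, given the action, the Gaussian channel $Y_{s,t} = A_{s,t}^\top \theta_{s,*} + \text{noise}$ gives $I_{s,t}(\theta_{s,*}; A_{s,t}, Y_{s,t}) = \tfrac12 \mathbb{E}_{s,t}[\log(1 + w_{s,t}(A_{s,t})^2/\sigma^2)]$, together with the pointwise bound $x \leq \frac{\sigma^2_{\max}(\hat{\Sigma}_{s,t})}{\log(1 + \sigma^2_{\max}(\hat{\Sigma}_{s,t})/\sigma^2)} \log(1 + x/\sigma^2)$, valid for $0 \leq x \leq \sigma^2_{\max}(\hat{\Sigma}_{s,t})$ because $u \mapsto u/\log(1+u)$ is increasing. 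Applying this with $x = w_{s,t}(A_{s,t})^2$ and Cauchy--Schwarz bounds $\mathbb{E}_{s,t}[w_{s,t}(A_{s,t})]$ by $\sqrt{2\sigma^2_{\max}(\hat{\Sigma}_{s,t})/\log(1 + \sigma^2_{\max}(\hat{\Sigma}_{s,t})/\sigma^2)}\,\sqrt{I_{s,t}(\theta_{s,*};A_{s,t},Y_{s,t})}$, and multiplying by $2\beta$ reproduces exactly the stated $\Gamma_{s,t}$. Finally, in a forced-exploration round ($\mathcal{E}_{s,t} = 1$) the Thompson identity is unavailable, so I use the crude bound $\mathbb{E}_{s,t}[\Delta_{s,t}] \leq \mathbb{E}_{s,t}[\|A_{s,*} - A_{s,t}\|_2 \|\theta_{s,*}\|_2] \leq 2\,\mathbb{E}_{s,t}[\|\theta_{s,*}\|_2]$ by Cauchy--Schwarz and \cref{a:bounded}; since both terms on the right of \eqref{a:info} are nonnegative, the inequality still holds with the extra summand $2\mathcal{E}_{s,t}\mathbb{E}_{s,t}[\|\theta_{s,*}\|_2]$ in $\epsilon_{s,t}$.

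For \eqref{eq:smsqbound}, I first reduce to $\hat{\Sigma}_s$: since $\hat{\Sigma}_{s,t}^{-1} = (\Sigma_0 + \hat{\Sigma}_s)^{-1} + \sum_{\ell=1}^{t-1} A_{s,\ell} A_{s,\ell}^\top/\sigma^2 \succeq (\Sigma_0 + \hat{\Sigma}_s)^{-1}$, monotonicity of the matrix inverse gives $\hat{\Sigma}_{s,t} \preceq \Sigma_0 + \hat{\Sigma}_s$, whence $\sigma^2_{\max}(\hat{\Sigma}_{s,t}) \leq \lambda_1(\Sigma_0) + \lambda_1(\hat{\Sigma}_s)$ by \cref{a:bounded}. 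Writing $q = \lambda_1(\Sigma_q)$ and $c = \lambda_1(\Sigma_0) + \sigma^2/\eta$, it therefore suffices to show $\lambda_1(\hat{\Sigma}_s) \leq qc/(c + \sqrt{s}\, q)$, which rearranges to the displayed expression. Using the formula for $\hat{\Sigma}_s^{-1}$ from \cref{sec:linear bandit} and the Woodbury identity $M - M(\Sigma_0^{-1} + M)^{-1} M = (\Sigma_0 + M^{-1})^{-1}$ with $M = G_\ell/\sigma^2$, each task $\ell$ with invertible $G_\ell$ contributes a precision term $(\Sigma_0 + \sigma^2 G_\ell^{-1})^{-1}$, and every per-task term is positive semi-definite. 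Dropping the non-exploration tasks by positivity and keeping only $\Sigma_q^{-1} \succeq q^{-1} I$ together with the forced-exploration tasks, \cref{a:forced explore} gives $G_\ell \succeq \eta I$, hence $\Sigma_0 + \sigma^2 G_\ell^{-1} \preceq c\, I$ and each such term is $\succeq c^{-1} I$. Because the schedule places an exploration task at every index of the form $i^2+1$, of order $\sqrt{s}$ of the first $s-1$ tasks are of this kind, so $\hat{\Sigma}_s^{-1} \succeq (q^{-1} + \sqrt{s}/c)\, I$; inverting yields $\lambda_1(\hat{\Sigma}_s) \leq qc/(c + \sqrt{s}\, q)$, completing the bound.

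I expect the main obstacle to be the first claim, specifically pinning down the exact constant in $\Gamma_{s,t}$ and the matching $\sqrt{2\delta\,\sigma^2_{\max}(\hat{\Sigma}_{s,t})}$ failure term: one must choose the decomposition of $\Delta_{s,t}$ so that the Thompson distributional identity removes the dependence on the sampled parameter, verify that the conditional second moment of each confidence-violation term is genuinely at most $\sigma^2_{\max}(\hat{\Sigma}_{s,t})$, and combine the confidence radius $\beta$ with the variance-to-information inequality without losing the right power of $\log(4|\mathcal{A}|/\delta)$. The algebraic bound \eqref{eq:smsqbound} is comparatively routine once the Woodbury simplification is in place; its only delicate point is the counting of forced-exploration tasks, where the rounding in $\lfloor\sqrt{s-2}\rfloor + 1$ must be reconciled with the clean $\sqrt{s}$ that appears in the denominator of the stated bound.
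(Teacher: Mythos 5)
Your proposal is correct and follows essentially the same route as the paper's proof: the same Lu et al.-style argument (Gaussian-tail confidence sets with a union bound over arms, the Thompson sampling distributional identity, Cauchy--Schwarz on the confidence-violation event, and the variance-to-information conversion via monotonicity of $u/\log(1+u)$, which indeed reproduces the constant $4$ in $\Gamma_{s,t}$), plus the same Woodbury/minimum-eigenvalue argument for \eqref{eq:smsqbound} that drops non-exploration tasks, bounds each forced-exploration task's precision contribution by $\bigl(\lambda_1(\Sigma_0)+\sigma^2/\eta\bigr)^{-1}$, and counts order-$\sqrt{s}$ such tasks. The only cosmetic differences are that you bound the regret of forced-exploration rounds entirely by $2\mathbb{E}_{s,t}[\|\theta_{s,*}\|_2]$ whereas the paper retains the information term and adds that quantity as a correction, and you use PSD ordering where the paper invokes Weyl's inequalities---both immaterial; even the $\lfloor\sqrt{s-2}\rfloor+1$ versus $\sqrt{s}$ rounding you flag is glossed over identically in the paper.
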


\cref{lemm:concentration} is proved in \cref{sec:proofoflemconc}.  By using the bound in \eqref{eq:smsqbound}, we get that $\Gamma_{s, t} = O(\sqrt{\log(1 / \delta)})$ and $\epsilon_{s, t} = O(\sqrt{\delta})$. \cref{lemm:concentration} differs from \citet{lu2019information} in two aspects. First, it considers uncertainty in the estimate of $\mu_*$ along with $\theta_{s, *}$. Second, it does not require that the rewards are bounded. Our next lemma bounds the mutual information terms in \cref{lemm:decomp}, by exploiting the hierarchical structure of our linear bandit model (\cref{fig:environment}).

\begin{restatable}[]{lemma}{mutual}
\label{lemm:mutual} For any $H_{1 : s, t}$-adapted action sequence and any $s \in [m]$, we have
\begin{align*}
  I(\theta_{s, *}; H_s \mid \mu_*, H_{1 : s - 1})
  \leq \tfrac{d}{2}
  \log\left(1 + \tfrac{\lambda_1(\Sigma_0) n}{\sigma^2}\right)\,, \quad
  I(\mu_*; H_{1 : m})
  \leq \tfrac{d}{2}
  \log\left(1 + \tfrac{\lambda_1(\Sigma_q) m}{\lambda_d(\Sigma_0) + \sigma^2 / n}\right)\,.
\end{align*} 
\end{restatable}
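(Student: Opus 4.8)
The plan is to reduce each mutual-information term to an expected log-determinant of a prior-to-posterior covariance ratio, exploiting the fact that in a Gaussian linear model the posterior covariance is a deterministic function of the (random, history-adapted) action sequence and does not depend on the realized rewards. This makes the computation insensitive to the adaptivity of \adats, which is the feature one must be careful about.

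For the first bound, conditioning on $\mu_*$ and $H_{1:s-1}$ makes $\theta_{s,*}\sim\mathcal{N}(\mu_*,\Sigma_0)$ and independent of the earlier tasks, so the quantity is exactly the within-task information gain of a linear--Gaussian bandit whose relevant prior covariance is $\Sigma_0$ (the true conditional prior given $\mu_*$, not the algorithm's marginal prior $\Sigma_0+\hat{\Sigma}_s$). Writing $G_s=\sum_{t=1}^n A_{s,t}A_{s,t}^\top$, the conditional posterior covariance of $\theta_{s,*}$ after the $n$ rounds is $\Lambda_s$ with $\Lambda_s^{-1}=\Sigma_0^{-1}+\sigma^{-2}G_s$, a function of the actions alone. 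Since prior and posterior are Gaussian, $I(\theta_{s,*};H_s\mid\mu_*,H_{1:s-1})=\tfrac12\mathbb{E}[\log\det(\Sigma_0\Lambda_s^{-1})]=\tfrac12\mathbb{E}[\log\det(I+\sigma^{-2}\Sigma_0^{1/2}G_s\Sigma_0^{1/2})]$; I would justify this either by the differential-entropy identity $h(\theta)-\mathbb{E}[h(\theta\mid H_s)]$ or, to make adaptivity transparent, by telescoping the per-round conditional informations $\tfrac12\log(1+\sigma^{-2}A_{s,t}^\top\Lambda_{s,t}A_{s,t})$ through the matrix-determinant lemma. By \cref{a:bounded}, $\lambda_1(G_s)\le\mathrm{tr}(G_s)=\sum_t\|A_{s,t}\|_2^2\le n$, so $\lambda_1(\sigma^{-2}\Sigma_0^{1/2}G_s\Sigma_0^{1/2})\le\sigma^{-2}\lambda_1(\Sigma_0)\lambda_1(G_s)\le n\lambda_1(\Sigma_0)/\sigma^2$, and since the matrix is $d\times d$, $\log\det(I+\cdot)\le d\log(1+n\lambda_1(\Sigma_0)/\sigma^2)$, which is the claim.

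For the second bound the same principle applies at the meta level. By \cref{prop:posterior} and the update of \cref{sec:linear bandit}, the meta-posterior of $\mu_*$ given $H_{1:m}$ is Gaussian with precision $\hat{\Sigma}_{m+1}^{-1}=\Sigma_q^{-1}+\sum_{\ell=1}^m M_\ell$, where the per-task precision increment $M_\ell=\sigma^{-2}G_\ell-\sigma^{-2}G_\ell(\Sigma_0^{-1}+\sigma^{-2}G_\ell)^{-1}\sigma^{-2}G_\ell$ again depends only on the actions. Hence $I(\mu_*;H_{1:m})=\tfrac12\mathbb{E}[\log\det(\Sigma_q\hat{\Sigma}_{m+1}^{-1})]=\tfrac12\mathbb{E}[\log\det(I+\Sigma_q\sum_{\ell=1}^m M_\ell)]$. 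The key algebraic step is the identity $M_\ell=(\Sigma_0+\sigma^2 G_\ell^{-1})^{-1}$ (verified from $M_\ell^{-1}=\Sigma_0+\sigma^2 G_\ell^{-1}$), from which $M_\ell^{-1}\succeq\lambda_d(\Sigma_0)I+(\sigma^2/\lambda_1(G_\ell))I\succeq(\lambda_d(\Sigma_0)+\sigma^2/n)I$, using $\lambda_1(G_\ell)\le n$ once more, so $M_\ell\preceq(\lambda_d(\Sigma_0)+\sigma^2/n)^{-1}I$. Summing gives $\sum_\ell M_\ell\preceq m(\lambda_d(\Sigma_0)+\sigma^2/n)^{-1}I$, and bounding the $d$ eigenvalues of $\Sigma_q\sum_\ell M_\ell$ by $m\lambda_1(\Sigma_q)/(\lambda_d(\Sigma_0)+\sigma^2/n)$ yields the stated logarithm.

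The main obstacle I anticipate is concentrated in two places. First, I must establish the mutual-information-equals-expected-log-det-ratio identities rigorously despite the actions being history-adapted; I resolve this by observing that $\Lambda_s$ and $\hat{\Sigma}_{m+1}$ are deterministic functions of the action sequence, so only the actions, never the rewards, enter the determinant, and the entropy/telescoping argument goes through. Second, the PSD bound $M_\ell\preceq(\lambda_d(\Sigma_0)+\sigma^2/n)^{-1}I$ must hold uniformly in the realized Gram matrices, including the rank-deficient case where $G_\ell^{-1}$ does not literally exist; there I would justify the Woodbury manipulation by a regularization $G_\ell\mapsto G_\ell+\epsilon I$ followed by $\epsilon\to0$, noting the bound only improves under this perturbation. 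Everything else reduces to $\lambda_1(G_\ell)\le\mathrm{tr}(G_\ell)\le n$ and the elementary inequality $\log\det(I+A)\le d\log(1+\lambda_1(A))$ for PSD $A$.
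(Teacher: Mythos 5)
Your proposal is correct and follows essentially the same route as the paper: both express each mutual-information term as an expected log-determinant ratio of prior to posterior covariance (valid because the Gaussian posterior covariances are deterministic functions of the actions alone), then bound eigenvalues via $\mathrm{tr}(G_\ell) \le n$, Weyl-type inequalities, and the Woodbury identity $M_\ell = \left(\Sigma_0 + \sigma^2 G_\ell^{-1}\right)^{-1}$. Your explicit $\epsilon$-regularization for rank-deficient $G_\ell$ is a minor refinement of a step the paper glosses over (its proof manipulates $G_\ell^{-1}$ as if invertible), but it does not constitute a different approach.
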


Now we are ready to prove our regret bound for the linear bandit. We take the mutual-information bounds from \cref{lemm:mutual}, and the bounds on $\Gamma_{s, t}$ and $\epsilon_{s, t}$ from \cref{lemm:concentration}, and plug them into \cref{lemm:decomp}. Specifically, $\sigma^2_{\max}(\hat{\Sigma}_{s, t}) \leq \lambda_1(\Sigma_q) + \lambda_1(\Sigma_0)$ holds for any $s$ and $t$ by \cref{lemm:concentration}, which yields $\Gamma$ in \cref{lemm:decomp}. On the other hand, $\Gamma_s$ is bounded using the upper bound in \eqref{eq:smsqbound}, which relies on forced exploration. Our regret bound is stated below. The terms $c_1$ to $c_4$ are at most polylogarithmic in $d$, $m$, and $n$; and thus small. The term $c_2$ arises due to summing up $\Gamma_s$ over all tasks $s$.

\begin{restatable}[\textbf{Linear bandit}]{theorem}{linearinfo}
\label{thm:linearinfo} The regret of \adats is bounded for any $\delta \in (0, 1]$ as
\begin{align*}
  R(m, n)
  \leq \underbrace{c_1 \sqrt{d m n}}_{\textrm{Learning of $\mu_*$}} {} +
  (m + c_2) \underbrace{R_\delta(n; \mu_*)}_{\textrm{Per-task regret}} {} +
  \underbrace{c_3 d m}_{\textrm{Forced exploration}}\,,
\end{align*}
where
\begin{align*}
  c_1
  = \sqrt{8 \tfrac{\lambda_1(\Sigma_q) + \lambda_1(\Sigma_0)}
  {\log\left(1 + \tfrac{\lambda_1(\Sigma_q) + \lambda_1(\Sigma_0)}{\sigma^2}\right)}
  \log(4 |\cA| / \delta)
  \log\left(1 + \tfrac{\lambda_1(\Sigma_q) m}{\lambda_d(\Sigma_0) + \sigma^2 / n}\right)}\,,
\end{align*}
$c_2 = \left(1 + \tfrac{\sigma^2}{\eta \lambda_1(\Sigma_0)}\right) \log m$, and $c_3 = 2 \sqrt{\|\mu_q\|_2^2 + \mathrm{tr}(\Sigma_q + \Sigma_0)}$. The \emph{per-task regret} is bounded as $R_\delta(n; \mu_*) \leq c_4 \sqrt{d n} + \sqrt{2 \delta \lambda_1(\Sigma_0)} n$, where
\begin{align*}
  c_4
  = \sqrt{8 \tfrac{\lambda_1(\Sigma_0)}
  {\log\left(1 + \tfrac{\lambda_1(\Sigma_0)}{\sigma^2}\right)}
  \log(4 |\cA| / \delta)
  \log\left(1 + \tfrac{\lambda_1(\Sigma_0) n}{\sigma^2}\right)}\,.
\end{align*}
\end{restatable}

The bound in \cref{thm:linearinfo} is sublinear in $n$ for $\delta = 1 / n^2$. It has three terms. The first term is the regret due to learning $\mu_*$ over all tasks; and it is $\tilde{O}(\sqrt{d m n})$. The second term is the regret for acting in $m$ tasks under the assumption that $\mu_*$ is known; and it is $\tilde{O}(m \sqrt{d n})$. The last term is the regret for forced exploration; and it is $\tilde{O}(d m)$. Overall, the extra regret due to unknown $\mu_*$ is $\tilde{O}(\sqrt{d m n} + d m)$ and is much lower than $\tilde{O}(m \sqrt{d n})$ when $d \ll n$. Therefore, we call \adats a \emph{no-regret algorithm} for linear bandits. Our bound also reflects the fact that the regret decreases as both priors become more informative, $\lambda_1(\Sigma_0) \to 0$ and $\lambda_1(\Sigma_q) \to 0$.

A frequentist regret bound for linear TS with finitely-many arms is $\tilde{O}(d \sqrt{n})$ \citep{agrawal13thompson}. When applied to $m$ tasks, it would be $\tilde{O}(d m \sqrt{n})$ and is worse by a factor of $\sqrt{d}$ than our regret bound. To show that our bound reflects the structure of our problem, we compare \adats to two variants of linear TS that are applied independently to each task. The first variant knows $\mu_*$ and thus has more information. Its regret can bounded by setting $c_1 = c_2 = c_3 = 0$ in \cref{thm:linearinfo} and is lower than that of \adats. The second variant knows that $\mu_* \sim \cN(\mu_q, \Sigma_q)$ but does not model that the tasks share $\mu_*$. This is analogous to assuming that $\theta_{s, *} \sim \cN(\mu_q, \Sigma_q + \Sigma_0)$. The regret of this approach can be bounded by setting $c_1 = c_2 = c_3 = 0$ in \cref{thm:linearinfo} and replacing $\lambda_1(\Sigma_0)$ in $c_4$ by $\lambda_1(\Sigma_q + \Sigma_0)$. Since the task regret increases linearly with $m$ and $\lambda_1(\Sigma_q + \Sigma_0) > \lambda_1(\Sigma_0)$, this approach would ultimately have a higher regret than \adats as the number of tasks $m$ increases.

\subsection{Semi-Bandit with Gaussian Rewards}
\label{sec:mutual semi-bandit}

In semi-bandits (\cref{sec:semi-bandit}), we use the independence of arms to decompose the per-round regret differently. Similarly to \cref{sec:mutual linear}, we analyze \adats with forced exploration, where each arm is initially pulled at least once. This is always possible in at most $K$ rounds, since there exists at least one $a \in \cA$ that contains any given arm.

Let $\Gamma_{s, t}(k)$ and $\epsilon_{s, t}(k)$ be non-negative history-dependent constants, for each arm $k \in [K]$, were we use $(k)$ to refer to arm-specific quantities. Then an analogous bound to \eqref{a:info} is
\begin{align*}
  \mathbb{E}_{s, t}[\Delta_{s, t}]
  \leq \sum_{k \in [K]} \mathbb{P}_{s, t}(k \in A_{s, t})
  \left(\Gamma_{s, t}(k) \sqrt{I_{s, t}(\theta_{s, *}(k); k, Y_{s, t}(k))} +
  \epsilon_{s, t}(k)\right)\,.
\end{align*}
The term $(k, Y_{s, t}(k))$ is a tuple of a pulled arm $k$ and its observation in round $t$ of task $s$. For any $k$, from the chain rule of mutual information, we have
\begin{align*}
  I_{s, t}(\theta_{s, *}(k); k, Y_{s, t}(k))
  \leq I_{s, t}(\mu_*(k); k, Y_{s, t}(k)) +
  I_{s, t}(\theta_{s, *}(k); k, Y_{s, t}(k) \mid \mu_*(k))\,.
\end{align*} 
Next we combine the mutual-information terms across all rounds and tasks, as in \cref{lemm:decomp}, and bound corresponding $\Gamma_{s, t}(k)$ and $\epsilon_{s, t}(k)$ independently of $m$ and $n$. Due to forced exploration, the estimate of $\mu_*(k)$ improves for all arms $k$ as more tasks are completed, and $\Gamma_{s, t}(k)$ decreases with $s$. This leads to \cref{thm:semibanditinfo}, which is proved in \cref{sec:appendix-semi-bandit}.

\begin{restatable}[\textbf{Semi-bandit}]{theorem}{semibanditinfo} \label{thm:semibanditinfo}
The regret of \adats is bounded for any $\delta \in (0, 1]$ as
\begin{align*}
  R(m, n)
  \leq \underbrace{c_1 \sqrt{K L m n}}_{\textrm{Learning of $\mu_*$}} {} +
  (m + c_2) \underbrace{R_\delta(n; \mu_*)}_{\textrm{Per-task regret}} {} +
  \underbrace{c_3 K^{3 / 2} m }_{\textrm{Forced exploration}} {} +
  c_4 \sigma \sqrt{2 \delta m} n\,,
\end{align*}
where
\begin{align*}
  c_1
  & = 4 \sqrt{\tfrac{1}{K} \sum_{k \in [K]}
  \tfrac{\sigma_{q, k}^2 + \sigma_{0, k}^2}
  {\log\left(1 + \tfrac{\sigma_{q, k}^2 + \sigma_{0, k}^2}{\sigma^2}\right)}
  \log(4 K / \delta)
  \log\left(1 + \tfrac{\sigma_{q, k}^2 m}{\sigma_{0, k}^2 + \sigma^2 / n}\right)}\,, \\
  c_2
  & = \left(1 + \max_{k \in [K]: \, \sigma_{0, k} > 0}
  \tfrac{\sigma^2}{\sigma_{0, k}^2}\right) \log m\,, \quad
  c_3
  = 2 \sqrt{\sum_{k \in [K]} (\mu_{q, k}^2 + \sigma_{q, k}^2 + \sigma_{0, k}^2)}\,, \\
  c_4
  & = \sqrt{\tfrac{1}{K} \sum_{k \in [K]: \, \sigma_{0, k} = 0}
  \log\left(1 + \tfrac{\sigma_{q, k}^2 m}{\sigma^2}\right)}\,.
\end{align*}
The \emph{per-task regret} is bounded as $R_\delta(n; \mu_*) \leq c_5 \sqrt{K L n} + \sqrt{2 \delta \tfrac{1}{K} \sum_{k \in [K]} \sigma_{0, k}^2} n$, where
\begin{align*}
  c_5
  = 4 \sqrt{\tfrac{1}{K} \sum_{\substack{k \in [K]: \, \sigma_{0, k} > 0}}
  \tfrac{\sigma_{0, k}^2}{\log\left(1 + \tfrac{\sigma_{0, k}^2}{\sigma^2}\right)}
  \log(4 K / \delta)
  \log\left(1 + \tfrac{\sigma_{0, k}^2 n}{\sigma^2}\right)}\,.
\end{align*}
The prior widths $\sigma_{q, k}$ and $\sigma_{0, k}$ are defined as in \cref{sec:gaussian bandit}.
\end{restatable}

The bound in \cref{thm:semibanditinfo} is sublinear in $n$ for $\delta = 1 / n^2$. Its form resembles \cref{thm:linearinfo}. Specifically, the regret for learning $\mu_*$ is $\tilde{O}(\sqrt{K L m n})$ and for forced exploration is $\tilde{O}(K^{3 / 2} m)$. Both of these are much lower than the regret for learning to act in $m$ tasks when $\mu_*$ is known, $\tilde{O}(m \sqrt{K L n})$, for $K \ll L n$. Therefore, \adats is also a \emph{no-regret algorithm} for semi-bandits.

\cref{thm:semibanditinfo} improves upon a naive application of \cref{thm:linearinfo} to semi-bandits. This is because all prior width constants are averages, as opposing to the maximum over arms in \cref{thm:linearinfo}. To the best of our knowledge, such per-arm prior dependence has not been captured in semi-bandits by any prior work. To illustrate the difference, consider a problem where $\sigma_{0, k} > 0$ for only $K' \ll K$ arms. This means that only $K'$ arms are uncertain in the tasks. Then the bound in \cref{thm:semibanditinfo} is $\tilde{O}(m \sqrt{K' L n})$, while the bound in \cref{thm:linearinfo} would be $\tilde{O}(m \sqrt{K L n})$. For the arms $k$ where $\sigma_{0, k} = 0$, the regret over all tasks is sublinear in $m$.

\section{Experiments}
\label{sec:experiments}

\begin{figure*}[t]
  \centering
  \includegraphics[width=5.4in]{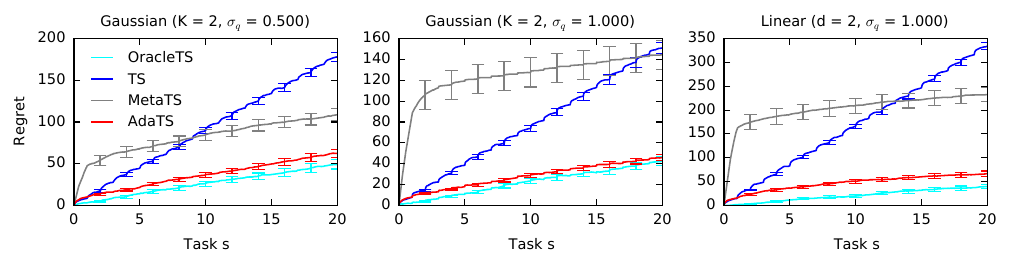}
  \vspace{-0.1in}
  \caption{Comparison of \adats to three baselines on three bandit problems.}
  \label{fig:synthetic}
\end{figure*}

We experiment with two synthetic problems. In both problems, the number of tasks is $m = 20$ and each task has $n = 200$ rounds. The first problem is a Gaussian bandit (\cref{sec:gaussian bandit}) with $K = 2$ arms. The meta-prior is $\cN(\mathbf{0}, \Sigma_q)$ with $\Sigma_q = \sigma_q^2 I_K$, the prior covariance is $\Sigma_0 = \sigma_0^2 I_K$, and the reward noise is $\sigma = 1$. We experiment with $\sigma_q \geq 0.5$ and $\sigma_0 = 0.1$. Since $\sigma_q \gg \sigma_0$, the entries of $\theta_{s, *}$ are likely to have the same order as in $\mu_*$. Therefore, a clever algorithm that learns $\mu_*$ could have very low regret. The second problem is a linear bandit (\cref{sec:linear bandit}) in $d = 2$ dimensions with $K = 5 d$ arms. The action set is sampled from a unit sphere. The meta-prior, prior, and noise are the same as in the Gaussian bandit. All results are averaged over $100$ runs.

\adats is compared to three baselines. The first is idealized TS with the true prior $\cN(\mu_*, \Sigma_0)$ and we call it \oraclets. \oraclets shows the minimum attainable regret. The second is agnostic TS, which ignores the structure of the problem. We call it \ts and implement it with prior $\cN(\mathbf{0}, \Sigma_q + \Sigma_0)$, since $\theta_{s, *}$ can be viewed as a sample from this prior when the structure is ignored (\cref{sec:mutual linear}). The third baseline is \metats of \citet{kveton21metathompson}. All methods are evaluated by their cumulative regret up to task $s$, which we plot as it accumulates round-by-round within each task (\cref{fig:synthetic}). The regret of the algorithms that do not learn $\mu_*$ (\oraclets and \ts) is obviously linear in $s$, as they solve $s$ similar tasks with the same policy (\cref{sec:setting}). A lower slope indicates a better policy. As no algorithm can outperform \oraclets, no regret can grow sublinearly in $s$.

\begin{wrapfigure}{r}{1.5in}
  \centering
  \vspace{-0.15in}
  \includegraphics[width=1.5in]{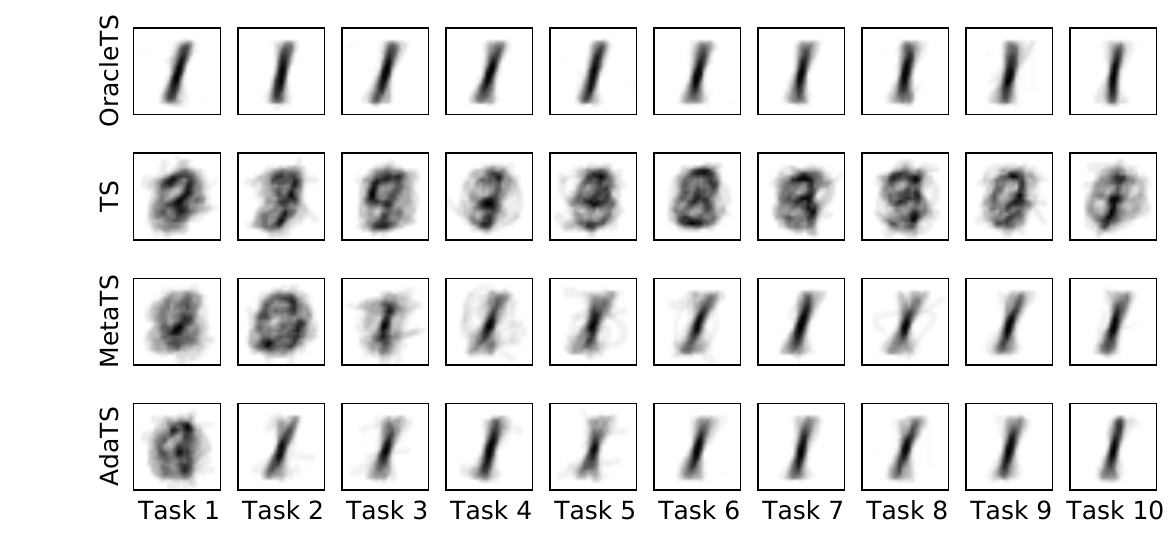}
  \caption{Meta-learning of a highly rewarding digit $1$.}
  \vspace{-0.15in}
  \label{fig:preview}
\end{wrapfigure}

Our results are reported in \cref{fig:synthetic}. We start with a Gaussian bandit with $\sigma_q = 0.5$. This setting is identical to Figure 1b of \citet{kveton21metathompson}. We observe that \adats outperforms \ts, which does not learn $\mu_*$, and is comparable to \oraclets, which knows $\mu_*$. Its regret is about $30\%$ lower than that of \metats. Now we increase the meta-prior width to $\sigma_q = 1$. In this setting, meta-parameter sampling in \metats leads to high biases in earlier tasks. This leads to a major increase in regret, while \adats performs comparably to \oraclets. We end with a linear bandit with $\sigma_q = 1$. In this experiment, \adats outperforms \metats again and has more than three times lower regret.

\cref{sec:supplementary experiments} contains more experiments. In \cref{sec:synthetic experiments}, we experiment with more values of $K$ and $d$, and show the robustness of \adats to missspecified meta-prior $Q$. In \cref{sec:classification experiments}, we apply \adats to bandit classification problems. In \cref{fig:preview}, we show results for one of these problems, meta-learning a highly rewarding digit $1$ in the bandit setting. For each method and task $s$, we show the average digit corresponding to the pulled arms in round $1$ of task $s$. \adats learns a good meta-parameter $\mu_*$ almost instantly, since its average digit in task $2$ already resembles digit $1$.

\section{Related Work}
\label{sec:related work}

Two closest related works are \citet{bastani19meta} and \citet{kveton21metathompson}. \citet{bastani19meta} proposed Thompson sampling that learns the prior from a sequence of pricing experiments. The algorithm is tailored to pricing and learns through forced exploration using conservative TS. Therefore, it is conservative. \citet{bastani19meta} also did not derive prior-dependent bounds.

Our studied setting is identical to \citet{kveton21metathompson}. However, the design of \adats is very different from \metats. \metats samples the meta-parameter $\mu_s$ at the beginning of each task $s$ and uses it to compute the posterior of the task parameter $\theta_{s, *}$. Since $\mu_s$ is fixed within the task, \metats does not have a correct posterior of $\theta_{s, *}$ given the history. \adats marginalizes out the uncertainty in the meta-parameter $\mu_*$ and thus has a correct posterior of $\theta_{s, *}$ within the task. This seemingly minor difference leads to an approach that is more principled, comparably general, has a fully-Bayesian analysis beyond multi-armed bandits, and may have several-fold lower regret in practice. While it is possible that the analysis of \metats could be extended to linear bandits, the price for meta-learning would likely remain $\tilde{O}(\sqrt{m} n^2)$. This cost arises due sampling the meta-parameter $\mu_s$ at the beginning of each task $s$. The price of meta-learning in our work is mere $\tilde{O}(\sqrt{m n})$, a huge improvement.

\adats is a meta-learning algorithm \citep{thrun96explanationbased,thrun98lifelong,baxter98theoretical,baxter00model,finn17modelagnostic,finn18probabilistic}. Meta-learning has a long history in multi-armed bandits. Some of the first works are \citet{azar13sequential} and \citet{gentile14online}, who proposed UCB algorithms for multi-task learning. \citet{deshmukh17multitask} studied multi-task learning in contextual bandits. \citet{cella20metalearning} proposed a UCB algorithm that meta-learns the mean parameter vector in a linear bandit, which is akin to learning $\mu_*$ in \cref{sec:linear bandit}. Another recent work is \citet{yang20provable}, who studied regret minimization with multiple parallel bandit instances, with the goal of learning their shared subspace. All of these works are frequentist, analyze a stronger notion of regret, and often lead to conservative algorithm designs. In contrast, we leverage the fundamentals of Bayesian reasoning to design a general-purpose algorithm that performs well when run as analyzed.

Several recent papers approached the problem of learning a bandit algorithm using policy gradients \citep{duan16rl2,boutilier20differentiable,kveton20differentiable,yang20differentiable,min20policy}, including learning Thompson sampling \citep{kveton20differentiable,min20policy}. These works focus on offline optimization against a known bandit-instance distribution and have no convergence guarantees in general \citep{boutilier20differentiable,kveton20differentiable}. Tuning of bandit algorithms is known to reduce regret \citep{vermorel05multiarmed,maes12metalearning,kuleshov14algorithms,hsu19empirical}. Typically it is ad-hoc and we believe that meta-learning is a proper way of framing this problem.

\section{Conclusions}
\label{sec:conlusions}

We propose \adats, a fully-Bayesian algorithm for meta-learning in bandits that adapts to a sequence of bandit tasks that it interacts with. \adats attains low regret by adapting the uncertainty in both the meta and per-task parameters. We analyze the Bayes regret of \adats using information-theory tools that isolate the effect of learning the meta-parameter from that of learning the per-task parameters. For linear bandits and semi-bandits, we derive novel prior-dependent regret bounds that show that the price for learning the meta-parameter is low. Our experiments underscore the generality of \adats, good out-of-the-box performance, and robustness to meta-prior misspecification.

We leave open several questions of interest. For instance, except for \cref{sec:exponential-family bandit}, our algorithms are for Gaussian rewards and priors, and so are their regret analyses. An extension beyond Gaussians would be of both practical and theoretical value. Our current work also relies heavily on a particular parameterization of tasks, where the mean $\theta_{s, *}$ is unknown but the covariance $\Sigma_0$ is known. It is not immediately obvious if a computationally-efficient extension to unknown $\Sigma_0$ exists.

\clearpage

\bibliographystyle{plainnat}
\bibliography{References}

\begin{thebibliography}{44}
\providecommand{\natexlab}[1]{#1}
\providecommand{\url}[1]{\texttt{#1}}
\expandafter\ifx\csname urlstyle\endcsname\relax
  \providecommand{\doi}[1]{doi: #1}\else
  \providecommand{\doi}{doi: \begingroup \urlstyle{rm}\Url}\fi

\bibitem[Abbasi-Yadkori et~al.(2011)Abbasi-Yadkori, Pal, and
  Szepesvari]{abbasi-yadkori11improved}
Yasin Abbasi-Yadkori, David Pal, and Csaba Szepesvari.
\newblock Improved algorithms for linear stochastic bandits.
\newblock In \emph{Advances in Neural Information Processing Systems 24}, pages
  2312--2320, 2011.

\bibitem[Agrawal and Goyal(2012)]{agrawal12analysis}
Shipra Agrawal and Navin Goyal.
\newblock Analysis of {Thompson} sampling for the multi-armed bandit problem.
\newblock In \emph{Proceeding of the 25th Annual Conference on Learning
  Theory}, pages 39.1--39.26, 2012.

\bibitem[Agrawal and Goyal(2013)]{agrawal13thompson}
Shipra Agrawal and Navin Goyal.
\newblock Thompson sampling for contextual bandits with linear payoffs.
\newblock In \emph{Proceedings of the 30th International Conference on Machine
  Learning}, pages 127--135, 2013.

\bibitem[Auer et~al.(2002)Auer, Cesa-Bianchi, and Fischer]{auer02finitetime}
Peter Auer, Nicolo Cesa-Bianchi, and Paul Fischer.
\newblock Finite-time analysis of the multiarmed bandit problem.
\newblock \emph{Machine Learning}, 47:\penalty0 235--256, 2002.

\bibitem[Azar et~al.(2013)Azar, Lazaric, and Brunskill]{azar13sequential}
Mohammad~Gheshlaghi Azar, Alessandro Lazaric, and Emma Brunskill.
\newblock Sequential transfer in multi-armed bandit with finite set of models.
\newblock In \emph{Advances in Neural Information Processing Systems 26}, pages
  2220--2228, 2013.

\bibitem[Bastani et~al.(2019)Bastani, Simchi-Levi, and Zhu]{bastani19meta}
Hamsa Bastani, David Simchi-Levi, and Ruihao Zhu.
\newblock Meta dynamic pricing: Transfer learning across experiments.
\newblock \emph{CoRR}, abs/1902.10918, 2019.
\newblock URL \url{https://arxiv.org/abs/1902.10918}.

\bibitem[Baxter(1998)]{baxter98theoretical}
Jonathan Baxter.
\newblock Theoretical models of learning to learn.
\newblock In \emph{Learning to Learn}, pages 71--94. Springer, 1998.

\bibitem[Baxter(2000)]{baxter00model}
Jonathan Baxter.
\newblock A model of inductive bias learning.
\newblock \emph{Journal of Artificial Intelligence Research}, 12:\penalty0
  149--198, 2000.

\bibitem[Boutilier et~al.(2020)Boutilier, Hsu, Kveton, Mladenov, Szepesvari,
  and Zaheer]{boutilier20differentiable}
Craig Boutilier, Chih-Wei Hsu, Branislav Kveton, Martin Mladenov, Csaba
  Szepesvari, and Manzil Zaheer.
\newblock Differentiable meta-learning of bandit policies.
\newblock In \emph{Advances in Neural Information Processing Systems 33}, 2020.

\bibitem[Cella et~al.(2020)Cella, Lazaric, and Pontil]{cella20metalearning}
Leonardo Cella, Alessandro Lazaric, and Massimiliano Pontil.
\newblock Meta-learning with stochastic linear bandits.
\newblock In \emph{Proceedings of the 37th International Conference on Machine
  Learning}, 2020.

\bibitem[Chapelle and Li(2012)]{chapelle11empirical}
Olivier Chapelle and Lihong Li.
\newblock An empirical evaluation of {Thompson} sampling.
\newblock In \emph{Advances in Neural Information Processing Systems 24}, pages
  2249--2257, 2012.

\bibitem[Chen et~al.(2013)Chen, Wang, and Yuan]{chen13combinatorial}
Wei Chen, Yajun Wang, and Yang Yuan.
\newblock Combinatorial multi-armed bandit: General framework, results and
  applications.
\newblock In \emph{Proceedings of the 30th International Conference on Machine
  Learning}, pages 151--159, 2013.

\bibitem[Chen et~al.(2014)Chen, Wang, and Yuan]{chen14combinatorial}
Wei Chen, Yajun Wang, and Yang Yuan.
\newblock Combinatorial multi-armed bandit and its extension to
  probabilistically triggered arms.
\newblock \emph{CoRR}, abs/1407.8339, 2014.
\newblock URL \url{http://arxiv.org/abs/1407.8339}.

\bibitem[Dani et~al.(2008)Dani, Hayes, and Kakade]{dani08stochastic}
Varsha Dani, Thomas Hayes, and Sham Kakade.
\newblock Stochastic linear optimization under bandit feedback.
\newblock In \emph{Proceedings of the 21st Annual Conference on Learning
  Theory}, pages 355--366, 2008.

\bibitem[Deshmukh et~al.(2017)Deshmukh, Dogan, and Scott]{deshmukh17multitask}
Aniket~Anand Deshmukh, Urun Dogan, and Clayton Scott.
\newblock Multi-task learning for contextual bandits.
\newblock In \emph{Advances in Neural Information Processing Systems 30}, pages
  4848--4856, 2017.

\bibitem[Duan et~al.(2016)Duan, Schulman, Chen, Bartlett, Sutskever, and
  Abbeel]{duan16rl2}
Yan Duan, John Schulman, Xi~Chen, Peter Bartlett, Ilya Sutskever, and Pieter
  Abbeel.
\newblock {RL}$^2$: Fast reinforcement learning via slow reinforcement
  learning.
\newblock \emph{CoRR}, abs/1611.02779, 2016.
\newblock URL \url{http://arxiv.org/abs/1611.02779}.

\bibitem[Finn et~al.(2017)Finn, Abbeel, and Levine]{finn17modelagnostic}
Chelsea Finn, Pieter Abbeel, and Sergey Levine.
\newblock Model-agnostic meta-learning for fast adaptation of deep networks.
\newblock In \emph{Proceedings of the 34th International Conference on Machine
  Learning}, pages 1126--1135, 2017.

\bibitem[Finn et~al.(2018)Finn, Xu, and Levine]{finn18probabilistic}
Chelsea Finn, Kelvin Xu, and Sergey Levine.
\newblock Probabilistic model-agnostic meta-learning.
\newblock In \emph{Advances in Neural Information Processing Systems 31}, pages
  9537--9548, 2018.

\bibitem[Gai et~al.(2012)Gai, Krishnamachari, and Jain]{gai12combinatorial}
Yi~Gai, Bhaskar Krishnamachari, and Rahul Jain.
\newblock Combinatorial network optimization with unknown variables:
  Multi-armed bandits with linear rewards and individual observations.
\newblock \emph{IEEE/ACM Transactions on Networking}, 20\penalty0 (5):\penalty0
  1466--1478, 2012.

\bibitem[Gelman et~al.(2013)Gelman, Carlin, Stern, Dunson, Vehtari, and
  Rubin]{gelman13bayesian}
Andrew Gelman, John Carlin, Hal Stern, David Dunson, Aki Vehtari, and Donald
  Rubin.
\newblock \emph{Bayesian Data Analysis}.
\newblock Chapman \& Hall, 2013.

\bibitem[Gentile et~al.(2014)Gentile, Li, and Zappella]{gentile14online}
Claudio Gentile, Shuai Li, and Giovanni Zappella.
\newblock Online clustering of bandits.
\newblock In \emph{Proceedings of the 31st International Conference on Machine
  Learning}, pages 757--765, 2014.

\bibitem[Hong et~al.(2020)Hong, Kveton, Zaheer, Chow, Ahmed, and
  Boutilier]{hong20latent}
Joey Hong, Branislav Kveton, Manzil Zaheer, Yinlam Chow, Amr Ahmed, and Craig
  Boutilier.
\newblock Latent bandits revisited.
\newblock In \emph{Advances in Neural Information Processing Systems 33}, 2020.

\bibitem[Hsu et~al.(2019)Hsu, Kveton, Meshi, Mladenov, and
  Szepesvari]{hsu19empirical}
Chih-Wei Hsu, Branislav Kveton, Ofer Meshi, Martin Mladenov, and Csaba
  Szepesvari.
\newblock Empirical {Bayes} regret minimization.
\newblock \emph{CoRR}, abs/1904.02664, 2019.
\newblock URL \url{http://arxiv.org/abs/1904.02664}.

\bibitem[Kuleshov and Precup(2014)]{kuleshov14algorithms}
Volodymyr Kuleshov and Doina Precup.
\newblock Algorithms for multi-armed bandit problems.
\newblock \emph{CoRR}, abs/1402.6028, 2014.
\newblock URL \url{http://arxiv.org/abs/1402.6028}.

\bibitem[Kveton et~al.(2014)Kveton, Wen, Ashkan, Eydgahi, and
  Eriksson]{kveton14matroid}
Branislav Kveton, Zheng Wen, Azin Ashkan, Hoda Eydgahi, and Brian Eriksson.
\newblock Matroid bandits: Fast combinatorial optimization with learning.
\newblock In \emph{Proceedings of the 30th Conference on Uncertainty in
  Artificial Intelligence}, pages 420--429, 2014.

\bibitem[Kveton et~al.(2015)Kveton, Wen, Ashkan, and Szepesvari]{kveton15tight}
Branislav Kveton, Zheng Wen, Azin Ashkan, and Csaba Szepesvari.
\newblock Tight regret bounds for stochastic combinatorial semi-bandits.
\newblock In \emph{Proceedings of the 18th International Conference on
  Artificial Intelligence and Statistics}, 2015.

\bibitem[Kveton et~al.(2020)Kveton, Mladenov, Hsu, Zaheer, Szepesvari, and
  Boutilier]{kveton20differentiable}
Branislav Kveton, Martin Mladenov, Chih-Wei Hsu, Manzil Zaheer, Csaba
  Szepesvari, and Craig Boutilier.
\newblock Differentiable meta-learning in contextual bandits.
\newblock \emph{CoRR}, abs/2006.05094, 2020.
\newblock URL \url{http://arxiv.org/abs/2006.05094}.

\bibitem[Kveton et~al.(2021)Kveton, Konobeev, Zaheer, Hsu, Mladenov, Boutilier,
  and Szepesvari]{kveton21metathompson}
Branislav Kveton, Mikhail Konobeev, Manzil Zaheer, Chih-Wei Hsu, Martin
  Mladenov, Craig Boutilier, and Csaba Szepesvari.
\newblock Meta-{Thompson} sampling.
\newblock In \emph{Proceedings of the 38th International Conference on Machine
  Learning}, 2021.

\bibitem[Lai and Robbins(1985)]{lai85asymptotically}
T.~L. Lai and Herbert Robbins.
\newblock Asymptotically efficient adaptive allocation rules.
\newblock \emph{Advances in Applied Mathematics}, 6\penalty0 (1):\penalty0
  4--22, 1985.

\bibitem[Lake et~al.(2015)Lake, Salakhutdinov, and Tenenbaum]{lake15human}
Brenden~M Lake, Ruslan Salakhutdinov, and Joshua~B Tenenbaum.
\newblock Human-level concept learning through probabilistic program induction.
\newblock \emph{Science}, 350\penalty0 (6266):\penalty0 1332--1338, 2015.

\bibitem[Lattimore and Szepesvari(2019)]{lattimore19bandit}
Tor Lattimore and Csaba Szepesvari.
\newblock \emph{Bandit Algorithms}.
\newblock Cambridge University Press, 2019.

\bibitem[LeCun et~al.(2010)LeCun, Cortes, and Burges]{lecun10mnist}
Yann LeCun, Corinna Cortes, and CJ~Burges.
\newblock Mnist handwritten digit database.
\newblock \emph{ATT Labs [Online]. Available:
  http://yann.lecun.com/exdb/mnist}, 2, 2010.

\bibitem[Lu and Van~Roy(2019)]{lu2019information}
Xiuyuan Lu and Benjamin Van~Roy.
\newblock Information-theoretic confidence bounds for reinforcement learning.
\newblock In \emph{Advances in Neural Information Processing Systems},
  volume~32, 2019.

\bibitem[Maes et~al.(2012)Maes, Wehenkel, and Ernst]{maes12metalearning}
Francis Maes, Louis Wehenkel, and Damien Ernst.
\newblock Meta-learning of exploration/exploitation strategies: The multi-armed
  bandit case.
\newblock In \emph{Proceedings of the 4th International Conference on Agents
  and Artificial Intelligence}, pages 100--115, 2012.

\bibitem[Min et~al.(2020)Min, Moallemi, and Russo]{min20policy}
Seungki Min, Ciamac Moallemi, and Daniel Russo.
\newblock Policy gradient optimization of {Thompson} sampling policies.
\newblock \emph{CoRR}, abs/2006.16507, 2020.
\newblock URL \url{http://arxiv.org/abs/2006.16507}.

\bibitem[Russo and {Van Roy}(2014)]{russo14learning}
Daniel Russo and Benjamin {Van Roy}.
\newblock Learning to optimize via posterior sampling.
\newblock \emph{Mathematics of Operations Research}, 39\penalty0 (4):\penalty0
  1221--1243, 2014.

\bibitem[Russo et~al.(2018)Russo, {Van Roy}, Kazerouni, Osband, and
  Wen]{russo18tutorial}
Daniel Russo, Benjamin {Van Roy}, Abbas Kazerouni, Ian Osband, and Zheng Wen.
\newblock A tutorial on {Thompson} sampling.
\newblock \emph{Foundations and Trends in Machine Learning}, 11\penalty0
  (1):\penalty0 1--96, 2018.

\bibitem[Thompson(1933)]{thompson33likelihood}
William~R. Thompson.
\newblock On the likelihood that one unknown probability exceeds another in
  view of the evidence of two samples.
\newblock \emph{Biometrika}, 25\penalty0 (3-4):\penalty0 285--294, 1933.

\bibitem[Thrun(1996)]{thrun96explanationbased}
Sebastian Thrun.
\newblock \emph{Explanation-Based Neural Network Learning - A Lifelong Learning
  Approach}.
\newblock PhD thesis, University of Bonn, 1996.

\bibitem[Thrun(1998)]{thrun98lifelong}
Sebastian Thrun.
\newblock Lifelong learning algorithms.
\newblock In \emph{Learning to Learn}, pages 181--209. Springer, 1998.

\bibitem[Vermorel and Mohri(2005)]{vermorel05multiarmed}
Joannes Vermorel and Mehryar Mohri.
\newblock Multi-armed bandit algorithms and empirical evaluation.
\newblock In \emph{Proceedings of the 16th European Conference on Machine
  Learning}, pages 437--448, 2005.

\bibitem[Wen et~al.(2015)Wen, Kveton, and Ashkan]{wen15efficient}
Zheng Wen, Branislav Kveton, and Azin Ashkan.
\newblock Efficient learning in large-scale combinatorial semi-bandits.
\newblock In \emph{Proceedings of the 32nd International Conference on Machine
  Learning}, 2015.

\bibitem[Yang et~al.(2020)Yang, Hu, Lee, and Du]{yang20provable}
Jiaqi Yang, Wei Hu, Jason Lee, and Simon Du.
\newblock Provable benefits of representation learning in linear bandits.
\newblock \emph{CoRR}, abs/2010.06531, 2020.
\newblock URL \url{http://arxiv.org/abs/2010.06531}.

\bibitem[Yang and Toni(2020)]{yang20differentiable}
Kaige Yang and Laura Toni.
\newblock Differentiable linear bandit algorithm.
\newblock \emph{CoRR}, abs/2006.03000, 2020.
\newblock URL \url{http://arxiv.org/abs/2006.03000}.

\end{thebibliography}

\clearpage
\onecolumn
\appendix

\section{Algorithm Details}
\label{sec:appendix}

Our terminology is summarized below:

\begin{tabular}{ll}
  $\theta_{s, *}$ &
  Bandit instance parameter in task $s$, generated as $\theta_{s, *} \sim P(\cdot; \mu_*)$ \\
  $P(\cdot; \mu_*)$ &
  Task prior, a distribution over bandit instance parameter $\theta_{s, *}$ \\
  $\mu_*$ &
  Meta-parameter, a parameter of the task distribution \\
  $Q$ &
  Meta-prior, a distribution over the meta-parameter $\mu_*$ \\
  $P_s$ &
  Uncertainty-adjusted prior in task $s$, a distribution over $\theta_{s, *}$ conditioned on $H_{1 : s - 1}$ \\
  $Q_s$ &
  Meta-posterior in task $s$, a distribution over $\mu_*$ conditioned on $H_{1 : s - 1}$ \\
  $Y_{s, t}$ &
  Stochastic rewards of all arms in round $t$ of task $s$ \\
  $A_{s, t}$ &
  Pulled arm in round $t$ of task $s$ 
\end{tabular}

We continue with two lemmas, which are used in the algorithmic part of the paper (\cref{sec:algorithm}).

\posterior*
\begin{proof}
To simplify presentation, our proof is under the assumption that $\theta_{s, *}$ and $\mu_*$ take on countably-many values. A more general measure-theory treatment, where we would maintain measures over $\theta_{s, *}$ and $\mu_*$, would follow the same line of reasoning; and essentially replace all probabilities with densities. A good discussion of this topic is in Section 34 of \citet{lattimore19bandit}.

The following convention is used in the proof. The values of random variables that we marginalize out, such as $\theta_{s, *}$ and $\mu_*$, are explicitly assigned. For fixed variables, such as the history $H_{1 : s - 1}$, we also treat $H_{1 : s - 1}$ as the actual value assigned to $H_{1 : s - 1}$.

We start with the posterior distribution of $\theta_{s, *}$ in task $s$, which can be expressed as
\begin{align*}
  \condprob{\theta_{s, *} = \theta}{H_{1 : s - 1}}
  & = \sum_\mu \condprob{\theta_{s, *} = \theta, \mu_* = \mu}{H_{1 : s - 1}} \\
  & = \sum_\mu \condprob{\theta_{s, *} = \theta}{\mu_* = \mu}
  \condprob{\mu_* = \mu}{H_{1 : s - 1}}\,.
\end{align*}
The second equality holds because $\theta_{s, *}$ is independent of history $H_{1 : s - 1}$ given $\mu_*$. Now note that $\condprob{\mu_* = \mu}{H_{1 : s - 1}}$ is the meta-posterior in task $s$. It can be rewritten as
\begin{align*}
  \condprob{\mu_* = \mu}{H_{1 : s - 1}}
  & = \frac{\condprob{\mu_* = \mu}{H_{1 : s - 1}}}{\condprob{\mu_* = \mu}{H_{1 : s - 2}}}
  \condprob{\mu_* = \mu}{H_{1 : s - 2}} \\
  & = \frac{\condprob{H_{s - 1}}{H_{1 : s - 2}, \, \mu_* = \mu}}
  {\condprob{H_{s - 1}}{H_{1 : s - 2}}} \condprob{\mu_* = \mu}{H_{1 : s - 2}} \\
  & \propto \underbrace{\condprob{H_{s - 1}}{H_{1 : s - 2}, \, \mu_* = \mu}}_{f_1(\mu)}
  \condprob{\mu_* = \mu}{H_{1 : s - 2}}\,,
\end{align*}
where $\condprob{\mu_* = \mu}{H_{1 : s - 2}}$ is the meta-posterior in task $s - 1$. The last step follows from the fact that $\condprob{H_{s - 1}}{H_{1 : s - 2}}$ is constant in $\mu$. Now we focus on $f_1(\mu)$ above and rewrite it as
\begin{align*}
  f_1(\mu)
  & = \sum_\theta \condprob{H_{s - 1}, \, \theta_{s - 1, *} = \theta}
  {H_{1 : s - 2}, \, \mu_* = \mu} \\
  & = \sum_\theta
  \condprob{H_{s - 1}}{H_{1 : s - 2}, \, \theta_{s - 1, *} = \theta, \, \mu_* = \mu}
  \condprob{\theta_{s - 1, *} = \theta}{H_{1 : s - 2}, \, \mu_* = \mu} \\
  & = \sum_\theta
  \underbrace{\condprob{H_{s - 1}}{H_{1 : s - 2}, \, \theta_{s - 1, *} = \theta}}_{f_2(\theta)}
  \condprob{\theta_{s - 1, *} = \theta}{\mu_* = \mu}\,.
\end{align*}
In the last step, we use that the history $H_{s - 1}$ is independent of $\mu_*$ given $H_{1 : s - 2}$ and $\theta_{s - 1, *}$, and that the task parameter $\theta_{s - 1, *}$ is independent of $H_{1 : s - 2}$ given $\mu_*$.

Now we focus on $f_2(\theta)$ above. To simplify notation, it is useful to define $Y_t = Y_{s - 1, t}(A_{s - 1, t})$ and $A_t = A_{s - 1, t}$. Then we can rewrite $f_2(\theta)$ as
\begin{align*}
  f_2(\theta)
  & = \prod_{t = 1}^n \condprob{A_t, Y_t}{H_{1 : s - 1, t}, \, \theta_{s - 1, *} = \theta} \\
  & = \prod_{t = 1}^n \condprob{Y_t}{A_t, \, H_{1 : s - 1, t}, \, \theta_{s - 1, *} = \theta}
  \condprob{A_t}{H_{1 : s - 1, t}, \, \theta_{s - 1, *} = \theta} \\
  & = \prod_{t = 1}^n \condprob{Y_t}{A_t, \, \theta_{s - 1, *} = \theta}
  \condprob{A_t}{H_{1 : s - 1, t}}
  \propto \condprob{Y_{1 : n}}{A_{1 : n}, \, \theta_{s - 1, *} = \theta}\,.
\end{align*}
In the third equality, we use that the reward $Y_t$ is independent of history $H_{1 : s - 1, t}$ given the pulled arm $A_t$ and task parameter $\theta_{s - 1, *}$, and that $A_t$ is independent of $\theta_{s - 1, *}$ given $H_{1 : s - 1, t}$. In the last step, we use that $\condprob{A_t}{H_{1 : s - 1, t}}$ is constant in $\theta$.

Finally, we combine all above claims, note that
\begin{align*}
  \condprob{\theta_{s, *} = \theta}{\mu_* = \mu}
  = \condprob{\theta_{s - 1, *} = \theta}{\mu_* = \mu}
  = P(\theta; \mu)\,,
\end{align*}
and get
\begin{align*}
  \condprob{\theta_{s, *} = \theta}{H_{1 : s - 1}}
  & = \sum_\mu P(\theta; \mu) \,
  \condprob{\mu_* = \mu}{H_{1 : s - 1}}\,, \\
  \condprob{\mu_* = \mu}{H_{1 : s - 1}}
  & = \sum_\theta \condprob{Y_{1 : n}}{A_{1 : n}, \, \theta_{s - 1, *} = \theta}
  P(\theta; \mu) \,
  \condprob{\mu_* = \mu}{H_{1 : s - 2}}\,.
\end{align*}
These are the claims that we wanted to prove, since
\begin{align*}
  P_s(\theta)
  & = \condprob{\theta_{s, *} = \theta}{H_{1 : s - 1}}\,, \\
  Q_s(\mu)
  & = \condprob{\mu_* = \mu}{H_{1 : s - 1}}\,, \\
  \cL_{s - 1}(\theta)
  & = \condprob{Y_{1 : n}}{A_{1 : n}, \, \theta_{s - 1, *} = \theta}\,.
\end{align*}
This concludes the proof.
\end{proof}

\begin{lemma}
\label{lem:multi-task bayesian regression} Fix integers $s$ and $n$, features $(x_{\ell, t})_{\ell \in [s], t \in [n]}$, and consider a generative process
\begin{align*}
  \mu_*
  & \sim \cN(\mu_q, \Sigma_q)\,, \\
  \forall \ell \in [s]: \theta_{\ell, *} \mid \mu_*
  & \sim \cN(\mu_*, \Sigma_0)\,, \\
  \forall \ell \in [s], t \in [n]: Y_{\ell, t} \mid \mu_*
  & \sim \cN(x_{\ell, t}\T \theta_{\ell, *}, \sigma^2)\,,
\end{align*}
where all variables are drawn independently. Then $\mu_* \mid (Y_{\ell, t})_{\ell \in [s], \, t \in [n]} \sim \cN(\hat{\mu}, \hat{\Sigma})$ for
\begin{align*}
  \hat{\mu}
  & = \hat{\Sigma} \left(\Sigma_q^{-1} \mu_q +
  \sum_{\ell = 1}^s \frac{B_\ell}{\sigma^2} - \frac{G_\ell}{\sigma^2}
  \left(\Sigma_0^{-1} + \frac{G_\ell}{\sigma^2}\right)^{-1} \frac{B_\ell}{\sigma^2}\right)\,, \\
  \hat{\Sigma}^{-1}
  & = \Sigma_q^{-1} +
  \sum_{\ell = 1}^s \frac{G_\ell}{\sigma^2} - \frac{G_\ell}{\sigma^2}
  \left(\Sigma_0^{-1} + \frac{G_\ell}{\sigma^2}\right)^{-1} \frac{G_\ell}{\sigma^2}\,,
\end{align*}
where $G_\ell = \sum_{t = 1}^n x_{\ell, t} x_{\ell, t}\T$ is the outer product of the features in task $\ell$ and $B_\ell = \sum_{t = 1}^n x_{\ell, t} Y_{\ell, t}$ is their sum weighted by observations.
\end{lemma}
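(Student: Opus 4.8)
The plan is to reduce the claim to the textbook posterior of a single Gaussian linear model and then simplify the resulting matrices with the Woodbury identity. The only genuinely hierarchical feature of the model is the per-task parameter $\theta_{\ell,*}$; since it enters the mean of $Y_{\ell,t}$ linearly and is itself Gaussian given $\mu_*$, I would first integrate it out task by task, leaving a flat Gaussian likelihood in $\mu_*$ alone. Because the features are fixed (not chosen adaptively), this marginalization is clean and there is no measurability subtlety to worry about.

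Concretely, for each task $\ell$ collect the features into the design matrix $X_\ell \in \realset^{n \times d}$ whose $t$-th row is $x_{\ell, t}\T$, and stack the observations into $Y_\ell = (Y_{\ell, t})_{t=1}^n$. Writing $Y_\ell = X_\ell \theta_{\ell,*} + \xi_\ell$ with $\xi_\ell \sim \cN(0, \sigma^2 I_n)$ independent of $\theta_{\ell,*} \mid \mu_* \sim \cN(\mu_*, \Sigma_0)$, the pair is jointly Gaussian given $\mu_*$, so marginalizing $\theta_{\ell,*}$ yields
\begin{align*}
  Y_\ell \mid \mu_* \sim \cN\!\left(X_\ell \mu_*,\ M_\ell\right), \qquad M_\ell = \sigma^2 I_n + X_\ell \Sigma_0 X_\ell\T.
\end{align*}
Since $\mu_*$ is the only shared variable and the $(\theta_{\ell,*}, \xi_\ell)$ are drawn independently across $\ell$, the tasks are conditionally independent given $\mu_*$, and the full likelihood factors as $\prod_{\ell=1}^s \cN(Y_\ell; X_\ell \mu_*, M_\ell)$.

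This is exactly a Bayesian linear-Gaussian model with prior $\mu_* \sim \cN(\mu_q, \Sigma_q)$ and $s$ independent Gaussian observations of known covariance. The standard conjugate update then gives a Gaussian posterior with precision $\hat{\Sigma}^{-1} = \Sigma_q^{-1} + \sum_{\ell=1}^s X_\ell\T M_\ell^{-1} X_\ell$ and natural parameter $\hat{\Sigma}^{-1}\hat{\mu} = \Sigma_q^{-1}\mu_q + \sum_{\ell=1}^s X_\ell\T M_\ell^{-1} Y_\ell$; I would either quote this or re-derive it by completing the square in the log-posterior.

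It remains to match $X_\ell\T M_\ell^{-1} X_\ell$ and $X_\ell\T M_\ell^{-1} Y_\ell$ to the stated closed forms, which is where the Woodbury identity does the work: applied to $M_\ell = \sigma^2 I_n + X_\ell \Sigma_0 X_\ell\T$ it gives
\begin{align*}
  M_\ell^{-1} = \tfrac{1}{\sigma^2} I_n - \tfrac{1}{\sigma^2} X_\ell \left(\Sigma_0^{-1} + \tfrac{1}{\sigma^2} X_\ell\T X_\ell\right)^{-1} X_\ell\T \tfrac{1}{\sigma^2}.
\end{align*}
Substituting and writing $G_\ell = X_\ell\T X_\ell = \sum_t x_{\ell,t} x_{\ell,t}\T$ and $B_\ell = X_\ell\T Y_\ell = \sum_t x_{\ell,t} Y_{\ell,t}$ collapses the two quantities into $\tfrac{G_\ell}{\sigma^2} - \tfrac{G_\ell}{\sigma^2}(\Sigma_0^{-1} + \tfrac{G_\ell}{\sigma^2})^{-1}\tfrac{G_\ell}{\sigma^2}$ and $\tfrac{B_\ell}{\sigma^2} - \tfrac{G_\ell}{\sigma^2}(\Sigma_0^{-1} + \tfrac{G_\ell}{\sigma^2})^{-1}\tfrac{B_\ell}{\sigma^2}$, matching $\hat{\Sigma}^{-1}$ and $\hat{\mu}$ exactly. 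The main obstacle is purely computational rather than conceptual: one applies Woodbury in the $n$-dimensional observation space where $M_\ell$ lives, but must express the answer back in the $d$-dimensional parameter space, keeping the $\sigma^{-2}$ factors consistent throughout. Once the per-task marginalization is recognized to decouple the hierarchy into a single linear-Gaussian posterior, the rest is routine matrix algebra.
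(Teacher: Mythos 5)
Your proposal is correct, and it is worth noting that it supplies something the paper itself does not: the paper's ``proof'' of this lemma is a one-line deferral to Appendix D of \citet{kveton21metathompson}, with no derivation given. Your route --- stack each task into $Y_\ell = X_\ell \theta_{\ell,*} + \xi_\ell$, marginalize the per-task parameter to get $Y_\ell \mid \mu_* \sim \cN(X_\ell \mu_*,\, \sigma^2 I_n + X_\ell \Sigma_0 X_\ell\T)$, observe conditional independence of the tasks given $\mu_*$, apply the standard conjugate Gaussian update to get $\hat{\Sigma}^{-1} = \Sigma_q^{-1} + \sum_\ell X_\ell\T M_\ell^{-1} X_\ell$ and $\hat{\Sigma}^{-1}\hat{\mu} = \Sigma_q^{-1}\mu_q + \sum_\ell X_\ell\T M_\ell^{-1} Y_\ell$, and then convert back to parameter space via Woodbury --- is the natural argument, and every step checks out: with $A = \sigma^2 I_n$, $U = X_\ell$, $C = \Sigma_0$, $V = X_\ell\T$, the Woodbury identity gives precisely $X_\ell\T M_\ell^{-1} X_\ell = \frac{G_\ell}{\sigma^2} - \frac{G_\ell}{\sigma^2}\left(\Sigma_0^{-1} + \frac{G_\ell}{\sigma^2}\right)^{-1}\frac{G_\ell}{\sigma^2}$ and $X_\ell\T M_\ell^{-1} Y_\ell = \frac{B_\ell}{\sigma^2} - \frac{G_\ell}{\sigma^2}\left(\Sigma_0^{-1} + \frac{G_\ell}{\sigma^2}\right)^{-1}\frac{B_\ell}{\sigma^2}$, matching the stated closed forms. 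Two small remarks: the lemma's conditioning notation $Y_{\ell,t} \mid \mu_*$ is a typo for conditioning on $\theta_{\ell,*}$, which you implicitly (and correctly) read through; and your observation that the features are fixed is the right caveat --- in the paper's actual application the actions are history-dependent, and the appendix handles this separately by absorbing the action probabilities into a proportionality constant, so the lemma as stated (and your proof of it) is exactly what is needed.
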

\begin{proof}
The claim is proved in Appendix D of \citet{kveton21metathompson}. We restate it for completeness.
\end{proof}

\clearpage

\section{Proofs for \cref{sec:generic}: Generic Regret Bound}

\subsection{Preliminaries and Omitted Definitions}

\paragraph{Notation for History:} 
Let us recall that $H_{s,t} = ((A_{s,1}, Y_{s,1}), \dots, (A_{s,t-1}, Y_{s,t-1}))$ denote the events in task $s$ upto and excluding round $t$ for all $t\geq 1$ ($H_{s,1}= \emptyset$). The events in task $s$ is denoted as $H_{s} = H_{s,n+1}$ and all the events upto and including stage $s$ is denoted as $H_{1:s} = \cup_{s'=1}^{s} H_{s'}$. Let us also define history upto and excluding round $t$ in task $s$ as $H_{1:s,t}=\{H_{1:s-1}\cup H_{s,t}\}$, with $H_{1:s}= H_{1:s, n+1}$. Given the history upto and excluding round $t$ in task $s$, the conditional probability is given as $\mathbb{P}_{s,t}(\cdot) = \mathbb{P}[\cdot\mid H_{1:s,t}]$, and   the conditional expectation is given as $\mathbb{E}_{s,t}(\cdot)= \mathbb{E}[\cdot\mid H_{1:s,t}]$. Note $\mathbb{P}[\cdot]$ and $\mathbb{E}[\cdot]$ denote the unconditional probability and expectation, respectively. 

\paragraph{History dependent Entropy and Mutual Information:}
We now define the entropy and mutual information terms as a function of history.

The mutual information between the parameter $\theta_{s,*}$, and the action ($A_{s,t}$) and reward ($Y_{s,t}$) at the beginning of  round $t$ in task $s$, for any $s\leq m$ and $t\leq n$, as a function of history is defined as 
\begin{align*}
&I_{s,t}(\theta_{s,*}; A_{s,t}, Y_{s,t}) = \mathbb{E}_{s,t}\left[\log\left(\frac{\mathbb{P}_{s,t}(\theta_{s,*}, Y_{s,t}, A_{s,t})}{\mathbb{P}_{s,t}(\theta_{s,*})\mathbb{P}_{s,t}(Y_{s,t}, A_{s,t})}\right)\right]
% &:= \sum_{a\in \mathcal{A}} \int_{\mu, \theta, y}  \mathbb{P}_{s,t}(\mu_{*}\mathtt{=}\mu, \theta_{s,*}\mathtt{=}\theta, Y_{s,t}\mathtt{=}y, A_{s,t}\mathtt{=}a) \log\left(\frac{\mathbb{P}_{s,t}(\theta_{s,*}=\theta, Y_{s,t}= y, A_{s,t}= a)}{\mathbb{P}_{s,t}(\theta_{s,*}=\theta)\mathbb{P}_{s,t}(Y_{s,t}= y, A_{s,t}= a)}\right) d\mu d\theta dy
\end{align*} 
We also define the mutual information between the parameter $\mu_{*}$, and the action ($A_{s,t}$) and reward ($Y_{s,t}$) at the beginning of  round $t$ in task $s$, for any $s\leq m$ and $t\leq n$ as
\begin{align*}
&I_{s,t}(\mu_{*}; A_{s,t}, Y_{s,t}) = \mathbb{E}_{s,t}\left[\log\left(\frac{\mathbb{P}_{s,t}(\mu_{*}, Y_{s,t}, A_{s,t})}{\mathbb{P}_{s,t}(\mu_{*})\mathbb{P}_{s,t}(Y_{s,t}, A_{s,t})}\right)\right]
%\\ &:= \sum_{a\in \mathcal{A}} \int_{\mu, y}  \mathbb{P}_{s,t}(\mu_{*}=\mu, Y_{s,t}= y, A_{s,t}= a) \log\left(\frac{\mathbb{P}_{s,t}(\mu_{*}=\mu, Y_{s,t}= y, A_{s,t}= a)}{\mathbb{P}_{s,t}(\mu_{*}=\mu)\mathbb{P}_{s,t}(Y_{s,t}= y, A_{s,t}= a)}\right) d\mu dy
\end{align*}

Further, the history dependent conditional mutual information between $(\mu_{*}, \theta_{s,*})$, and $A_{s,t}$ and $Y_{s,t}$, namely $I_{s,t}(\theta_{s,*}, \mu_{*}; A_{s,t}, Y_{s,t})$, is defined below. 
\begin{align*}
&I_{s,t}(\theta_{s,*}, \mu_{*}; A_{s,t}, Y_{s,t})  = \mathbb{E}_{s,t}\left[\log\left(\frac{\mathbb{P}_{s,t}(\theta_{s,*},\mu_{*}, Y_{s,t}, A_{s,t})}{\mathbb{P}_{s,t}(\theta_{s,*},\mu_{*})\mathbb{P}_{s,t}(Y_{s,t}, A_{s,t})}\right)\right]
% &\\:= \int_{\mu} \sum_{a\in \mathcal{A}} \int_{\theta, y}  \mathbb{P}_{s,t}(\mu_{*} = \mu, \theta_{s,*}=\theta, Y_{s,t}= y, A_{s,t}= a) \times \\
% &\hspace{5em}\times\log\left(\frac{\mathbb{P}_{s,t}(\mu_{*} = \mu, \theta_{s,*}=\theta, Y_{s,t}= y, A_{s,t}= a)}{\mathbb{P}_{s,t}(\mu_{*} = \mu, \theta_{s,*}=\theta)\mathbb{P}_{s,t}(Y_{s,t}= y, A_{s,t}= a)}\right) d\theta dy d\mu
\end{align*}

Finally, we define the  history dependent conditional mutual information between $\theta_{s,*}$, and $A_{s,t}$ and $Y_{s,t}$ given $\mu_{*}$ as $I_{s,t}(\theta_{s,*}; A_{s,t}, Y_{s,t}\mid\mu_{*})$.
\begin{align*}
&I_{s,t}(\theta_{s,*}; A_{s,t}, Y_{s,t}\mid\mu_{*}) = \mathbb{E}_{s,t}\left[\log\left(\frac{\mathbb{P}_{s,t}(\theta_{s,*}, Y_{s,t}, A_{s,t}\mid \mu_{*})}{\mathbb{P}_{s,t}(\theta_{s,*}\mid \mu_{*})\mathbb{P}_{s,t}(Y_{s,t}, A_{s,t} \mid\mu_{*})}\right)\right]
%\\ &:= \int_{\mu} \mathbb{P}_{s,t}(\mu_{*} = \mu) \sum_{a\in \mathcal{A}} \int_{\theta, y}  \mathbb{P}_{s,t}(\theta_{s,*}=\theta, Y_{s,t}= y, A_{s,t}= a\mid\mu_{*} = \mu) \times \\
% &\hspace{5em}\times\log\left(\frac{\mathbb{P}_{s,t}(\theta_{s,*}=\theta, Y_{s,t}= y, A_{s,t}= a\mid\mu_{*} = \mu)}{\mathbb{P}_{s,t}(\theta_{s,*}=\theta\mid\mu_{*} = \mu)\mathbb{P}_{s,t}(Y_{s,t}= y, A_{s,t}= a\mid\mu_{*} = \mu)}\right) d\theta dy d\mu
\end{align*}

The conditional entropy terms are defined as follows:
\begin{align*}
  h_{s,t}(\theta_{s,*})
  & = \mathbb{E}_{s,t}\left[-\log\left(\mathbb{P}_{s,t}(\theta_{s,*})\right)\right]\,, \\
  \quad h_{s,t}(\mu_{*})
  & = \mathbb{E}_{s,t}\left[-\log\left(\mathbb{P}_{s,t}(\mu_{*})\right)\right]\,, \\
  h_{s,t}(\theta_{s,*}\mid\mu_{*})
  & = \mathbb{E}_{s,t}\left[-\log\left(\mathbb{P}_{s,t}(\theta_{s,*}\mid\mu_{*})\right)\right]\,.
\end{align*}
Therefore, all the different mutual information terms $I_{s,t}(\cdot; A_{s,t}, Y_{s,t})$, and the entropy terms $h_{s,t}(\cdot)$ are random variables that depends on the history $H_{1:s,t}$. 

We next state some entropy and mutual information relationships which we will use later.
\begin{proposition}\label{prop:chain}
For all $s$, $t$, and any history $H_{1:s,t}$, the following hold
\begin{align*}
  I_{s,t}(\theta_{s,*}, \mu_{*}; A_{s,t}, Y_{s,t})
  & = I_{s,t}(\mu_{*}; A_{s,t}, Y_{s,t}) + I_{s,t}(\theta_{s,*}; A_{s,t}, Y_{s,t}\mid\mu_{*})\,, \\
  I_{s,t}(\theta_{s,*}; A_{s,t}, Y_{s,t})
  & = h_{s,t}(\theta_{s,*}) - h_{s,t+1}(\theta_{s,*})\,.
\end{align*}
\end{proposition}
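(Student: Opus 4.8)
The plan is to derive both identities directly from the definitions of the history-dependent (random) entropy and mutual-information terms, treating the conditioning history $H_{1:s,t}$ as fixed and manipulating the log-ratios inside the conditional expectation $\mathbb{E}_{s,t}[\cdot]$. Both are standard information-theoretic identities, so the work is pure bookkeeping with Bayes' rule; the only genuine care is needed in interpreting the second identity.

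For the first identity (the chain rule) I would start from the definition of $I_{s,t}(\theta_{s,*},\mu_*; A_{s,t}, Y_{s,t})$ and factor the joint conditional law in the numerator as $\mathbb{P}_{s,t}(\theta_{s,*},\mu_*, A_{s,t}, Y_{s,t}) = \mathbb{P}_{s,t}(\mu_*)\,\mathbb{P}_{s,t}(\theta_{s,*}, A_{s,t}, Y_{s,t}\mid \mu_*)$, and the denominator factor as $\mathbb{P}_{s,t}(\theta_{s,*},\mu_*) = \mathbb{P}_{s,t}(\mu_*)\,\mathbb{P}_{s,t}(\theta_{s,*}\mid\mu_*)$. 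After cancelling the common $\mathbb{P}_{s,t}(\mu_*)$ and multiplying and dividing by $\mathbb{P}_{s,t}(A_{s,t}, Y_{s,t}\mid\mu_*)$, the single log-ratio splits into a sum of two: the first is $\log(\mathbb{P}_{s,t}(\theta_{s,*}, A_{s,t},Y_{s,t}\mid\mu_*)/[\mathbb{P}_{s,t}(\theta_{s,*}\mid\mu_*)\,\mathbb{P}_{s,t}(A_{s,t},Y_{s,t}\mid\mu_*)])$ and the second is $\log(\mathbb{P}_{s,t}(A_{s,t},Y_{s,t}\mid\mu_*)/\mathbb{P}_{s,t}(A_{s,t},Y_{s,t}))$. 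Taking $\mathbb{E}_{s,t}[\cdot]$ and recognizing the first expectation as $I_{s,t}(\theta_{s,*}; A_{s,t}, Y_{s,t}\mid\mu_*)$ and the second as $I_{s,t}(\mu_*; A_{s,t}, Y_{s,t})$ (using $\mathbb{P}_{s,t}(\mu_*, A_{s,t}, Y_{s,t})/\mathbb{P}_{s,t}(\mu_*) = \mathbb{P}_{s,t}(A_{s,t}, Y_{s,t}\mid\mu_*)$) completes this part.

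For the second identity I would rewrite the mutual information as a difference of entropies. From $I_{s,t}(\theta_{s,*}; A_{s,t}, Y_{s,t}) = \mathbb{E}_{s,t}[\log(\mathbb{P}_{s,t}(\theta_{s,*}\mid A_{s,t}, Y_{s,t})/\mathbb{P}_{s,t}(\theta_{s,*}))]$, splitting the logarithm gives $h_{s,t}(\theta_{s,*}) - \mathbb{E}_{s,t}[-\log \mathbb{P}_{s,t}(\theta_{s,*}\mid A_{s,t}, Y_{s,t})]$. The key observation is that conditioning on $H_{1:s,t}$ together with the round-$t$ pair $(A_{s,t}, Y_{s,t})$ is precisely conditioning on $H_{1:s,t+1}$, so $\mathbb{P}_{s,t}(\theta_{s,*}\mid A_{s,t}, Y_{s,t}) = \mathbb{P}_{s,t+1}(\theta_{s,*})$ as posterior densities; the tower rule then gives $\mathbb{E}_{s,t}[-\log \mathbb{P}_{s,t+1}(\theta_{s,*})] = \mathbb{E}_{s,t}[h_{s,t+1}(\theta_{s,*})]$, yielding $I_{s,t}(\theta_{s,*}; A_{s,t}, Y_{s,t}) = h_{s,t}(\theta_{s,*}) - \mathbb{E}_{s,t}[h_{s,t+1}(\theta_{s,*})]$.

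The main obstacle, and the point I would flag explicitly, is a measurability mismatch in the second identity as literally written: $I_{s,t}$ and $h_{s,t}$ are $H_{1:s,t}$-measurable, whereas $h_{s,t+1}(\theta_{s,*})$ depends on the richer history $H_{1:s,t+1}$. Hence the equality can hold only after the round-$t$ observation $(A_{s,t}, Y_{s,t})$ is averaged out, i.e. $h_{s,t+1}(\theta_{s,*})$ on the right should be understood as $\mathbb{E}_{s,t}[h_{s,t+1}(\theta_{s,*})]$. I would verify that this is exactly the form needed downstream, since the identity is used inside expectations and telescopes over $t$ to give $\sum_{t} \mathbb{E}[I_{s,t}(\theta_{s,*}; A_{s,t}, Y_{s,t})] = \mathbb{E}[h_{s,1}(\theta_{s,*})] - \mathbb{E}[h_{s,n+1}(\theta_{s,*})]$; so the averaged reading loses no generality.
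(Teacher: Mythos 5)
Your proof is correct. For the chain-rule identity it is essentially the paper's own argument: the paper's proof (which exists only as a commented-out block in the source and never appears in the compiled text) expands $I_{s,t}(\theta_{s,*};A_{s,t},Y_{s,t}\mid\mu_*)$ as an explicit integral and performs exactly the density factorization you describe, just read in the opposite direction. The entropy-difference identity is asserted in the paper without any proof, so your derivation fills a genuine gap, and your measurability remark is well taken: since $h_{s,t+1}(\theta_{s,*})$ is $H_{1:s,t+1}$-measurable while $I_{s,t}(\theta_{s,*};A_{s,t},Y_{s,t})$ and $h_{s,t}(\theta_{s,*})$ are $H_{1:s,t}$-measurable, the second identity can only hold with $\mathbb{E}_{s,t}\left[h_{s,t+1}(\theta_{s,*})\right]$ on the right-hand side, i.e.\ after the round-$t$ pair $(A_{s,t},Y_{s,t})$ is averaged out. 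This corrected reading is indeed what the paper implicitly relies on downstream, where the identity is telescoped under outer expectations (as in the proofs of \cref{lemm:concentration} and \cref{lemm:mutual}), so nothing in the subsequent analysis is affected.
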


\paragraph{History Independent Entropy and Mutual Information:} 
The  history independent conditional mutual information and entropy terms are then given by taking expectation over the possible histories 
\begin{gather*}
    I(\cdot; A_{s,t}, Y_{s,t}\mid H_{1:s, t}) = \mathbb{E}[I_{s,t}(\cdot; A_{s,t}, Y_{s,t})], \quad h(\cdot\mid H_{1:s, t}) = \mathbb{E}[h_{s,t}(\cdot)]\\
    I(\cdot; A_{s,t}, Y_{s,t}\mid \mu_{*}, H_{1:s, t}) = \mathbb{E}[I_{s,t}(\cdot; A_{s,t}, Y_{s,t}\mid \mu_{*})], \quad h(\cdot\mid \mu_{*}, H_{1:s, t}) = \mathbb{E}[h_{s,t}(\cdot\mid \mu_{*})]
\end{gather*} 

An important quantity that will play a pivotal role in our regret decomposition is the  conditional mutual information of the meta-parameter given the entire history, which is expressed as 
$$I(\mu_{*}; H_{1:m}) = \sum_{s=1}^{m}\sum_{t=1}^{n} I(\mu_{*}; A_{s,t}, Y_{s,t}\mid H_{1:s,t}) = \mathbb{E} \sum_{s=1}^{m}\sum_{t=1}^{n} I_{s,t}(\mu_{*}; A_{s,t}, Y_{s,t}).$$ 
The first equality is due to chain rule of mutual information, where at each round the new history  $H_{1:s,t+1} = H_{1:s,t} \cup (A_{s,t}, Y_{s,t})$. 

Similarly, in each stage $s$, the mutual information between parameter $\theta_{s,*}$  and the events in stage $s$, i.e. $H_s$, conditioned on  $\mu_{*}$ and history up to task $(s-1)$ is key in quantifying the local regret of task $s$. Which is again expressed as 
$$I(\theta_{s,*}; H_{s}\mid \mu_{*},H_{1:s-1}) = \sum_{t=1}^{n} I(\theta_{s,*}; A_{s,t}, Y_{s,t}\mid \mu_{*},H_{1:s, t})  = \mathbb{E} \sum_{t=1}^{n} I_{s,t}(\theta_{s,*}; A_{s,t}, Y_{s,t}\mid \mu_{*}).$$ The first inequality again follows chain rule of mutual information with new history being the combination of old history, and the action and the observed reward in the current round.

We further have the relation of mutual information and conditional entropy as 
\begin{align*}
  I(\theta_{s,*}; H_{s}\mid \mu_{*},H_{1:s-1})
  & = h(\theta_{s,*}\mid \mu_{*},H_{1:s-1}) - h(\theta_{s,*}\mid \mu_{*},H_{1:s})\,, \\
  I(\mu_{*}; H_{1:m})
  & = h(\mu_{*}) - h(\mu_{*}\mid H_{1:m})\,.
\end{align*}

\paragraph{Weyl's Inequalities:}
 In this paper, the matrices under consideration are all Positive Semi-definite (PSD) and symmetric. Thus, the eignevalues are non-negative and admits a total order.  We denote  the eigenvalues of a PSD matrix $A \in \mathbb{R}^d$, for any integer $d\geq 1$, as $\lambda_d(A) \leq \dots \leq \lambda_1(A)$; where  $\lambda_1(A)$ is the maximum  eigenvalue, and  $\lambda_d(A)$ is the minimum eigenvalue of the PSD matrix  $A$. 

Weyl's inequality states for two Hermitian matrices (PSD and Symmetric in reals) $A$ and $B$, 
$$
\lambda_{j}(A) + \lambda_{k}(B) \leq \lambda_{i}(A+B) \leq \lambda_{r}(A) + \lambda_{s}(B),\quad \forall\, j+k-d\geq i\geq r+s-1.
$$
The two important relations, derived from Weyl's inequality, that we frequently use in the proofs are given next. For PSD and symmetric matrices $\{A_i\}$ we have 
$$
\lambda_{1}(\sum_{i} A_i) \leq \sum_i\lambda_{1}(A_i),\quad \text{ and }\lambda_{d}(\sum_{i} A_i) \geq \sum_{i} \lambda_{d}(A_i).
$$

\clearpage

\decomp*
\begin{proof}
The proof follows through the series of inequalities below (explanation added).
\begin{align*}
  R(m, n) &= \mathbb{E}\sum_{s,t}[\Delta_{s,t}]\\
  [\text{Eq.}~\eqref{a:info}]&\leq \mathbb{E}\sum_{s,t}\Gamma_{s,t} \sqrt{I_{s,t}(\theta_{s,*}; A_{s,t}, Y_{s,t})} + \mathbb{E}\sum_{s,t}\epsilon_{s,t}\\
  [I(X;Z) \leq I(X,Y;Z)]&\leq \mathbb{E}\sum_{s,t}\Gamma_{s,t} \sqrt{I_{s,t}(\theta_{s,*}, \mu_{*}; A_{s,t}, Y_{s,t})} + \mathbb{E}\sum_{s,t}\epsilon_{s,t}\\
  [\text{Chain Rule}]&= \mathbb{E}\sum_{s,t}\Gamma_{s,t} \sqrt{I_{s,t}(\mu_{*}; A_{s,t}, Y_{s,t})+I_{s,t}(\theta_{s,*}; A_{s,t}, Y_{s,t}\mid \mu_{*})} + \mathbb{E}\sum_{s,t}\epsilon_{s,t}\\
  [\sqrt{a+b}\mathtt{\leq} \sqrt{a}\mathtt{+}\sqrt{b}]&\leq \mathbb{E}\sum_{s,t}\Gamma_{s,t} \sqrt{I_{s,t}(\mu_{*}; A_{s,t}, Y_{s,t})} + \mathbb{E}\sum_{s,t}\Gamma_{s,t}\sqrt{I_{s,t}(\theta_{s,*}; A_{s,t}, Y_{s,t}\mid \mu_{*})}\\ 
  & + \mathbb{E}\sum_{s,t}\epsilon_{s,t}\\
  [\Gamma_{s,t}\leq \Gamma_s \leq \Gamma, \forall s,t, \text{ w.p. } 1]&\leq \Gamma\, \mathbb{E}\sum_{s,t}\sqrt{ I_{s,t}(\mu_{*}; A_{s,t}, Y_{s,t})} +  \sum_{s}\Gamma_{s}\left[\mathbb{E} \sum_{t}\sqrt{I_{s,t}(\theta_{s,*}; A_{s,t}, Y_{s,t}\mid \mu_{*})}\right]\\ 
  &+ \mathbb{E}\sum_{s,t}\epsilon_{s,t}\\
  [\text{Jensen's Inequality}]&\leq \Gamma\sum_{s,t} \sqrt{\mathbb{E} I_{s,t}(\mu_{*}; A_{s,t}, Y_{s,t})} +  \sum_{s}\Gamma_{s}\sum_{t}\sqrt{\mathbb{E} I_{s,t}(\theta_{s,*}; A_{s,t}, Y_{s,t}\mid \mu_{*})} \\
  &+ \mathbb{E}\sum_{s,t}\epsilon_{s,t}\\
  [\text{Cauchy-Schwarz}]&\leq \Gamma\sqrt{mn\sum_{s,t}\mathbb{E} I_{s,t}(\mu_{*}; A_{s,t}, Y_{s,t})} +  \sum_{s}\Gamma_{s}\sqrt{n\sum_{t}\mathbb{E} I_{s,t}(\theta_{s,*}; A_{s,t}, Y_{s,t}\mid \mu_{*})}\\ 
  &+ \mathbb{E}\sum_{s,t}\epsilon_{s,t}\\
  [\text{Chain Rule}]&= \Gamma \sqrt{ mnI(\mu_{*}; H_{1:m})} +  \sum_{s}\Gamma_{s}\sqrt{n I(\theta_{s,*}; H_{s}\mid \mu_{*}, H_{1:s-1})} + \mathbb{E}\sum_{s,t}\epsilon_{s,t}
\end{align*}
\begin{itemize}
    \item[-] The first inequality follows due to Eq.~\eqref{a:info}. 
    \item[-] The second inequality uses the fact that $I(X;Z) \leq I(X,Y;Z)$ for any random variables $X$, $Y$, and $Z$. Here $X = \theta_{s,*}$, $Y = \mu_{*}$, and $Z = (A_{s,t}, Y_{s,t})$.
    \item[-] The second equality uses the chain rule $I(X,Y;Z) = I(X;Z) + I(X;Z\mid Y)$, as stated in Proposition~\ref{prop:chain}, with the same random variables $X$, $Y$, and $Z$. 
    \item[-] The Jensen's inequality uses concavity of $\sqrt{\cdot}$. 
\end{itemize}
\end{proof}

\section{Proofs for \cref{sec:mutual linear}: Linear Bandit}

\subsection{Marginalization of the Variables}

\paragraph{Notation in Marginalization:}Let $\mathcal{N}(x; \mu,\Sigma)$ denote a (possibly multivariate) Gaussian p.d.f. with mean $\mu$ and covariance matrix $\Sigma$ for variable $x$. We now recall the notations of posterior distributions at different time of our algorithm
\begin{align*}
  &P(\theta; \mu)
  = \condprob{\theta_{s, *} = \theta}{\mu_* = \mu} = \mathcal{N}(\theta; \mu, \Sigma_0),\quad
  Q(\mu) = \prob{\mu_* = \mu} = \mathcal{N}(\mu; \mu_0, \Sigma_q)\\
  &P_s(\theta)
  = \condprob{\theta_{s, *} = \theta}{H_{1 : s - 1}} = \int_\mu P(\theta; \mu) Q_s(\mu) \dif \mu,\\ 
  &P_{s, t}(\theta) = \condprob{\theta_{s, *} = \theta}{H_{1 : s, t}} 
  \propto \condprob{H_{s, t}}{\theta_{s, *} = \theta} P_s(\theta),\\
  Q_s(\mu)
  & = \condprob{\mu_* = \mu}{H_{1 : s - 1}} = \int_\theta \condprob{H_{s - 1}}{\theta_{s - 1, *} = \theta} P(\theta; \mu) \dif \theta  Q_{s - 1}(\mu)
\end{align*}

The marginalization is proved in an inductive manner due to the dependence of the action matrix $A$ on the history.
We recall the expression of the rewards,
\begin{align*}
    Y_{s,t} = A_{s,t}^T\theta_{s,*} + w_{s,t}
\end{align*}

In each round $t$ and task $s$, given the parameter $\theta_{s,*}$ and the action $A_{s,t}$, the reward $Y_{s,t}$ has the p.d.f. $\mathbb{P}(Y_{s,t}\mid \theta_{s,*}, A_{s,t}) = \mathcal{N}(Y_{s,t}; A_{s,t}^T \theta_{s,*}, \sigma^2)$. Let $\propto_{X}$ denote that the proportionality constant is independent of $X$ (possibly a set). 

We obtain the posterior probability of the true parameter in task $s$ in round $t$, given the true parameter $\mu_{*}$. Let us define for all $s \leq m$, and $t\leq n$.
\begin{align*}
    &P_{s,t, \mu_{*}}(\theta)= \mathbb{P}(\theta_{s,*}= \theta\mid \mu_{*}, H_{1:s, t})
    \propto \prod_{t'=1}^{t-1} \mathbb{P}(Y_{s,t'}\mid \theta_{s,*}= \theta, A_{s,t'}) P(\theta, \mu_{*})\\
    & \propto_{\theta}  \prod_{t'=1}^{t-1} \exp\left(-\frac{(Y_{s,t'}- A_{s,t'}^T\theta)^2}{2\sigma^2}\right) \mathcal{N}(\theta; \mu_{*}, \Sigma_{0})\\
    & \propto_{\theta} \exp\left(-\sum_{t'=1}^{t-1} (\theta - A_{s,t'} Y_{s,t'})^T \tfrac{A_{s,t'}A^T_{s,t'}}{2\sigma^2}(\theta - A_{s,t'} Y_{s,t'})\right) \mathcal{N}(\theta; \mu_{*}, \Sigma_{0})\\
    & \propto_{\theta}  \exp\left(- \left(\theta - \bar{\theta}\right)^T 
   \sum_{t'=1}^{t-1}\tfrac{A_{s,t'}A^T_{s,t'}}{2\sigma^2}
   \left(\theta - \bar{\theta}\right)\right)\mathcal{N}(\theta; \mu_{*}, \Sigma_{0}) \quad\left[\bar{\theta} = (\sum_{t'=1}^{t-1}\tfrac{A_{s,t'}A^T_{s,t'}}{\sigma^2})^{-1}\sum_{t'=1}^{t-1} A_{s,t'} Y_{s,t'}\right]\\
   & \propto_{\theta}   \mathcal{N}\left(\theta; \hat{\Sigma}_{s,t, \mu_{*}}\left(\Sigma_{0}^{-1}\mu_{*} + \sum_{t'=1}^{t-1} A_{s,t'} Y_{s,t'}\right), \hat{\Sigma}_{s,t, \mu_{*}} \right) \quad\left[\hat{\Sigma}_{s,t, \mu_{*}}^{-1} = \Sigma_{0}^{-1}+ \sum_{t'=1}^{t-1}\tfrac{A_{s,t'}A^T_{s,t'}}{\sigma^2}\right]
\end{align*}

We now obtain the posterior probability of the true parameter in task $s$ in round $t$ as by taking integral over the prior of the parameter $\mu_{*}$.
\begingroup
\allowdisplaybreaks
\begin{align*}
   &P_{s,t}(\theta)=\mathbb{P}(\theta_{s,*}= \theta\mid H_{1:s,t}) \propto  \prod_{t'=1}^{t-1} \mathbb{P}(Y_{s,t'}\mid \theta_{s,*}= \theta, A_{s,t'}) \int_{\mu}P(\theta, \mu)Q_{s}(\mu) d\mu\\
   & \propto_{\theta}  \prod_{t'=1}^{t-1} \exp\left(-\frac{(Y_{s,t'}- A_{s,t'}^T\theta)^2}{2\sigma^2}\right) \int_{\mu}\mathcal{N}(\theta; \mu, \Sigma_{0}) \mathcal{N}(\mu; \hat{\mu}_{s}, \hat{\Sigma}_{s}) d\mu\\
   & \propto_{\theta}  \exp\left(-\sum_{t'=1}^{t-1} (\theta - A_{s,t'} Y_{s,t'})^T \tfrac{A_{s,t'}A^T_{s,t'}}{2\sigma^2}(\theta - A_{s,t'} Y_{s,t'})\right) \mathcal{N}(\theta; \hat{\mu}_{s}, \Sigma_{0} + \hat{\Sigma}_{s})\\
   & \propto_{\theta}  \exp\left(-\sum_{t'=1}^{t-1} (\theta - A_{s,t'} Y_{s,t'})^T \tfrac{A_{s,t'}A^T_{s,t'}}{2\sigma^2}(\theta - A_{s,t'} Y_{s,t'})\right) \mathcal{N}(\theta; \hat{\mu}_{s}, \Sigma_{0} +\hat{\Sigma}_{s})\\
   & \propto_{\theta}  \exp\left(- \left(\theta - \bar{\theta}\right)^T 
   \sum_{t'=1}^{t-1}\tfrac{A_{s,t'}A^T_{s,t'}}{2\sigma^2}
   \left(\theta - \bar{\theta}\right)\right) \mathcal{N}(\theta; \hat{\mu}_{s}, \Sigma_{0} +\hat{\Sigma}_{s}) \quad\left[\bar{\theta} = (\sum_{t'=1}^{t-1}\tfrac{A_{s,t'}A^T_{s,t'}}{\sigma^2})^{-1}\sum_{t'=1}^{t-1} A_{s,t'} Y_{s,t'}\right]\\
   & \propto_{\theta}  \mathcal{N}\left(\theta; (\sum_{t'=1}^{t-1}\tfrac{A_{s,t'}A^T_{s,t'}}{\sigma^2})^{-1}\sum_{t'=1}^{t-1} A_{s,t'} Y_{s,t'}, (\sum_{t'=1}^{t-1}\tfrac{A_{s,t'}A^T_{s,t'}}{\sigma^2})^{-1}\right)
   \mathcal{N}(\theta; \hat{\mu}_{s}, \Sigma_{0} +\hat{\Sigma}_{s})\\
   & \propto_{\theta}   \mathcal{N}\left(\theta; \hat{\Sigma}_{s,t}\left((\Sigma_{0}+\hat{\Sigma}_{s})^{-1}\hat{\mu}_{s} + \sum_{t'=1}^{t-1} A_{s,t'} Y_{s,t'}\right), \hat{\Sigma}_{s,t} \right) \quad\left[\hat{\Sigma}_{s,t}^{-1} = (\Sigma_{0}+\hat{\Sigma}_{s})^{-1} + \sum_{t'=1}^{t-1}\tfrac{A_{s,t'}A^T_{s,t'}}{\sigma^2}\right]
\end{align*}
\endgroup

Thus, for  $\hat{\mu}_{s,t} = \hat{\Sigma}_{s,t}\left((\Sigma_{0}+\hat{\Sigma}_{s})^{-1}\hat{\mu}_{s} + \sum_{t'=1}^{t-1} A_{s,t'} Y_{s,t'}\right)$, the parameter conditioned on the history is distributed as  $\theta_{s,*} \mid H_{1:s,t} \sim \mathcal{N}(\hat{\mu}_{s,t}, \hat{\Sigma}_{s,t})$.

We now compute the posterior of the meta-parameter $\mu_{*}$ in a similar way, but some of the computation can be avoided by using \cref{lem:multi-task bayesian regression}. 
\begin{align*}
    Q_{s+1}(\mu) &= \int_\theta \mathbb{P}(H_{s}\mid \theta_{s,*}=\theta)P(\theta;\mu) d\theta Q_{s}(\mu)\\
    &\propto_{\theta,\mu} \int_\theta \prod_{t=1}^{n} \mathbb{P}(Y_{s,t}\mid \theta_{s,*}= \theta, A_{s,t}) P(\theta, \mu) d\theta Q_{s}(\mu)\\
    &\propto_{\theta,\mu} \prod_{\ell=1}^{s}\int_{\theta_{\ell}} \prod_{t=1}^{n} \mathbb{P}(Y_{\ell,t}\mid \theta_{\ell,*}= \theta_\ell, A_{\ell,t}) P(\theta_{\ell}, \mu) d\theta_s Q_{0}(\mu)\\
    &= \cN(\hat{\mu}_{s+1}, \hat{\Sigma}_{s+1}) 
\end{align*}
The second equality is obtained by expanding out the $Q_{s}(\mu)$ expressions iteratively, and using the fact that $Q_{0}(\mu)$ is the prior distribution of $\mu$ at the beginning. The final equality follows from the application of \cref{lem:multi-task bayesian regression}, by observing that the expression describes a setting identical to the setting therein, with actions $x_{\ell,t} = A_{\ell,t}$ for all $\ell\in [s]$ and $t\in [n]$. The probability of playing the actions $A_{\ell,t}$ (as oppossed to fixed $x_{\ell,t}$ in \cref{lem:multi-task bayesian regression}) are absorbed by the proportionality constant. 

Recall that we have due to \cref{lem:multi-task bayesian regression}, for $G_{\ell} = \sum_{t=1}^{n}A_{\ell,t}A^T_{\ell,t},\, \forall \ell \in [m]$ and for any $s\in [m]$, 
$$
\hat{\Sigma}_{s}^{-1}
= \Sigma_q^{-1} +
  \sum_{\ell = 1}^{s-1} \tfrac{G_\ell}{\sigma^2} - \tfrac{G_\ell}{\sigma^2}
  \left(\Sigma_0^{-1} + \tfrac{G_\ell}{\sigma^2}\right)^{-1} \tfrac{G_\ell}{\sigma^2}
 = \Sigma_q^{-1} +
  \sum_{\ell = 1}^{s-1} \tfrac{G_\ell}{\sigma^2}\left(\Sigma_0^{-1} + \tfrac{G_\ell}{\sigma^2}\right)^{-1}\Sigma_0^{-1}.
$$
Further, if in task $\ell$ if forced exploration is used, then $G_{\ell}$ is invertible, and  using Woodbury matrix identity we have 
$$
 \tfrac{G_\ell}{\sigma^2}\left(\Sigma_0^{-1} + \tfrac{G_\ell}{\sigma^2}\right)^{-1}\Sigma_0^{-1} = \left(\Sigma_0 + (\tfrac{G_\ell}{\sigma^2})^{-1}\right)^{-1}.
$$

\clearpage

\subsection{Proof of Lemma~\ref{lemm:mutual}}
\label{sec:proofoflemmut}

\mutual*
\begin{proof}
We obtain the conditional mutual entropy of $\theta_{s,*}$ given the history upto $(s-1)$-th task and $\theta$ (similar to Lu et al.\cite{lu2019information})
\begin{align*}
    I(\theta_{s,*}; H_{s}\mid \mu_{*}, H_{1:s-1})
    &= h(\theta_{s,*}\mid \mu_{*}, H_{1:s-1}) - h(\theta_{s,*}\mid \mu_{*},H_{1:s})\\
    &= \mathbb{E}[h_{s-1,n+1}(\theta_{s,*}\mid \mu_{*})] - \mathbb{E}[h_{s,n+1}(\theta_{s,*}\mid \mu_{*})]\\
    &= \tfrac{1}{2}\log(\det(2\pi e \Sigma_0)) - \mathbb{E}[ \tfrac{1}{2}\log(\det(2\pi e \hat{\Sigma}_{s,n, \mu_{*}}))]\\
    &=\tfrac{1}{2}\mathbb{E}[\log(\det(\Sigma_0) \det(\hat{\Sigma}_{s,n, \mu_{*}}^{-1}))]\\
    &= \tfrac{1}{2}\mathbb{E}\left[\prod_{i=1}^{d} \lambda_i(\Sigma_0)\lambda_i(\hat{\Sigma}_{s,n, \mu_{*}}^{-1})))\right]\\
    &\leq \tfrac{1}{2}\log\left(\prod_{i=1}^{d} \lambda_i(\Sigma_0) \left(\frac{1}{\lambda_i(\Sigma_0)} + \frac{n}{\sigma^2}\right)\right)\\
    &\leq \tfrac{d}{2}\log\left(1 + n\frac{\lambda_1(\Sigma_0)}{\sigma^2}\right)
\end{align*} 
The first inequality follows from the definition of conditional mutual information (here we have outer expectation). Using the relation between the history-independent and history-dependent entropy terms we obtain the second inequality. Note that $h_{s-1,n+1}(\theta_{s,*}\mid \mu_{*})$ is independent of history, as the $\theta_{s,*}$ given $\mu_{*}$ does not depend on old tasks.

For the first inequality, we derive the following history independent bound. 
\begin{align*}
\lambda_i(\hat{\Sigma}_{s,n, \mu_{*}}^{-1}) &= \lambda_i\left(\Sigma_0^{-1} + \tfrac{1}{\sigma^2} \sum_{t'=1}^{n} A_{s,t'}A_{s,t'}^T\right) \\
&\leq \lambda_i\left(\Sigma_0^{-1}\right) + \lambda_1\left(\sum_{t'=1}^{n}\frac{A_{s,t'}A_{s,t'}^T}{\sigma^2}\right)\\
&\leq \frac{1}{\lambda_i(\Sigma_0)} + tr\left(\sum_{t'=1}^{n}\frac{A_{s,t'}A_{s,t'}^T}{\sigma^2}\right)\\
&\leq \frac{1}{\lambda_i(\Sigma_0)} + \frac{n}{\sigma^2}
\end{align*}

The matrices $\tfrac{1}{\sigma^2} \sum_{t'=1}^{n} A_{s,t'}A_{s,t'}^T$, and $\Sigma_0^{-1}$ are Hermitian matrices, giving us the first inequality by applicaiton of Weyl's inequality.
The last inequality first uses linearity of trace, and $tr(A_{s,t'}A_{s,t'}^T) = tr(A_{s,t'}^TA_{s,t'})\leq 1$, by Assumption~\ref{a:bounded}.

Similarly, we derive the mutual information of the meta-parameter of $\theta$ given the history as follows
\begin{align*}
    I(\mu_{*}; H_{1:m}) 
    &= h(\mu_{*}) - h(\mu_{*}\mid H_{1:m})\\
    &= h(\mu_{*}) - \mathbb{E}[h_{m,n+1}(\mu_{*})]\\
    &= \tfrac{1}{2}\log(\det(2\pi e \Sigma_q)) - \mathbb{E}[ \tfrac{1}{2}\log(\det(2\pi e \hat{\Sigma}_{m+1}))]\\
    &= \tfrac{1}{2}\mathbb{E}[ \log(\det(\Sigma_q)\det(\hat{\Sigma}_{m+1}^{-1}))]\\
    &\leq \tfrac{d}{2}\log\left(1+ \frac{mn \lambda_1(\Sigma_q)}{n\lambda_{d}(\Sigma_0) + \sigma^2} \right)
\end{align*} 
For the final inequality above, we derive a history independent bounds in a similar manner.
\begingroup
\allowdisplaybreaks
\begin{align*}
    &\lambda_i(\hat{\Sigma}^{-1}_{m+1})
    \leq\lambda_i(\Sigma_q^{-1}) + \lambda_1\left(\sum_{s'=1}^{m}\left(\Sigma_0 + (\sum_{t'=1}^{n}\tfrac{A_{s',t'}A^T_{s',t'}}{\sigma^2})^{-1}\right)^{-1}\right)\\
    &
    \leq\lambda_i(\Sigma_q^{-1}) + \sum_{s'=1}^{m}\lambda_1\left(\left(\Sigma_0 + (\sum_{t'=1}^{n}\tfrac{A_{s',t'}A^T_{s',t'}}{\sigma^2})^{-1}\right)^{-1}\right)\\
    &\leq \frac{1}{\lambda_i(\Sigma_q)} +  \sum_{s'=1}^{m} \lambda_{d}^{-1}\left(\Sigma_0 + (\sum_{t'=1}^{n}\tfrac{A_{s',t'}A^T_{s',t'}}{\sigma^2})^{-1}\right)\\
     &\leq \frac{1}{\lambda_i(\Sigma_q)} +  \sum_{s'=1}^{m} \left(\lambda_{d}(\Sigma_0) + \lambda_{d}\left((\sum_{t'=1}^{n}\tfrac{A_{s',t'}A^T_{s',t'}}{\sigma^2})^{-1}\right)\right)^{-1}\\
    &\leq \frac{1}{\lambda_i(\Sigma_q)} +  \sum_{s'=1}^{m} \left(\lambda_{d}(\Sigma_0) + \lambda_{1}^{-1}(\sum_{t'=1}^{n}\tfrac{A_{s',t'}A^T_{s',t'}}{\sigma^2})\right)^{-1}\\
    &\leq \frac{1}{\lambda_i(\Sigma_q)} +  \sum_{s'=1}^{m} \left(\lambda_{d}(\Sigma_0) + \tfrac{\sigma^2}{n}\right)^{-1} 
    = \frac{1}{\lambda_i(\Sigma_q)} +  \frac{mn}{n\lambda_{d}(\Sigma_0) + \sigma^2}
\end{align*}
\endgroup
\end{proof}

\subsection{Proof of Lemma~\ref{lemm:concentration}}
\label{sec:proofoflemconc}
\concentration*
\begin{proof}
We next derive the confidence interval bounds, similar to Lu et al.~\cite{lu2019information}, for the reward $Y_{s,t}$ around it's mean conditioned on the history $H_{s-1}\cup H_{s,t-1}$. Let $\hat{\theta}_{s,t}$ be the parameter sampled by TS in task $s$ and round $t$, when we do not have forced exploration.
\begin{align*}
    \mathbb{E}_{s,t}[\Delta_{s,t}] &= \mathbb{E}_{s,t}[A_{s,*}^T\theta_{s,*} - A_{s,t}^T\theta_{s,*}]
    = \mathbb{E}_{s,t}[A_{s,t}^T\hat{\theta}_{s,t} - A_{s,t}^T\theta_{s,*}]
\end{align*}
The last equality holds as for Thompson sampling  ($\stackrel{d}{=}$ denotes equal distribution)
$$
A_{s,*}^T\theta_{s,*}\mid H_{1:s,t} \stackrel{d}{=} A_{s,t}^T\hat{\theta}_{s,t}\mid H_{1:s,t}.
$$

When for task $s$ and round $t$ we have  forced exploration the bound is given as 
\begin{align*}
    \mathbb{E}_{s,t}[\Delta_{s,t}] &= \mathbb{E}_{s,t}[A_{s,t}^T\hat{\theta}_{s,t} - A_{s,t}^T\theta_{s,*}] + \mathbb{E}_{s,t}[A_{s,*}^T\theta_{s,*} - A_{s,t}^T\hat{\theta}_{s,t}]\\
    &\leq \mathbb{E}_{s,t}[A_{s,t}^T\hat{\theta}_{s,t} - A_{s,t}^T\theta_{s,*}] + 2 \mathbb{E}_{s,t}[ \max_{a\in \mathcal{A}} |a^T \theta_{s,*}|]\\
     &\leq \mathbb{E}_{s,t}[A_{s,t}^T\hat{\theta}_{s,t} - A_{s,t}^T\theta_{s,*}] + 2 \mathbb{E}_{s,t}[  \|\theta_{s,*}\|_2].
\end{align*}
In the second last inequality we use the fact that $\theta_{s,*}\mid H_{1:s,t} \stackrel{d}{=} \hat{\theta}_{s,t}\mid H_{1:s,t}$.

Recall $Y_{s,t}(a)$ denote the reward obtained by taking action $a$ in task $s$ and round $t$.  Also recall that $\hat{\theta}_{s,t}\mid H_{1:s,t}\sim \mathcal{N}(\hat{\mu}_{s,t}, \hat{\Sigma}_{s,t})$.
Let us consider the set 
$$\Theta_{s,t} = \{\theta: \mid a^T\theta - a^T\hat{\theta}_{s,t}\mid \leq \tfrac{\Gamma_{s,t}}{2}\sqrt{I_{s,t}(\theta_{s,*}; a, Y_{s,t}(a))}, \forall a\in \mathcal{A}\}.$$ 

The history dependent conditional mutual entropy of $\theta_{s,*}$ given the history $H_{s-1}\cup H_{s,t}$   (not $\mu_*$) (which will be useful in deriving concentration bounds) as 
\begin{align*}
    I_{s,t}(\theta_{s,*};A_{s,t},Y_{s,t})
    &= h_{s,t}(\theta_{s,*}) - h_{s,t+1}(\theta_{s,*})\\
    &= \tfrac{1}{2}\log(\det(2\pi e (\hat{\Sigma}_{s,t-1}))) - \tfrac{1}{2}\log(\det(2\pi e \hat{\Sigma}_{s,t}))\\
    &= \tfrac{1}{2}\log(\det(\hat{\Sigma}_{s,t-1}\hat{\Sigma}^{-1}_{s,t}))\\
    &= \tfrac{1}{2}\log\left(\det\left(I+\hat{\Sigma}_{s,t-1}\tfrac{A_{s,t}A_{s,t}^T}{\sigma^2}\right)\right)\\
    &= \tfrac{1}{2}\log\left(\det\left(1+\tfrac{A_{s,t}^T\hat{\Sigma}_{s,t-1}A_{s,t}}{\sigma^2}\right)\right)
\end{align*} The last step above uses Matrix determinant lemma.\footnote{Matrix determinant lemma states that for an invertible square matrix $A$,  and vectors $u$ and $v$ 
$\det \left(A + uv^T\right)=\left(1+ v^T A^{-1} u \right)\,\det \left(A\right).$ We use $A = I$, $u = \hat{\Sigma}_{s,t-1}A_{s,t}$, and $v = A_{s,t}/\sigma^2$.}
Recall that $\sigma^2_{\max}(\hat{\Sigma}_{s,t})  =\max_{a\in \mathcal{A}} a^T \hat{\Sigma}_{s,t} a$ for all $s\leq m$ and $t\leq n$. For $\delta\in (0,1]$, let 
$$\Gamma_{s,t} = 4\sqrt{\frac{\sigma^2_{\max}(\hat{\Sigma}_{s,t-1})}{\log(1+\sigma^2_{\max}(\hat{\Sigma}_{s,t-1})/\sigma^2 )}\log(\tfrac{4|\mathcal{A}|}{\delta}}).$$ 
Now  it follows from Lu et al.~\cite{lu2019information} Lemma 5 that for the $\Gamma_{s,t}$ defined as above we have 
$$\mathbbm{P}_{s,t}(\hat{\theta}_{s,t} \in \Theta_{s,t}) \geq 1- \delta/2.$$

We continue with the regret decomposition as 
\begin{align*}
    &\mathbb{E}_{s,t}[\Delta_{s,t}]\\
    &= \mathbb{E}_{s,t}\left[\mathbbm{1}(\hat{\theta}_{s,t},\theta_{s,*}\in \Theta_{s,t})\left(A_{s,t}^T\hat{\theta}_{s,t} - A_{s,t}^T\theta_{s,*}\right)\right] 
    + \mathbb{E}_{s,t}\left[\mathbbm{1}^c(\hat{\theta}_{s,t},\theta_{s,*}\in \Theta_{s,t})\left(A_{s,t}^T\hat{\theta}_{s,t} - A_{s,t}^T\theta_{s,*}\right)\right]\\
    &\leq \mathbb{E}_{s,t}\left[\sum_{a\in \mathcal{A}}\mathbbm{1}(A_{s,t}=a)\Gamma_{s,t}\sqrt{I_{s,t}(\theta_{s,*}; a, Y_{s,t}(a))}\right] \\
    &+ \sqrt{\mathbbm{P}_{s,t}(\hat{\theta}_{s,t} \text{ or } \theta_{s,*}\notin \Theta_{s,t})\mathbb{E}_{s,t}\left[\left(A_{s,t}^T\hat{\theta}_{s,t} - A_{s,t}^T\theta_{s,*}\right)^2\right]}\\
    &\leq \Gamma_{s,t}\sqrt{I_{s,t}(\theta_{s,*}; A_{s,t}Y_{s,t})} + \sqrt{\mathbbm{P}_{s,t}(\hat{\theta}_{s,t} \text{ or } \theta_{s,*}\notin \Theta_{s,t})} \max\limits_{a\in \mathcal{A}} \sqrt{\mathbb{E}_{s,t}\left[\left(a^T\hat{\theta}_{s,t} - a^T\theta_{s,*}\right)^2\right]} \\
    &\leq \Gamma_{s,t}\sqrt{I_{s,t}(\theta_{s,*}; A_{s,t}Y_{s,t})} + \underbrace{ \sqrt{2\delta \sigma^2_{\max}(\hat{\Sigma}_{s,t-1})}}_{\epsilon_{s,t}}
\end{align*}
\begin{itemize}
    \item[-] The left side term in the first inequality uses the definition of $\Theta_{s,t}$. The right side term in the first inequality holds due to Cauchy–Schwarz. In particular, we use $\mathbb{E}[XY] \leq \sqrt{\mathbb{E}[X^2]\mathbb{E}[Y^2]}$ with $X=\mathbbm{1}^c(\hat{\theta}_{s,t},\theta_{s,*}\in \Theta_{s,t})$ and $Y=\left(A_{s,t}^T\hat{\theta}_{s,t} - A_{s,t}^T\theta_{s,*}\right)$. 
    \item[-] The left side term in the second inequality follows steps similar to proof of Lemma 3 in Lu et al.~\cite{lu2019information}.
The right side term in the second inequality maximizes over the possible actions (we can take the max out of the expectation as action $A_{s,t}$ is a function of history upto task $s$, and round $t-1$).  The last inequality follows from the following derivation  
\begin{align*}
  &\mathbb{E}_{s,t}\left[\left(a^T\hat{\theta}_{s,t} - a^T\theta_{s,*}\right)^2\right]\\
  &\leq \mathbb{E}_{s,t}\left[a^T\left((\hat{\theta}_{s,t}- \mu_{s,t-1}) - (\theta_{s,*}- \mu_{s,t-1})\right)^2\right]\\
  &\leq a^T\left(\mathbb{E}_{s,t}\left[(\hat{\theta}_{s,t}- \mu_{s,t-1})(\hat{\theta}_{s,t}- \mu_{s,t-1})^T \right] + \mathbb{E}_{s,t}\left[(\theta_{s,*}- \mu_{s,t-1})(\theta_{s,*}- \mu_{s,t-1})^T \right]\right)a\\
  & \leq 2 a^T \hat{\Sigma}_{s,t-1} a \leq 2 \sigma^2_{\max}(\hat{\Sigma}_{s,t-1})
\end{align*}
\end{itemize}

This conclude the proof of the first part.

We first claim that $\sigma^2_{\max}(\hat{\Sigma}_{s,t})\leq \lambda_1(\hat{\Sigma}_{s,t})$. Indeed,  as $\|a\|_2\leq 1$, we have $$\sigma^2_{\max}(\hat{\Sigma}_{s,t}) = \max_{a\in \mathcal{A}} a^T \hat{\Sigma}_{s,t} a \leq \max_{a\in \mathcal{A}} a^T\lambda_1(\hat{\Sigma}_{s,t}) a\leq \lambda_1(\hat{\Sigma}_{s,t}).$$  

Furthermore, $\lambda_1(\hat{\Sigma}_{s,t})$ decreases with $s$ and $t$ (precisely with $n(s-1) + t$).
To show this we use 
\begin{align*}
\lambda_1(\hat{\Sigma}_{s,t}) &= \lambda_{d}^{-1}(\hat{\Sigma}_{s,t}^{-1}) \\
&= \lambda_{d}^{-1}\left( (\Sigma_0+\hat{\Sigma}_s)^{-1} + \sum_{t'=1}^{t}\tfrac{A_{s,t'}A^T_{s,t'}}{\sigma^2}\right)\\
&\leq \lambda_{d}^{-1}\left((\Sigma_0+\hat{\Sigma}_s)^{-1} + \sum_{t'=1}^{t-1}\tfrac{A_{s,t'}A^T_{s,t'}}{\sigma^2}\right) \\
&= \lambda_{d}^{-1}(\hat{\Sigma}_{s,t-1}^{-1}) 
= \lambda_1(\hat{\Sigma}_{s,t-1}) 
\end{align*}
The inequality holds due to Weyl's inequality and $\tfrac{A_{s,t}A^T_{s,t}}{\sigma^2}$ being a PSD matrix. In particular, we have $\lambda_{d}(A+B) \geq \lambda_{d}(A) + \lambda_{d}(B)$, given $A$ and $B$ are Hermitian. Thus 
$$\lambda^{-1}_{d}(A+B) \leq (\lambda_{d}(A) + \lambda_{d}(B))^{-1} \leq \lambda^{-1}_{d}(A).$$

Recall in each task $s$, due to forced exploration, we have $\lambda_d(\sum_{t'=1}^{n}\tfrac{A_{s,t'}A^T_{s,t'}}{\sigma^2}) \geq \tfrac{\eta}{\sigma^2}$, where $\eta$ is the forced exploration constant. 
We now prove an upper bound for the term $\lambda_1(\hat{\Sigma}_{s})$ independent of action sequences.
\begin{align*}
&\lambda_1(\Sigma_0+\hat{\Sigma}_{s}) - \lambda_1(\Sigma_0) \leq \lambda_1(\hat{\Sigma}_{s}) = \lambda_{d}^{-1}(\hat{\Sigma}_{s}^{-1}) \\
&=\lambda_{d}^{-1}\left(\Sigma_q^{-1} + \sum_{s'=1}^{s-1} \left(\sum_{t'=1}^{n}\tfrac{A_{s',t'}A^T_{s',t'}}{\sigma^2}\right) \left(\Sigma_0^{-1} + \sum_{t'=1}^{n}\tfrac{A_{s',t'}A^T_{s',t'}}{\sigma^2}\right)^{-1} \Sigma_0^{-1} \right)\\
&\leq\left(\lambda_{d}(\Sigma_q^{-1}) + \sum_{s'=1}^{s-1}\lambda_{d}\left(\left(\Sigma_0 + (\sum_{t'=1}^{n}\tfrac{A_{s',t'}A^T_{s',t'}}{\sigma^2})^{-1}\right)^{-1}\right)\right)^{-1}\\
&\leq\left(\lambda_{d}(\Sigma_q^{-1}) + \sum_{s'=1}^{s-1}\left(\lambda_{1}(\Sigma_0) + \lambda_{1}\left((\sum_{t'=1}^{n}\tfrac{A_{s',t'}A^T_{s',t'}}{\sigma^2})^{-1}\right)\right)^{-1}\right)^{-1}\\
&\leq \left(\lambda_{1}^{-1}(\Sigma_q) + s (\lambda_{1}(\Sigma_0) + \sigma^2/\eta)^{-1} \right)^{-1}
\end{align*}
In the above derivation, we use the Weyl's inequalities multiple times. Note the direction of inequality should be $\leq$ if there are even number of inverses, whereas it should be $\geq$ if there are an odd number of inverses associated. The first inequality uses the inequality $\lambda_{d}(\sum_i A_i) \geq \sum_i\lambda_{d}(A_i)$ given all the matrices $A_i$-s are Hermitian. The second inequality similarly uses $\lambda_{1}(\sum_i A_i) \leq \sum_i\lambda_{1}(A_i)$ given all the matrices $A_i$-s are Hermitian. The final inequality uses the minimum eigenvalue bound when forced exploration is used.

This concludes the second part of the proof, in particular 
$$
\sigma^2_{\max}(\hat{\Sigma}_{s,t}) \leq \lambda_1(\Sigma_0 + \hat{\Sigma}_s) \leq \lambda_1(\Sigma_{0})\left(1+\tfrac{\lambda_{1}(\Sigma_q)(1+  \tfrac{\sigma^2/\eta }{\lambda_{1}(\Sigma_0)})}{\lambda_{1}(\Sigma_0) + \sigma^2/ \eta  +  s\lambda_{1}(\Sigma_q)}\right).
$$

\end{proof}

\subsection{Proof of Theorem~\ref{thm:linearinfo}}
\label{sec:proofofthmlininfo}
\linearinfo*
\begin{proof}
We note that, for each $s$, we can bound w.p. $1$
$$\Gamma_{s,t} \leq 4\sqrt{\frac{\lambda_1(\Sigma_{0})\left(1+\tfrac{\lambda_{1}(\Sigma_q)(1+  \tfrac{\sigma^2 / \eta}{\lambda_{1}(\Sigma_0)})}{\lambda_{1}(\Sigma_0) + \sigma^2 / \eta +  s\lambda_{1}(\Sigma_q)}\right)}{\log\left(1+\tfrac{\lambda_1(\Sigma_0)}{\sigma^2}\left(1+\tfrac{\lambda_{1}(\Sigma_q)(1+  \tfrac{\sigma^2 / \eta}{\lambda_{1}(\Sigma_0)})}{\lambda_{1}(\Sigma_0) + \sigma^2 / \eta +  s\lambda_{1}(\Sigma_q)}\right)\right)}\log(4|\mathcal{A}|/\delta)}.$$
This is true by using the upper bounds on $\sigma_{max}^2(\hat{\Sigma}_{s,t})$ in Lemma~\ref{lemm:concentration}, and  because the function $\sqrt{x/\log(1+ax)}$ for $a>0$ increases with $x$. Similarly, we have  
$$\epsilon_{s,t} \leq \sqrt{\delta\lambda_1(\Sigma_{0})\left(1+\tfrac{\lambda_{1}(\Sigma_q)(1+  \tfrac{\sigma^2 / \eta}{\lambda_{1}(\Sigma_0)})}{\lambda_{1}(\Sigma_0) + \sigma^2 / \eta +  s\lambda_{1}(\Sigma_q)}\right)}.$$
Therefore, we have the bounds $\Gamma_{s,t} \leq \Gamma_s$ w.p. $1$ for all $s$ and $t$ by using appropriate $s$, and by setting $s=0$ we obtain $\Gamma$.

We are now at a position to provide the final regret bound. For any  $\delta > 0$
\begingroup
\allowdisplaybreaks
\begin{align*}
  &R(m, n) \leq  \Gamma \sqrt{ mnI(\mu_{*}; H_{m})} +  \mathbb{E}\sum_{s}\Gamma_{s}\sqrt{n I(\theta_{s,*}; H_{s}\mid \mu_{*}, H_{s-1})} + \mathbb{E}\sum_{s,t}\epsilon_{s,t}\\
  &\leq 4\underbrace{\sqrt{\frac{\lambda_1(\Sigma_{q}) + \lambda_1(\Sigma_{0})}{\log(1+(\lambda_1(\Sigma_q)+\lambda_1(\Sigma_{0}))/\sigma^2 )}\log(4\mid \mathcal{A}\mid /\delta)} 
  \sqrt{mn \tfrac{d}{2}\log\left(1+  \frac{mn \lambda_1(\Sigma_q)}{n\lambda_{d}(\Sigma_0) + \sigma^2}\right)}}_{\text{regret for learning $\mu$}}\\
  &+ \sum_{s=1}^{m} 4\sqrt{\frac{\lambda_1(\Sigma_{0}){\color{blue}\left(1+\tfrac{\lambda_{1}(\Sigma_q)(1+  \tfrac{\sigma^2 / \eta}{\lambda_{1}(\Sigma_0)})}{\lambda_{1}(\Sigma_0) + \sigma^2 / \eta +  s\lambda_{1}(\Sigma_q)}\right)}}{\log\left(1+\tfrac{\lambda_1(\Sigma_0)}{\sigma^2}{\color{blue}\left(1+\tfrac{\lambda_{1}(\Sigma_q)(1+  \tfrac{\sigma^2 / \eta}{\lambda_{1}(\Sigma_0)})}{\lambda_{1}(\Sigma_0) + \sigma^2 / \eta +  s\lambda_{1}(\Sigma_q)}\right)}\right)}\log(4|\mathcal{A}|/\delta)} \sqrt{n \tfrac{d}{2}\log\left(1 + n\frac{\lambda_1(\Sigma_0)}{\sigma^2}\right) } \\
  &+ \sum_{s=1}^{m} n \sqrt{2\delta\lambda_1(\Sigma_{0}){\color{blue} \left(1+\tfrac{\lambda_{1}(\Sigma_q)(1+  \tfrac{\sigma^2 / \eta}{\lambda_{1}(\Sigma_0)})}{\lambda_{1}(\Sigma_0) + \sigma^2 / \eta +  s\lambda_{1}(\Sigma_q)}\right)}}  \\
  & + 2 \left(\sum_{s=1}^{m}\sum_{t=1}^{n} \mathcal{E}_{s,t}\right)\mathbb{E}[\|\theta_{s,*}\|_2]\\
  &\leq 4\underbrace{\sqrt{\frac{\lambda_1(\Sigma_{q}) + \lambda_1(\Sigma_{0})}{\log(1+(\lambda_1(\Sigma_q)+\lambda_1(\Sigma_{0}))/\sigma^2 )}\log(4|\mathcal{A}|/\delta)} 
  \sqrt{mn \tfrac{d}{2}\log\left(1+  \frac{mn \lambda_1(\Sigma_q)}{n\lambda_{d}(\Sigma_0) + \sigma^2}\right)}}_{\text{regret for learning $\mu$}}\\
  &+  \left(m+ {\color{blue} \tfrac{1}{2 \lambda_{1}(\Sigma_0)} \sum_{s=1}^{m} \tfrac{\lambda_{1}(\Sigma_q)( \lambda_{1}(\Sigma_0)+  \sigma^2/ \eta )}{\lambda_{1}(\Sigma_0) + \sigma^2/\eta  +  s\lambda_{1}(\Sigma_q)}}\right) \times \\  
  &\left(4 \sqrt{\frac{\lambda_1(\Sigma_{0})}{\log\left(1+\tfrac{\lambda_1(\Sigma_0)}{\sigma^2}\right)}\log(4|\mathcal{A}|/\delta)} \sqrt{n \tfrac{d}{2}\log\left(1 + n\frac{\lambda_1(\Sigma_0)}{\sigma^2}\right) } 
  +  n \sqrt{2\lambda_1(\Sigma_0) \delta} \right) \\
  & + 2 md\sqrt{\|\mu_{q}\|^2_2 + \mathrm{tr}(\Sigma_q + \Sigma_0)}\\
  &\leq 4\underbrace{\sqrt{\frac{\lambda_1(\Sigma_{q}) + \lambda_1(\Sigma_{0})}{\log(1+(\lambda_1(\Sigma_q)+\lambda_1(\Sigma_{0}))/\sigma^2 )}\log(4|\mathcal{A}|/\delta)} 
  \sqrt{mn \tfrac{d}{2}\log\left(1+  \frac{mn \lambda_1(\Sigma_q)}{n\lambda_{d}(\Sigma_0) + \sigma^2}\right)}}_{\text{regret for learning $\mu$}}\\
  &+  \left(m+ {\color{blue}  (1+  \tfrac{\sigma^2/\eta }{\lambda_{1}(\Sigma_0)})\log(m)}\right)\times \\
  & \left( 4\sqrt{\frac{\lambda_1(\Sigma_{0})}{\log\left(1+\tfrac{\lambda_1(\Sigma_0)}{\sigma^2}\right)}\log(4|\mathcal{A}|/\delta)} \sqrt{n \tfrac{d}{2}\log\left(1 + n\frac{\lambda_1(\Sigma_0)}{\sigma^2}\right) } 
  +  n \sqrt{2\lambda_1(\Sigma_0) \delta} \right) \\
  & + 2 md \sqrt{\|\mu_{q}\|^2_2 + \mathrm{tr}(\Sigma_q + \Sigma_0)} 
\end{align*}
\endgroup
The first inequality follows by substituting the appropriate bounds. The second inequality first removes the part highlighted in blue (which is positive) inside the logarithm, and then uses the fact that $\sqrt{1+x}\leq 1+ x/2$ for all $x\geq 1$. We also use $\mathbb{E}[\|\theta_{s,*}\|_2] = \sqrt{\|\mu_{q}\|^2_2 + \mathrm{tr}(\Sigma_q + \Sigma_0)}$ and the fact that \adats explores for $d$ rounds in each task. The final inequality replaces the summation by an integral over $s$ and derives the closed form.
\end{proof}

\clearpage

\section{Proofs for \cref{sec:mutual semi-bandit}: Semi-Bandit}
\label{sec:appendix-semi-bandit}

In this section, we expand the linear bandit analysis to handle multiple inputs as is common in semi-bandit feedback in combinatorial optimizations. Furthermore, as the rewards for each base-arm are  independent for each arm we can improve our analysis providing tighter prior dependent bounds. The center piece of the proof is again the mutual information separation between the meta-parameter and the parameter in each stage. However, in the regret decomposition we sum the confidence intervals of different arms separately. 

 \paragraph{Notations:}
We recall the necessary notations for the proof of regret upper bound in the semi-bandit setting. For each arm $k$ the meta-parameter $\mu_{*,k} \sim \mathcal{N}(\mu_{q,k}, \sigma^2_{q,k})$. The mean reward at the beginning for each task $s$, for an arm $k$ is sampled from $\mathcal{N}(\mu_{*,k}, \sigma^2_{0,k})$.
 The reward realization of arm $k$ in round $t$ and task $s$ is denoted by $Y_{s,t}(k) = \theta_{s,*}(k) + w_{s,t}(k)$ where  be the reward of the arm $k$ at time $t$ (arm $k$ need not be played during time $t$). Then the reward obtained for the action $a$ (a subset of $[K]$ with size at most $L$) is given as $Y_{s,t}(a) = \sum_{k\in a} Y_{s,t}(k)$.
Let, for each task $s$ and round $t$,  the action (a subset of $[K]$) be $A_{s,t}$, and  the observed reward vector be $Y_{s,t} = (Y_{s,t}(k): k \in A_{s,t})$. 

The linear bandits notations for history, conditional probability, and conditional expectation carry forward to semi-bandits. Additionally, let us denote the number of pulls for arm $k$, in phase $s$, upto and excluding round $t$ as $N_{s,t}(k)$. The total number of pulls for arm $k$ in task $s$ is denoted as $N_{s}(k) = N_{s,n+1}(k)$, and up to and including task $s$ is denoted by $N_{1:s}(t)$.

\paragraph{Mutual Information in Semi-bandits:}
The history dependent and independent mutual information terms are defined analogously, but we are now interested in the terms for each arms separately. 
For any arm $k\in [K]$ and action $a\subseteq [K]$, the history dependent mutual information terms of interest are  
\begin{align*}
    & I_{s,t}(\theta_{s,*}(k); A_{s,t}, Y_{s,t}\mid \mu_{*,k}) = \mathbb{E}_{s,t}\left[\frac{\mathbb{P}_{s,t}(\theta_{s,*}(k), A_{s,t}, Y_{s,t}\mid \mu_{*,k})}{\mathbb{P}_{s,t}(\theta_{s,*}(k)\mid \mu_{*,k}) \mathbb{P}_{s,t}(A_{s,t}, Y_{s,t}\mid \mu_{*,k})}\right]\\
    & I_{s,t}(\theta_{s,*}(k); a, Y_{s,t}(a)\mid \mu_{*,k}) = \mathbb{E}_{s,t}\left[\frac{\mathbb{P}_{s,t}(\theta_{s,*}(k),  Y_{s,t}(k)\mid \mu_{*,k}, A_{s,t}=a)}{\mathbb{P}_{s,t}(\theta_{s,*}(k)\mid \mu_{*,k}, A_{s,t}=a) \mathbb{P}_{s,t}( Y_{s,t}(a)\mid \mu_{*,k}, A_{s,t}=a)}\right]\\
    & I_{s,t}(\mu_{*,k}; A_{s,t}, Y_{s,t}) = \mathbb{E}_{s,t}\left[\frac{\mathbb{P}_{s,t}(\mu_{*,k}, A_{s,t}, Y_{s,t})}{\mathbb{P}_{s,t}(\mu_{*,k}) \mathbb{P}_{s,t}(A_{s,t}, Y_{s,t})}\right]\\
    & I_{s,t}(\mu_{*,k}; a, Y_{s,t}(a)) = \mathbb{E}_{s,t}\left[\frac{\mathbb{P}_{s,t}(\mu_{*,k},  Y_{s,t}(k) \mid A_{s,t}=a)}{\mathbb{P}_{s,t}(\theta_{s,*}(k) \mid  A_{s,t}\mathtt{=}a) \mathbb{P}_{s,t}( Y_{s,t}(a) \mid A_{s,t}\mathtt{=}a)}\right]
\end{align*}
The history mutual information independent terms of interest are 
\begin{gather*}
  I(\theta_{s,*}(k); A_{s,t}, Y_{s,t}\mid \mu_{*,k}, H_{1:s, t})  =  \mathbb{E}[I_{s,t}(\theta_{s,*}(k); A_{s,t}, Y_{s,t}\mid \mu_{*,k})], \\
  I(\mu_{*,k}; A_{s,t}, Y_{s,t}\mid H_{1:s, t}) = \mathbb{E}[I_{s,t}(\mu_{*,k}; A_{s,t}, Y_{s,t})].
\end{gather*}

We now derive the mutual information of $\theta_{s,*}(k)$ and events in task $s$, i.e. $H_s$, given $\mu_{*,k}$, and history upto and excluding task $s$, i.e. $H_{1:s-1}$.
\begin{lemma}\label{lemm:chainrule semibandit}
For any $k\in [K]$, $s\in [m]$, and $H_{1:s,t}$ adapted sequence of actions $((A_{s,t})_{t=1}^{n})_{s=1}^{m}$, the following statements hold for a $(K,L)$-Semi-bandit
\begin{gather*}
    I(\theta_{s,*}(k); H_s\mid \mu_{*,k}, H_{1:s-1}) = \mathbb{E}\sum_{t}\mathbb{P}_{s,t}(k\in A_{s,t}) I_{s,t}(\theta_{s,*}(k); k, Y_{s,t}(k)\mid\mu_{*,k})\, \\
    I(\mu_{*,k} \mid H_{1:m}) = \mathbb{E}\sum_{s}\sum_{t}\mathbb{P}_{s,t}(k\in A_{s,t}) I_{s,t}(\mu_{*,k}; k, Y_{s,t}(k)).
\end{gather*}
\end{lemma}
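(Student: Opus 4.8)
The plan is to reduce both identities to a per-round statement via the chain rule for conditional mutual information, and then to exploit two independence properties of the semi-bandit model: the action $A_{s,t}$ is conditionally independent of the task and meta parameters given the history, and the rewards factorize across arms. I would begin with the first identity. Unfolding $H_s = (A_{s,t}, Y_{s,t})_{t=1}^n$ round by round and applying the chain rule exactly as in the conditional decomposition recorded in the preliminaries gives
\[
  I(\theta_{s,*}(k); H_s\mid \mu_{*,k}, H_{1:s-1})
  = \mathbb{E}\sum_{t=1}^n I_{s,t}(\theta_{s,*}(k); A_{s,t}, Y_{s,t}\mid \mu_{*,k})\,,
\]
so it suffices to establish, almost surely for each $t$, the random identity
\[
  I_{s,t}(\theta_{s,*}(k); A_{s,t}, Y_{s,t}\mid \mu_{*,k})
  = \mathbb{P}_{s,t}(k\in A_{s,t})\, I_{s,t}(\theta_{s,*}(k); k, Y_{s,t}(k)\mid \mu_{*,k})\,.
\]

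Next I would split the left-hand side with a second application of the chain rule (\cref{prop:chain}) into $I_{s,t}(\theta_{s,*}(k); A_{s,t}\mid \mu_{*,k}) + I_{s,t}(\theta_{s,*}(k); Y_{s,t}\mid A_{s,t}, \mu_{*,k})$. The first term vanishes because, conditioned on $H_{1:s,t}$, the action $A_{s,t}$ is a function of the history and of the internal posterior-sampling randomization of \adats, and is therefore conditionally independent of $\theta_{s,*}(k)$. For the second term I would condition on $A_{s,t}=a$ and use the product structure of the rewards across arms. When $k\notin a$, the observed vector $Y_{s,t}=(Y_{s,t}(j))_{j\in a}$ is a function only of $\{\theta_{s,*}(j)\}_{j\in a}$, each independent of $\theta_{s,*}(k)$ given $\mu_{*,k}$ (the per-arm priors are independent, and $\mu_{*,k}$ constrains only arm $k$), so that contribution is zero. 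When $k\in a$, the same independence lets me drop the coordinates $j\neq k$, leaving $I_{s,t}(\theta_{s,*}(k); Y_{s,t}(k)\mid A_{s,t}=a, \mu_{*,k})$; since the channel $Y_{s,t}(k)=\theta_{s,*}(k)+w_{s,t}(k)$ does not depend on which companion arms lie in $a$, this value is common to all $a\ni k$ and equals the quantity denoted $I_{s,t}(\theta_{s,*}(k); k, Y_{s,t}(k)\mid \mu_{*,k})$. Averaging over $a$ then collapses the sum to $\mathbb{P}_{s,t}(k\in A_{s,t})$ times this value, which is the desired random identity.

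For the second identity I would argue identically, starting from the unconditional chain rule $I(\mu_{*,k}; H_{1:m}) = \mathbb{E}\sum_{s=1}^m\sum_{t=1}^n I_{s,t}(\mu_{*,k}; A_{s,t}, Y_{s,t})$ recorded in the preliminaries, and then repeating the two-step per-round reduction with $\mu_{*,k}$ playing the role of $\theta_{s,*}(k)$: the action term drops by the same independence-of-the-policy argument, the reward coordinates $j\neq k$ drop because they are independent of $\mu_{*,k}$, and the surviving contribution on the event $\{k\in A_{s,t}\}$ equals $I_{s,t}(\mu_{*,k}; k, Y_{s,t}(k))$.

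The main obstacle I expect is making the two independence steps rigorous at the level of the history-dependent (random) mutual informations rather than their expectations. The delicate point is that which reward coordinates are even observed is itself determined by the history-dependent random action $A_{s,t}$, so I must verify that, conditioned on $(\mu_{*,k}, A_{s,t}, H_{1:s,t})$, the pair $(\theta_{s,*}(k), Y_{s,t}(k))$ is independent of the other arms' parameters and rewards — this is precisely the cross-arm factorization of the graphical model, but it must be invoked \emph{after} conditioning on the random action. The companion ``common value across $a\ni k$'' step, asserting that the information revealed about arm $k$ is unaffected by which arms are pulled alongside it, is where the semi-bandit feedback structure (individual per-arm rewards) does the real work.
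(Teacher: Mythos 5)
Your proposal is correct and follows essentially the same route as the paper's proof: both reduce via the chain rule to per-round random mutual informations, condition on the realized action $A_{s,t}=a$, and use the cross-arm independence of rewards (given $\mu_{*,k}$ and the history) to kill the contribution of arms other than $k$, collapsing the sum to $\mathbb{P}_{s,t}(k\in A_{s,t})$ times the per-arm term. The only cosmetic difference is that you make the vanishing of the pure action term $I_{s,t}(\theta_{s,*}(k); A_{s,t}\mid\mu_{*,k})$ explicit, whereas the paper absorbs it implicitly when passing to the average over actions.
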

\begin{proof}
The proof follows by the application of the chain rule of mutual information, and noticing that the rounds when an arm $k$ was not played the mutual information $I_{s,t}(\theta_{s,*}(k); k, Y_{s,t}(k)\mid \mu_{*,k})$ and $I_{s,t}(\mu_{*,k}; k, Y_{s,t}(k))$ both are zero. This is true because no information is gained about the parameters $\theta_{s,*}(k)$ and $\mu_{*,k}$ in those rounds.

\begin{align*}
 &I(\theta_{s,*}(k); H_s\mid \mu_{*,k}, H_{1:s-1})\\
&=  \mathbb{E}\sum_{t} I(\theta_{s,*}(k); A_{s,t}, Y_{s,t}\mid\mu_{*,k}, H_{1:s-1}, H_{s,t-1} )\\
&= \mathbb{E}\sum_{t} I_{s,t}(\theta_{s,*}(k); A_{s,t}, Y_{s,t}\mid\mu_{*,k})\\
&= \mathbb{E}\sum_{t} \sum_{a \in \mathcal{A}} \mathbb{P}_{s,t}(A_{s,t} = a)  I_{s,t}(\theta_{s,*}(k); a, Y_{s,t}(a)\mid\mu_{*,k})\\
&= \mathbb{E}\sum_{t} \sum_{a \in \mathcal{A}} \mathbb{P}_{s,t}(A_{s,t} = a) \mathbbm{1}(k\in a) I_{s,t}(\theta_{s,*}(k); k, Y_{s,t}(k)\mid\mu_{*,k})  \\
&+\mathbb{E}\sum_{t} \sum_{a \in \mathcal{A}} \mathbb{P}_{s,t}(A_{s,t} = a)  I_{s,t}(\theta_{s,*}(k); a\setminus k, Y_{s,t}(a\setminus k)\mid\mu_{*,k}, (k,Y_{s,t}(k)))\\
&= \mathbb{E}\sum_{t}\mathbb{P}_{s,t}(k\in A_{s,t}) I_{s,t}(\theta_{s,*}(k); k, Y_{s,t}(k)\mid\mu_{*,k}) 
\end{align*}
Here, $a\setminus k$ implies the action with arm $k$ removed from subset $a$. Due to the independence of the reward of each arm, for any fixed action $a$, $\theta_{s,*}(k) \perp (a\setminus k, Y_{s,t}(a\setminus k))$ conditioned on $\mu_{*,k}$, $(k,Y_{s,t}(k))$, and history $H_{1:s,t}$. Therefore,  we have 
$$
I_{s,t}(\theta_{s,*}(k); a\setminus k, Y_{s,t}(a\setminus k)\mid\mu_{*,k}, (k,Y_{s,t}(k))) = 0.
$$

A similar sequence of steps lead to 
$$
I(\mu_{*,k} \mid H_{1:m}) = \mathbb{E}\sum_{s}\sum_{t}\mathbb{P}_{s,t}(k\in A_{s,t}) I_{s,t}(\mu_{*,k}; k, Y_{s,t}(k)). 
$$
The above equalities develop the chain rules of mutual information for each of the arms separately, by leveraging the independence of the rewards per arms. 
\end{proof}

\paragraph{Per Task Regret Bound:} We derive the per task regret using the information theoretical confidence intervals while accounting for each arm separately. Let the posterior distribution of $\theta_{s,*}(k)$ at the beginning of round $t$ of task $s$ be $\cN(\hat{\mu}_{s,t}(k), \hat{\sigma}^2_{s,t}(k))$  for appropriate $\hat{\mu}_{s,t}(k)$ and $\hat{\sigma}^2_{s,t}(k)$ that depends on the history $H_{1:s,t}$, for all $k\in [K]$, $s\in [m]$, and $t\in [n]$. We will derive these terms or bounds on these terms later.   

\begin{lemma}\label{lemm:concentration semibandit} 
For an $H_{1:s,t}$ adapted sequence of actions $((A_{s,t})_{t=1}^{n})_{s=1}^{m}$, and any $\delta \in (0,1]$, the expected regret in round $t$ of stage $s$ in a $(K,L)$-Semi-bandit is bounded as  
\begin{gather}\label{a:info-semibandit}
    \mathbb{E}_{s,t}[\Delta_{s,t}] = \sum_{k\in [K]}\mathbb{P}_{s,t}(k \in A_{s,t})\left(\Gamma_{s,t}(k)\sqrt{I_{s,t}(\theta_{s,*}(k); k, Y_{s,t}(k))} + \sqrt{2\delta\tfrac{1}{K}\sigma_{s,t}^2(k)}\right),
\end{gather}
where $$\Gamma_{s,t}(k) =  4\sqrt{\frac{\hat{\sigma}^2_{s,t-1}(k)}{\log(1+\hat{\sigma}^2_{s,t-1}(k)/\sigma^2)}\log(\tfrac{4K}{\delta}}). $$
\end{lemma}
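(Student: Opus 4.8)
The plan is to prove \eqref{a:info-semibandit} by adapting the confidence-interval argument behind \cref{lemm:concentration} to an \emph{arm-by-arm} regret decomposition, exploiting that in a semi-bandit the rewards, and hence the posterior, factorize across the $K$ base arms. First I would record the Thompson-sampling identity: conditioned on $H_{1:s,t}$, the sampled vector $\hat\theta_{s,t}$ and the true $\theta_{s,*}$ are i.i.d.\ draws from the same (product-Gaussian) posterior, and $A_{s,t}=\argmax_{a\in\cA} r(a;\hat\theta_{s,t})$. Hence the joint laws of $(A_{s,*},\theta_{s,*})$ and $(A_{s,t},\hat\theta_{s,t})$ coincide given history, so $\mathbb{E}_{s,t}[r(A_{s,*};\theta_{s,*})]=\mathbb{E}_{s,t}[r(A_{s,t};\hat\theta_{s,t})]$. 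Since $r(a;\theta)=\sum_{k\in a}\theta(k)$, this gives the arm-decomposed round regret
\begin{align*}
  \mathbb{E}_{s,t}[\Delta_{s,t}]
  = \mathbb{E}_{s,t}\Big[\sum_{k\in[K]}\mathbbm{1}(k\in A_{s,t})\big(\hat\theta_{s,t}(k)-\theta_{s,*}(k)\big)\Big]\,,
\end{align*}
so the round regret splits into per-arm gaps weighted by membership of $k$ in the played super-arm.

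Next I would compute the one-dimensional mutual information. Because reward independence makes the posterior of $\theta_{s,*}(k)$ entering round $t$ a scalar Gaussian with variance $\hat\sigma^2_{s,t-1}(k)$, and a single observation of arm $k$ has Gaussian noise $\sigma^2$, the scalar analogue of the entropy-difference computation in the proof of \cref{lemm:concentration} yields $I_{s,t}(\theta_{s,*}(k);k,Y_{s,t}(k))=\tfrac12\log(1+\hat\sigma^2_{s,t-1}(k)/\sigma^2)$. I would then define the per-arm confidence set $\Theta_{s,t}(k)=\{\theta:\,|\theta(k)-\hat\mu_{s,t-1}(k)|\le\tfrac12\Gamma_{s,t}(k)\sqrt{I_{s,t}(\theta_{s,*}(k);k,Y_{s,t}(k))}\}$ and invoke the Gaussian tail bound (Lemma~5 of \citet{lu2019information}) at confidence level $\delta/(2K)$ for \emph{each} arm. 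With the stated $\Gamma_{s,t}(k)=4\sqrt{\hat\sigma^2_{s,t-1}(k)\log(4K/\delta)/\log(1+\hat\sigma^2_{s,t-1}(k)/\sigma^2)}$, each of the events $\{\hat\theta_{s,t}(k)\in\Theta_{s,t}(k)\}$ and $\{\theta_{s,*}(k)\in\Theta_{s,t}(k)\}$ then holds with probability at least $1-\delta/(2K)$; the extra $\log K$ inside $\Gamma_{s,t}(k)$ is exactly the price of the union bound over the $K$ arms.

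I would then split each per-arm term over the event that both $\hat\theta_{s,t}(k)$ and $\theta_{s,*}(k)$ lie in $\Theta_{s,t}(k)$ and its complement. On the good event the triangle inequality gives $|\hat\theta_{s,t}(k)-\theta_{s,*}(k)|\le\Gamma_{s,t}(k)\sqrt{I_{s,t}(\theta_{s,*}(k);k,Y_{s,t}(k))}$, which is deterministic given $H_{1:s,t}$, so the indicator $\mathbbm{1}(k\in A_{s,t})$ factors out and its contribution is at most $\mathbb{P}_{s,t}(k\in A_{s,t})\,\Gamma_{s,t}(k)\sqrt{I_{s,t}(\theta_{s,*}(k);k,Y_{s,t}(k))}$. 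On the complement, whose probability is at most $\delta/K$ by the union bound, I would apply Cauchy--Schwarz with the second-moment bound $\mathbb{E}_{s,t}[(\hat\theta_{s,t}(k)-\theta_{s,*}(k))^2]\le 2\hat\sigma^2_{s,t-1}(k)$ (the two variables being independent posterior draws each of variance $\hat\sigma^2_{s,t-1}(k)$), producing the residual $\sqrt{2\delta\hat\sigma^2_{s,t-1}(k)/K}$. Summing over $k$ yields \eqref{a:info-semibandit}, and the coordinatewise factorization is precisely what licenses the arm-wise chain rule of \cref{lemm:chainrule semibandit} when these informations are later recombined across rounds and tasks.

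The main obstacle is the coupling between the played super-arm $A_{s,t}$, which depends on the \emph{entire} sampled vector $\hat\theta_{s,t}$, and the per-arm confidence events, which depend only on the $k$-th coordinate; this coupling is what forces the careful handling of the weight $\mathbb{P}_{s,t}(k\in A_{s,t})$. The good-event term carries the weight cleanly because its per-arm bound is deterministic given history, but for the complement term keeping the weight is delicate: Cauchy--Schwarz naturally controls $\mathbb{E}_{s,t}[\mathbbm{1}(\text{bad})\,|\hat\theta_{s,t}(k)-\theta_{s,*}(k)|]$ through the failure probability and the second moment, and one bounds the membership indicator by $1$ there, using only the qualitative fact that regret for arm $k$ is incurred solely when $k$ is played. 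This looseness on the residual is immaterial to the final rate, and reward independence across arms is what makes the scalar variance $\hat\sigma^2_{s,t-1}(k)$, rather than a full covariance, the governing quantity.
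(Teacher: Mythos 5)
Your proposal is correct and follows essentially the same route as the paper's proof: the same Thompson-sampling distributional identity yielding the per-arm decomposition $\mathbb{E}_{s,t}[\Delta_{s,t}]=\mathbb{E}_{s,t}[\sum_{k}\mathbbm{1}(k\in A_{s,t})(\hat\theta_{s,t}(k)-\theta_{s,*}(k))]$, the same scalar entropy computation giving $I_{s,t}(\theta_{s,*}(k);k,Y_{s,t}(k))=\tfrac12\log(1+\hat\sigma^2_{s,t-1}(k)/\sigma^2)$, the same per-arm confidence sets via Lemma 5 of Lu et al.\ at level $\delta/(2K)$, and the same good/bad event split with Cauchy--Schwarz and the second-moment bound $2\hat\sigma^2_{s,t-1}(k)$. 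Your only deviation---bounding the membership indicator by one on the bad event, so the residual $\sqrt{2\delta\hat\sigma^2_{s,t-1}(k)/K}$ appears without the factor $\mathbb{P}_{s,t}(k\in A_{s,t})$---gives a marginally weaker bound that is harmless downstream (absorbable into the free parameter $\delta$), and your explicit handling of the coupling between $\mathbbm{1}(k\in A_{s,t})$ and $\hat\theta_{s,t}(k)$ is in fact more careful than the paper's own step, which factors $\mathbb{E}_{s,t}[\mathbbm{1}(k\in A_{s,t})(\hat\theta_{s,t}(k)-\theta_{s,*}(k))]$ into $\mathbb{P}_{s,t}(k\in A_{s,t})\,\mathbb{E}_{s,t}[\hat\theta_{s,t}(k)-\theta_{s,*}(k)]$ despite this correlation.
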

\begin{proof}
Similar to linear bandits we have without forced exploration
\begin{align*}
    \mathbb{E}_{s,t}[\Delta_{s,t}] &= \mathbb{E}_{s,t}[\sum_{k\in A_{s,*}}\theta_{s,*}(k) - \sum_{k\in A_{s,t}}\theta_{s,*}(k)]\\
    &= \mathbb{E}_{s,t}[\sum_{k\in A_{s,t}}\hat{\theta}_{s,t}(k) - \sum_{k\in A_{s,t}}\theta_{s,*}(k)]\\
    &= \mathbb{E}_{s,t}[\sum_{a\in \mathcal{A}}\mathbbm{1}(A_{s,t} = a)\sum_{k\in a}(\hat{\theta}_{s,t}(k) - \theta_{s,*}(k))]\\
    &= \mathbb{E}_{s,t}[\sum_{k\in [K]}\mathbbm{1}(k \in A_{s,t})(\hat{\theta}_{s,t}(k) - \theta_{s,*}(k))]\\
    &= \sum_{k\in [K]}\mathbb{P}_{s,t}(k \in A_{s,t})\mathbb{E}_{s,t}[\hat{\theta}_{s,t}(k) - \theta_{s,*}(k)].
\end{align*}
The second equality is due to Thompson sampling  ($\stackrel{d}{=}$ denotes equal distribution)
$$
\sum_{k\in A_{s,*}}\theta_{s,*}(k)\mid H_{1:s,t} \stackrel{d}{=} \sum_{k\in A_{s,t}}\hat{\theta}_{s,t}(k)\mid H_{1:s,t}.
$$

When forced exploration is used in some task $s$ and round $t$ we have, $\theta_{s,*}(k)\mid H_{1:s,t} \stackrel{d}{=} \hat{\theta}_{s,t}(k)\mid H_{1:s,t}$
\begin{align*}
    \mathbb{E}_{s,t}[\Delta_{s,t}] &= \mathbb{E}_{s,t}[\sum_{k\in A_{s,*}}\theta_{s,*}(k) - \sum_{k\in A_{s,t}}\theta_{s,*}(k)]\\
    &= \mathbb{E}_{s,t}[\sum_{k\in A_{s,t}}\hat{\theta}_{s,t}(k) - \sum_{k\in A_{s,t}}\theta_{s,*}(k)]
    + \mathbb{E}_{s,t}[\sum_{k\in A_{s,*}}\theta_{s,*}(k) - \sum_{k\in A_{s,t}}\hat{\theta}_{s,t}(k)]\\
    &= \mathbb{E}_{s,t}[\sum_{k\in A_{s,t}}\hat{\theta}_{s,t}(k) - \sum_{k\in A_{s,t}}\theta_{s,*}(k)]
    + \mathbb{E}_{s,t}[\sum_{k\in A_{s,*}}\theta_{s,*}(k) - \sum_{k\in A_{s,t}}\hat{\theta}_{s,t}(k)]\\
    &\leq \sum_{k\in [K]}\mathbb{P}_{s,t}(k \in A_{s,t})\mathbb{E}_{s,t}[\hat{\theta}_{s,t}(k) - \theta_{s,*}(k)] + 
   2\sqrt{K} \mathbb{E}_{s,t}[\sqrt{\sum_{k\in K} \theta^2_{s,*}(k)}]
\end{align*}

For each $k\in [K]$, for appropriate $\hat{\mu}_{s,t}(k)$ and $\hat{\sigma}^2_{s,t}(k)$ we know that $\hat{\theta}_{s,t}(k)\mid H_{1:s,t} \sim \mathcal{N}(\hat{\mu}_{s,t}(k), \hat{\sigma}^2_{s,t}(k))$. 
We define the confidence set for each arm $k$ at round $t$ of task $s$, for some $\Gamma_{s,t}(k)$, which can be a function of $H_{1:s,t}$,  to be specified late, as
$$
\Theta_{s,t}(k) = \{\theta: \mid\theta - \hat{\mu}_{s,t}(k)\mid \leq \tfrac{\Gamma_{s,t}(k)}{2}\sqrt{I_{s,t}(\theta_{s,*}(k); k, Y_{s,t}(k))}\}.
$$
A derivation equivalent to linear bandits, gives us 
$$I_{s,t}(\theta_{s,*}(k); k, Y_{s,t}(k)) = \tfrac{1}{2}\log\left(1+ \tfrac{\hat{\sigma}^2_{s,t-1}(k)}{\sigma^2}\right).$$
Because, we only consider arm $k$ we obtain as a corollary of Lemma $5$ in Lu et al.~\cite{lu2019information} that for any $k$, and any $\delta\tfrac{1}{K} >0$ for 
$$\Gamma_{s,t}(k) =  4\sqrt{\frac{\hat{\sigma}^2_{s,t-1}(k)}{\log(1+\hat{\sigma}^2_{s,t-1}(k)/\sigma^2)}\log(\tfrac{4K}{\delta}}). $$ 
we have 
$\mathbb{P}_{s,t}(\hat{\theta}_{s,t}(k) \in \Theta_{s,t}(k)) \geq 1 - \delta/2K$. 

We proceed with the regret bound as
\begin{align*}
    &\mathbb{E}_{s,t}[\hat{\theta}_{s,t}(k) - \theta_{s,*}(k)] \\
    &\leq \mathbb{E}_{s,t}[\mathbbm{1}(\hat{\theta}_{s,t}(k), \theta_{s,*}(k) \in \Theta_{s,t}(k)) (\hat{\theta}_{s,t}(k) - \theta_{s,*}(k))] \\
    & +  \mathbb{E}_{s,t}[\mathbbm{1}^c(\hat{\theta}_{s,t}(k), \theta_{s,*}(k) \in \Theta_{s,t}(k)) (\hat{\theta}_{s,t}(k) - \theta_{s,*}(k))]\\
    &\leq \Gamma_{s,t}(k)\sqrt{I_{s,t}(\theta_{s,*}(k); k, Y_{s,t}(k))} + \sqrt{\mathbb{P}(\hat{\theta}_{s,t}(k) \text{ or } \theta_{s,*}(k) \notin \Theta_{s,t}(k))\mathbb{E}_{s,t}[(\hat{\theta}_{s,t}(k) - \theta_{s,*}(k))^2]}\\
    &\leq \Gamma_{s,t}(k)\sqrt{I_{s,t}(\theta_{s,*}(k); k, Y_{s,t}(k))} + \sqrt{\delta\tfrac{1}{K}\mathbb{E}_{s,t}[(\hat{\theta}_{s,t}(k) - \hat{\mu}_{s,t}(k))^2 + (\theta_{s,*}(k)- \hat{\mu}_{s,t}(k))^2]}\\
    &\leq \Gamma_{s,t}(k)\sqrt{I_{s,t}(\theta_{s,*}(k); k, Y_{s,t}(k))} + \sqrt{2\delta\tfrac{1}{K}\sigma_{s,t}^2(k)}
\end{align*}
This concludes the proof.
\end{proof}

\paragraph{Regret Decomposition:}
We now develop the regret decomposition for the $(K,L)$-Semi-bandit based on the per step regret characterization in \cref{lemm:concentration semibandit}.   
\begin{lemma}
\label{lemm:decomp semibandit} Let, for each $k \in [K]$, $(\Gamma_s(k))_{s\in [m]}$ and $\Gamma(k)$ be non-negative constants such that $\Gamma_{s,t}(k) \leq \Gamma_s(k) \leq \Gamma(k)$   holds for all $s\in [m]$ and $t\in [n]$ almost surely. Then for any $\delta \in (0,1]$ the regret of \adats admits the upper bound
\begin{align*}
  R(m, n)
  \leq & \sqrt{m n K L}\sqrt{\tfrac{1}{K}\sum_{k\in [K]}\Gamma^2(k)I(\mu_{*,k}; H_{1:m})} + 
  2 \sqrt{m K \sum_{k\in K}\left(\mu^2_{q}(k) + \sigma^2_{0,k}+\sigma^2_{q,k}\right)} \\
  & + \sum_{s=1}^{m}\left( \sqrt{n K L} \sqrt{\tfrac{1}{K}\sum_{k\in [K]}\Gamma^2_{s}(k)I(\theta_{s,*}(k); H_s\mid \mu_{*,k}, H_{1:s-1})} + n\sqrt{2\delta\tfrac{1}{K} \sum_{k\in [K]} \hat{\sigma}_{s}^2(k)}\right)\,.
\end{align*}
\end{lemma}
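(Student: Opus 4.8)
The plan is to replay the argument of \cref{lemm:decomp} arm-by-arm, substituting the per-arm per-round bound of \cref{lemm:concentration semibandit} for \eqref{a:info} and the per-arm chain rules of \cref{lemm:chainrule semibandit} for the single-parameter chain rule. Concretely, I would start from $\E{R(n,m)} = \sum_{s,t}\E{\mathbb{E}_{s,t}[\Delta_{s,t}]}$ and plug in the per-round bound, so that the regret splits into three groups: (i) the information terms $\sum_{s,t}\sum_{k}\E{\mathbb{P}_{s,t}(k\in A_{s,t})\,\Gamma_{s,t}(k)\sqrt{I_{s,t}(\theta_{s,*}(k);k,Y_{s,t}(k))}}$, (ii) the $\epsilon$-terms $\sum_{s,t}\sum_k\E{\mathbb{P}_{s,t}(k\in A_{s,t})\sqrt{2\delta\tfrac1K\sigma_{s,t}^2(k)}}$, and (iii) a forced-exploration surplus appearing only in the $O(\sqrt m)$ exploration tasks.

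For the information terms I would, for each fixed arm $k$, invoke the chain rule $I_{s,t}(\theta_{s,*}(k);k,Y_{s,t}(k)) \le I_{s,t}(\mu_{*,k};k,Y_{s,t}(k)) + I_{s,t}(\theta_{s,*}(k);k,Y_{s,t}(k)\mid\mu_{*,k})$ followed by $\sqrt{a+b}\le\sqrt a+\sqrt b$, exactly as in \eqref{eq:basicub}, producing a ``meta'' piece and a ``per-task'' piece. In the meta piece I bound $\Gamma_{s,t}(k)\le\Gamma(k)$ and apply Cauchy--Schwarz to the weighted sum over triples $(s,t,k)$ with weights $w_{s,t,k}=\mathbb{P}_{s,t}(k\in A_{s,t})$; since at most $L$ arms are pulled per round, $\sum_k w_{s,t,k}\le L$ and hence $\sum_{s,t,k}w_{s,t,k}\le mnL$, which supplies the factor $\sqrt{mnL}$ multiplying $\sqrt{\sum_{s,t,k} w_{s,t,k}\Gamma^2(k)\,I_{s,t}(\mu_{*,k};k,Y_{s,t}(k))}$. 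Pushing the expectation inside by Jensen and applying the identity $\E{\sum_{s,t}\mathbb{P}_{s,t}(k\in A_{s,t})I_{s,t}(\mu_{*,k};k,Y_{s,t}(k))} = I(\mu_{*,k};H_{1:m})$ from \cref{lemm:chainrule semibandit} collapses the inner sum to $\sum_k\Gamma^2(k)I(\mu_{*,k};H_{1:m})$; rewriting $\sqrt{mnL}=\sqrt{mnKL}\cdot K^{-1/2}$ gives the first stated term. The per-task piece is identical but one task at a time: fix $s$, bound $\Gamma_{s,t}(k)\le\Gamma_s(k)$, apply Cauchy--Schwarz over $(t,k)$ with $\sum_{t,k}w_{s,t,k}\le nL$, use Jensen, and invoke the second identity $\E{\sum_t\mathbb{P}_{s,t}(k\in A_{s,t})I_{s,t}(\theta_{s,*}(k);k,Y_{s,t}(k)\mid\mu_{*,k})} = I(\theta_{s,*}(k);H_s\mid\mu_{*,k},H_{1:s-1})$, yielding the summand $\sqrt{nKL}\sqrt{\tfrac1K\sum_k\Gamma_s^2(k)I(\theta_{s,*}(k);H_s\mid\mu_{*,k},H_{1:s-1})}$ and then summing over $s$.

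The two remaining pieces are lower-order bookkeeping. For the $\epsilon$-terms I would use that the within-task posterior variance is monotone, $\sigma_{s,t}^2(k)\le\hat\sigma_s^2(k)$ (equality at $t=1$), together with $\mathbb{P}_{s,t}(k\in A_{s,t})\le 1$ and Cauchy--Schwarz over arms, to bound the contribution of each task $s$ by $n\sqrt{2\delta\tfrac1K\sum_k\hat\sigma_s^2(k)}$. For forced exploration I would recall that it is used only in tasks $s\in\{i^2+1:0\le i\le\lfloor\sqrt{m-1}\rfloor\}$, i.e. in $O(\sqrt m)$ tasks, and that the surplus incurred in such a task is at most $2\sqrt K\,\mathbb{E}_{s,t}[\sqrt{\sum_k\theta_{s,*}^2(k)}]$; Jensen gives $\mathbb{E}[\sqrt{\sum_k\theta_{s,*}^2(k)}]\le\sqrt{\sum_k(\mu_*^2(k)+\sigma_{0,k}^2+\sigma_{q,k}^2)}$, and summing over the $O(\sqrt m)$ exploration tasks produces $2\sqrt{mK\sum_k(\mu_*^2(k)+\sigma_{0,k}^2+\sigma_{q,k}^2)}$.

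The main obstacle is orchestrating the per-arm weights $\mathbb{P}_{s,t}(k\in A_{s,t})$ through Cauchy--Schwarz so that (a) the normalizing factor collapses to $\sqrt{mnL}$ rather than $\sqrt{mnK}$---the cardinality bound $\sum_k\mathbb{P}_{s,t}(k\in A_{s,t})\le L$ is precisely what buys the improvement over treating the problem as a $K$-dimensional linear bandit---and (b) the weighted inner sums line up \emph{verbatim} with the per-arm chain-rule identities of \cref{lemm:chainrule semibandit}, which themselves rely on an arm's mutual-information increment vanishing in rounds where it is not pulled. Getting the order of the weighted Cauchy--Schwarz, Jensen, and the chain-rule collapse right so that $\E{\sum_{s,t}\mathbb{P}_{s,t}(k\in A_{s,t})I_{s,t}(\cdot)}$ appears exactly is the delicate part; the surrounding constant manipulations in groups (ii) and (iii) are routine.
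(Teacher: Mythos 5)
Your decomposition and the treatment of both information terms coincide with the paper's own proof of \cref{lemm:decomp semibandit}: start from \cref{lemm:concentration semibandit}, split each arm's information via the per-arm chain rule, apply $\sqrt{a+b}\le\sqrt{a}+\sqrt{b}$, bound $\Gamma_{s,t}(k)$ by $\Gamma_s(k)$ or $\Gamma(k)$, and then run a weighted Cauchy--Schwarz with weights $\mathbb{P}_{s,t}(k\in A_{s,t})$, the cardinality bound $\sum_{k}\mathbb{P}_{s,t}(k\in A_{s,t})\le L$, Jensen, and the collapse identities of \cref{lemm:chainrule semibandit}. (The paper nests three Cauchy--Schwarz applications---over $t$ inside the conditional sum, then over the outer expectation, then over $k$---where you do a single Cauchy--Schwarz over triples $(s,t,k)$ followed by Jensen; these orderings are interchangeable and produce identical terms.) Your forced-exploration accounting also mirrors the paper's.

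The one genuine defect is in the $\epsilon$-terms. You bound $\mathbb{P}_{s,t}(k\in A_{s,t})\le 1$ and then apply Cauchy--Schwarz over arms, but this gives, per task, $\sum_{t}\sum_{k}\sqrt{2\delta\tfrac{1}{K}\hat{\sigma}_{s}^2(k)}\le n\sqrt{K}\sqrt{2\delta\tfrac{1}{K}\sum_{k}\hat{\sigma}_{s}^2(k)}=n\sqrt{2\delta\sum_{k}\hat{\sigma}_{s}^2(k)}$, which is a factor $\sqrt{K}$ larger than the claimed $n\sqrt{2\delta\tfrac{1}{K}\sum_{k}\hat{\sigma}_{s}^2(k)}$: once you discard the pull probabilities, the $\tfrac{1}{K}$ normalization cannot be recovered, and that normalization is precisely the per-arm averaging this lemma exists to deliver (it propagates into the ``arms with $\sigma_{0,k}^2=0$'' term of \cref{thm:semibanditinfo}). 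The paper instead keeps the weights: $\mathbb{E}\sum_{t}\mathbb{P}_{s,t}(k\in A_{s,t})=\mathbb{E}[N_s(k)]$, then Cauchy--Schwarz over arms gives $\sum_{k}\sqrt{2\delta\tfrac{1}{K}\hat{\sigma}_{s}^2(k)}\,\mathbb{E}[N_s(k)]\le\sqrt{2\delta\tfrac{1}{K}\sum_{k}\hat{\sigma}_{s}^2(k)}\,\sqrt{\sum_{k}\left(\mathbb{E}[N_s(k)]\right)^2}\le\sqrt{2\delta\tfrac{1}{K}\sum_{k}\hat{\sigma}_{s}^2(k)}\,\sum_{k}\mathbb{E}[N_s(k)]$. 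So the fix is simply to reuse for the $\epsilon$-terms the same weighted Cauchy--Schwarz you already employ for the information terms. (For what it is worth, even the paper's final step here is loose: it bounds $\sum_{k}\mathbb{E}[N_s(k)]$ by $n$, which is only valid when $L=1$; in general this sum is at most $nL$, so neither route literally yields the stated factor $n$ for $L>1$.)
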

\begin{proof}
The regret decomposition is computed in the following steps. Recall that $\mathcal{E}_{s,t}$ is the indicator if in round $t$ of task $s$ we use exploration.  
\begingroup
\allowdisplaybreaks
\begin{align*}
  &R(m, n) = \mathbb{E}\left[\sum_{s,t}\Delta_{s,t}\right]\\
  &\leq  \mathbb{E}\left[\sum_{s,t}\sum_{k\in [K]}\mathbb{P}_{s,t}(k\in A_{s,t}) \left(\Gamma_{s,t}(k)\sqrt{I_{s,t}(\theta_{s,*}(k); k, Y_{s,t}(k))}\right)\right]\\
  &+ \mathbb{E}\left[\sum_{s,t}\sum_{k\in [K]}\mathbb{P}_{s,t}(k\in A_{s,t})\epsilon_{s,t}(k)\right]
  + 2\mathbb{E}\left[\sum_{s,t} \mathcal{E}_{s,t}\sqrt{K} \mathbb{E}_{s,t}[\sqrt{\sum_{k\in K} \theta^2_{s,*}(k)}] \right]\\
  &\leq  \mathbb{E}\left[\sum_{s,t}\sum_{k\in [K]}\mathbb{P}_{s,t}(k\in A_{s,t}) \left(\Gamma_{s,t}(k)\sqrt{I_{s,t}(\theta_{s,*}(k), \mu_{*,k}; k, Y_{s,t}(k))}\right)\right]\\
  &+ \mathbb{E}\left[\sum_{s,t}\sum_{k\in [K]}\mathbb{P}_{s,t}(k\in A_{s,t}) \sqrt{2\delta\tfrac{1}{K}\sigma_{s,t}^2(k)}\right]
  + 2 mK^{3/2} \mathbb{E}[\sqrt{\sum_{k\in K} \theta^2_{s,*}(k)}]\\
  &= \mathbb{E}\left[\sum_{s,t}\sum_{k\in [K]}\mathbb{P}_{s,t}(k\in A_{s,t}) \left(\Gamma_{s,t}(k)\sqrt{I_{s,t}(\theta_{s,*}(k); k, Y_{s,t}(k)\mid \mu_{*,k}) + I_{s,t}(\mu_{*,k}; k, Y_{s,t}(k)) }\right)\right]\\
  &+ \mathbb{E}\left[\sum_{s,t}\sum_{k\in [K]}\mathbb{P}_{s,t}(k\in A_{s,t}) \sqrt{2\delta\tfrac{1}{K}\sigma_{s,t}^2(k)}\right]
  + 2 mK^{3/2}\sqrt{ \sum_{k\in K}\left(\mu^2_{q}(k) + \sigma^2_{0,k}+\sigma^2_{q,k}\right)}\\
  &\leq \mathbb{E}\left[\sum_{s,t}\sum_{k\in [K]}\mathbb{P}_{s,t}(k\in A_{s,t}) \Gamma_{s,t}(k)\left(\sqrt{I_{s,t}(\theta_{s,*}(k); k, Y_{s,t}(k)\mid \mu_{*,k})} + \sqrt{I_{s,t}(\mu_{*,k}; k, Y_{s,t}(k)) }\right)\right]\\
  &+ \mathbb{E}\left[\sum_{s,t}\sum_{k\in [K]}\mathbb{P}_{s,t}(k\in A_{s,t}) \sqrt{2\delta\tfrac{1}{K}\sigma_{s,t}^2(k)}\right]
  + 2 mK^{3/2}\sqrt{\sum_{k\in K}\left(\mu^2_{q}(k) + \sigma^2_{0,k}+\sigma^2_{q,k}\right)}\\
  &\leq \Gamma_{s}(k)\sum_{s}\sum_{k\in [K]}\mathbb{E}\left[ \sum_{t}\mathbb{P}_{s,t}(k\in A_{s,t}) \sqrt{I_{s,t}(\theta_{s,*}(k); k, Y_{s,t}(k)\mid \mu_{*,k})}\right]\\
  &+ \sum_{k\in [K]}\Gamma(k)\mathbb{E}\left[\sum_{s,t}\mathbb{P}_{s,t}(k\in A_{s,t})\sqrt{I_{s,t}(\mu_{*,k}; k, Y_{s,t}(k)) }\right]\\
  &+ \sum_{s}\sum_{k\in [K]}\sqrt{2\delta\tfrac{1}{K}\hat{\sigma}_{s}^2(k)} \mathbb{E}[\sum_{t} \mathbb{P}_{s,t}(k\in A_{s,t})]
  + 2 mK^{3/2}\sqrt{\sum_{k\in K}\left(\mu^2_{q}(k) + \sigma^2_{0,k}+\sigma^2_{q,k}\right)}
\end{align*}
\endgroup
The first inequality follows from the expression for the reward gaps in Equation~\ref{a:info-semibandit}. The next two equations follow due to the chain rule of mutual information, similar to the linear bandit case. The only difference in this case we use the parameters for each arm ($\theta_{s,*}(k)$ and $\mu_{*,k}$) separately. Also, we use the fact that there are at most $m K$ rounds where forced exploration is used for the $m$ tasks.  The next inequality is due to $\sqrt{a+b}\leq \sqrt{a}+ \sqrt{b}$. The final inequality follows as $\Gamma_{s,t}(k)\leq \Gamma_s(k)\leq \Gamma(k)$, and $\sigma_{s,t}(k) \leq \sigma_{s}(k)$ w.p. $1$ for all $k\in [K]$, and $s\leq m$ and $t\leq n$.

We now derive the bounds for the sum of the mutual information terms for the per task parameters given the knowledge of the meta-parameter.
\begingroup
\allowdisplaybreaks
\begin{align*}
    &\sum_{s}\sum_{k\in [K]}\Gamma_{s}(k)\mathbb{E}\left[\sum_{t}\mathbb{P}_{s,t}(k\in A_{s,t})\sqrt{I_{s,t}(\theta_{s,*}(k); k, Y_{s,t}(k)\mid \mu_{*,k})}\right]\\
 &=\sum_{s}\sum_{k\in [K]}\Gamma_{s}(k)\mathbb{E}\left[\sum_{t}\sqrt{\mathbb{P}_{s,t}(k\in A_{s,t})} \sqrt{\mathbb{P}_{s,t}(k\in A_{s,t}) I_{s,t}(\theta_{s,*}(k); k, Y_{s,t}(k)\mid \mu_{*,k})}\right]\\
  &\leq \sum_{s}\sum_{k\in [K]}\Gamma_{s}(k)\mathbb{E}\left[\sqrt{\sum_{t}\mathbb{P}_{s,t}(k\in A_{s,t})} \sqrt{\sum_{t} \mathbb{P}_{s,t}(k\in A_{s,t}) I_{s,t}(\theta_{s,*}(k); k, Y_{s,t}(k)\mid \mu_{*,k})}\right]\\
  &\leq \sum_{s}\sum_{k\in [K]}\Gamma_{s}(k)\sqrt{\mathbb{E}\sum_{t}\mathbb{P}_{s,t}(k\in A_{s,t})} \sqrt{\mathbb{E}\sum_{t} \mathbb{P}_{s,t}(k\in A_{s,t}) I_{s,t}(\theta_{s,*}(k); k, Y_{s,t}(k)\mid \mu_{*,k})}\\
  &= \sum_{s}\sum_{k\in [K]}\Gamma_{s}(k)\sqrt{\mathbb{E}N_s(k)} \sqrt{I(\theta_{s,*}(k); H_s\mid \mu_{*,k}, H_{1:s-1})}\\
  &\leq \sum_{s}\sqrt{K\sum_{k\in [K]}\mathbb{E}N_s(k)} \sqrt{\tfrac{1}{K}\sum_{k\in [K]}\Gamma^2_{s}(k)I(\theta_{s,*}(k); H_s\mid \mu_{*,k}, H_{1:s-1})}\\
  & = \sum_{s}\sqrt{n K L} \sqrt{\tfrac{1}{K}\sum_{k\in [K]}\Gamma^2_{s}(k)I(\theta_{s,*}(k); H_s\mid \mu_{*,k}, H_{1:s-1})}
\end{align*}
\endgroup
The first equality is easy to see. Next sequence of inequalities follow mainly by repeated application of Cauchy-Schwarz in different forms, and application of chain rule of mutual information. We now describe the other ones.
\begin{itemize}
    \item[-] The second equation follow as $\sum_i a_ib_i \leq \sqrt{\sum_i a^2_i \sum_i b^2_i}$ for $a_i, b_i\geq 0$, with $a_i = \sqrt{\mathbb{P}_{s,t}(k\in A_{s,t})}$ and $b_i =  \sqrt{\mathbb{P}_{s,t}(k\in A_{s,t}) I_{s,t}(\theta_{s,*}(k); k, Y_{s,t}(k)\mid \mu_{*,k})}$.
    \item[-] The third equation uses $\mathbb{E}[XY] \leq \sqrt{\mathbb{E}[X^2]\mathbb{E}[Y^2]}$ for  $X,Y > 0$ w.p. $1$ (positive random variables).
    \item[-]The next equality first uses the relation $\mathbb{E}\sum_{t} \mathbb{P}_{s,t}(k\in A_{s,t}) = \mathbb{E}[N_s(k)]$ where $N_s(k)$ is the number of time arm $k$ is played in the task $s$. Then it also use the chain rule for $I(\theta_{s,*}(k); H_s\mid \mu_{*,k}, H_{1:s-1})$.
    \item[-] For the next  inequality, we apply Cauchy-Schwarz ($\sum_i a_ib_i \leq \sqrt{\sum_i a^2_i \sum_i b^2_i}$) again as $a_i = \sqrt{\mathbb{E}[N_s(k)]}$ and $b_i = I(\theta_{s,*}(k); H_s\mid \mu_{*,k}, H_{1:s-1})$.  Also note that $K$ and $\frac{1}{K}$ cancels out. 
    \item[-] The final inequality is attained by noticing $\mathbb{E}[\sum_{k}N_s(k)] \leq nL$, as at most $L$ arms can be played in each round. 
\end{itemize}

The sum of the mutual information terms pertaining to the meta-parameter can be derived equivalently. 
\begingroup
\allowdisplaybreaks
\begin{align*}
  &\sum_{k\in [K]}\Gamma(k)\mathbb{E}\left[\sum_{s,t}\mathbb{P}_{s,t}(k\in A_{s,t})\sqrt{I_{s,t}(\mu_{*,k}; k, Y_{s,t}(k)) }\right]\\
  &= \sum_{k\in [K]}\Gamma(k)\mathbb{E}\left[\sum_{s,t}\sqrt{\mathbb{P}_{s,t}(k\in A_{s,t})}\sqrt{\mathbb{P}_{s,t}(k\in A_{s,t}) I_{s,t}(\mu_{*,k}; k, Y_{s,t}(k)) }\right]\\
  &\leq  \sum_{k\in [K]}\Gamma(k)\mathbb{E}\left[\sqrt{\sum_{s,t}\mathbb{P}_{s,t}(k\in A_{s,t})}\sqrt{\sum_{s,t}\mathbb{P}_{s,t}(k\in A_{s,t}) I_{s,t}(\mu_{*,k}; k, Y_{s,t}(k)) }\right]\\
  &\leq  \sum_{k\in [K]}\Gamma(k)\sqrt{\mathbb{E}\sum_{s,t}\mathbb{P}_{s,t}(k\in A_{s,t})}\sqrt{\mathbb{E}\sum_{s,t}\mathbb{P}_{s,t}(k\in A_{s,t}) I_{s,t}(\mu_{*,k}; k, Y_{s,t}(k)) }\\
  &=  \sum_{k\in [K]}\Gamma(k)\sqrt{\mathbb{E}N_{1:m}(k)}\sqrt{I(\mu_{*,k}; H_{1:m})}\\
  &\leq  \sqrt{K\sum_{k\in [K]}\mathbb{E}N_{1:m}(k)}\sqrt{\tfrac{1}{K}\sum_{k\in [K]}\Gamma^2(k)I(\mu_{*,k}; H_{1:m})}\\
  & =  \sqrt{m n K L}\sqrt{\tfrac{1}{K}\sum_{k\in [K]}\Gamma^2(k)I(\mu_{*,k}; H_{1:m})}
\end{align*}

For the third term we have 
\begin{align*}
&\sum_{k\in [K]}\sqrt{2\delta\tfrac{1}{K}\hat{\sigma}_{s}^2(k)} \mathbb{E}[\sum_{t} \mathbb{P}_{s,t}(k\in A_{s,t})]\\ 
&\leq \sum_{k\in [K]} \sqrt{2\delta\tfrac{1}{K}  \hat{\sigma}_{s}^2(k)}\mathbb{E}[N_s(k)]\\
&\leq \sqrt{2\delta\tfrac{1}{K} \sum_{k\in [K]} \hat{\sigma}_{s}^2(k)} \sqrt{\sum_{k\in [K]} (\mathbb{E}[N_s(k)])^2}\\
&\leq \sqrt{2\delta\tfrac{1}{K} \sum_{k\in [K]} \hat{\sigma}_{s}^2(k)} \sum_{k\in [K]} \mathbb{E}[N_s(k)] \leq n\sqrt{2\delta\tfrac{1}{K} \sum_{k\in [K]} \hat{\sigma}_{s}^2(k)}
\end{align*}
\endgroup
This provides us with the bound stated in the lemma.

Finally, we have $\mathbb{E}[\sqrt{\sum_{k\in K} \theta^2_{s,*}(k)}] \leq \sqrt{\sum_{k\in K}\left(\mu^2_{q}(k) + \sigma^2_{0,k}+\sigma^2_{q,k}\right)}$ 

\end{proof}

\paragraph{Bounding Mutual Information:} The derivation of the mutual information can be done similar to the linear bandits while using the diagonal nature of the covariance matrices. We present a different argument here.  
\begin{lemma}\label{lemm:mutual semibandit}
For any $H_{1:s, t}$-adapted action-sequence and any $s \in [m]$ and $k\in [K]$, we have
\begin{gather*}
  I(\theta_{s,*}(k); H_{s}\mid\mu_{*,k}, H_{1:s-1})
  \leq \tfrac{1}{2}\log\left(1+ n \tfrac{\sigma^2_{0,k}}{\sigma^2}\right)\,,\\
  I(\mu_{*, k}; H_{1:m})
  \leq \tfrac{1}{2}\log\left(1+ m \tfrac{\sigma^2_{q,k}}{\sigma^2_{0,k} + \sigma^2/n} \right)\,.
\end{gather*} 
\end{lemma}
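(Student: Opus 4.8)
The plan is to express each mutual information as a difference of differential entropies, exactly as in the proof of \cref{lemm:mutual}, but to exploit the fact that for a single arm every quantity is scalar and Gaussian, so the posterior variance is determined by a single scalar count of observations. This replaces the Weyl-inequality bookkeeping of the linear-bandit proof by an elementary monotonicity argument, which is the ``different argument'' alluded to in the statement.

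For the first bound I would write
\begin{align*}
I(\theta_{s,*}(k); H_{s}\mid\mu_{*,k}, H_{1:s-1}) = h(\theta_{s,*}(k)\mid \mu_{*,k}, H_{1:s-1}) - h(\theta_{s,*}(k)\mid \mu_{*,k}, H_{1:s}).
\end{align*}
Conditioned on $\mu_{*,k}$, we have $\theta_{s,*}(k)\sim\mathcal{N}(\mu_{*,k},\sigma^2_{0,k})$ independently of $H_{1:s-1}$, so the first entropy equals $\tfrac12\log(2\pi e\,\sigma^2_{0,k})$. The only task-$s$ observations carrying information about $\theta_{s,*}(k)$ given $\mu_{*,k}$ are the $N_s(k)$ rewards $Y_{s,t}(k)$ with $k\in A_{s,t}$, each an independent $\mathcal{N}(\theta_{s,*}(k),\sigma^2)$ sample; by Gaussian conjugacy the posterior variance is $(\sigma^{-2}_{0,k}+N_s(k)/\sigma^2)^{-1}$, which depends on the history only through the count $N_s(k)$. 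Hence the difference equals $\mathbb{E}[\tfrac12\log(1+N_s(k)\sigma^2_{0,k}/\sigma^2)]$, and since $N_s(k)\le n$ almost surely and the logarithm is increasing, this is at most $\tfrac12\log(1+n\,\sigma^2_{0,k}/\sigma^2)$.

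For the second bound I would proceed identically with $I(\mu_{*,k};H_{1:m})=h(\mu_{*,k})-h(\mu_{*,k}\mid H_{1:m})$ and $h(\mu_{*,k})=\tfrac12\log(2\pi e\,\sigma^2_{q,k})$. The per-arm meta-posterior is the scalar specialization of \cref{lem:multi-task bayesian regression}, whose precision (cf.\ \eqref{eq:mab meta-posterior} with $T_{i,\ell}=N_\ell(k)$) is $\hat\sigma^{-2}_{m+1}(k)=\sigma^{-2}_{q,k}+\sum_{\ell=1}^{m}\tfrac{N_\ell(k)}{N_\ell(k)\sigma^2_{0,k}+\sigma^2}$, again a function of the per-task counts only. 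The key estimate is that $N\mapsto N/(N\sigma^2_{0,k}+\sigma^2)$ is increasing on $[0,n]$, so each summand is at most its value at $N=n$, namely $1/(\sigma^2_{0,k}+\sigma^2/n)$; summing over the $m$ tasks and taking the entropy difference yields $\mathbb{E}[\tfrac12\log(\sigma^2_{q,k}\hat\sigma^{-2}_{m+1}(k))]\le\tfrac12\log(1+m\tfrac{\sigma^2_{q,k}}{\sigma^2_{0,k}+\sigma^2/n})$.

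The main obstacle is conceptual rather than computational: one must justify that, after conditioning on the random history, the relevant posterior variances depend on the history only through the integer pull-counts $N_s(k)$ and $N_\ell(k)$, so that the outer expectation over histories can be discharged by the deterministic worst case $N=n$ by monotonicity. This hinges on the per-arm conditional independence structure — that, given $\mu_{*,k}$, arm $k$'s parameter and rewards are independent of all other arms' data and of the earlier tasks' histories — which is precisely the structure already isolated in \cref{lemm:chainrule semibandit}; invoking it lets me restrict attention to the scalar Gaussian channel for arm $k$ alone and avoids any matrix manipulation.
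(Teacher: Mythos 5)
Your proposal is correct and follows essentially the same route as the paper's proof: both reduce each mutual information to a difference of scalar Gaussian entropies, observe that the relevant posterior precisions are $\sigma^{-2}_{0,k}+N_s(k)/\sigma^2$ and $\sigma^{-2}_{q,k}+\sum_{\ell=1}^{m}\bigl(\sigma^2_{0,k}+\sigma^2/N_\ell(k)\bigr)^{-1}$, i.e.\ functions of the pull counts alone, and then discharge the expectation over histories by the monotone worst case $N=n$. The paper's ``different argument'' is exactly this count-based scalar-conjugacy computation (including the inductive derivation of the meta-posterior variance), so your proof matches it in substance.
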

\begin{proof}
We have the following form for the conditional mutual information $\theta_{s,*}(k)$ with the events $H_s(k)$ conditioned on the meta-parameter $\mu_{*,k}$, and the history of arm $k$ pulls upto stage $s$ (for each $s$)  $H_{1:s-1}$, as a function of $H_{s}$, given as  
\begin{align*}
    I(\theta_{s,*}(k); H_{s}\mid\mu_{*,k}, H_{1:s-1}) = \tfrac{1}{2}\mathbb{E}\log\left(1+N_s(k) \tfrac{\sigma^2_{0,k}}{\sigma^2}\right) \leq \tfrac{1}{2}\log\left(1+ n \tfrac{\sigma^2_{0,k}}{\sigma^2}\right).
\end{align*}
Another way to see this is, in each stage if $\mu_{*,k}$ was known then the variance of the estimate of $\theta_{s,*}(k)$, or equivalently of $(\theta_{s,*}(k) - \mu_{*,k})$, after $N_s(k)$ samples and with an initial variance $\sigma^2_{0,k}$ will be $\frac{1}{\sigma_0^{-2}(k)+ N_s(k)\sigma^{-2}(k)}$. We note that only when $N_s(k) \geq 1$ the mutual information is non-zero. Thus we have the multiplication with $\mathbb{P}(N_s(k) \geq 1)$. The mutual information is then derived easily.   

Similarly, the  mutual information of $\theta_{s,*}(k)$ and the entire history of arm $k$ pulls, i.e.  $H_{1:m}(k)$, is stated as follows. 
\begin{align*}
    I(\mu_{*,k}; H_{1:m}) =  \tfrac{1}{2}\mathbb{E}\log\left(1+\sum_{s=1}^{m} \frac{\sigma^2_{q,k}}{\sigma^2_{0,k} + \sigma^2/N_s(k)} \right) \leq  \tfrac{1}{2}\log\left(1+ m \frac{\sigma^2_{q,k}}{\sigma^2_{0,k} + \sigma^2/n} \right) .
\end{align*}
We claim (proven shortly) that at the end of task $m$ the variance of estimate of $\mu_{*,k}$ is $(\hat{\sigma}^{-2}_{q}(k) + \sum_{s'=1}^{m}(\sigma^2_{0,k} + \sigma^2/N_{s'}(k))^{-1})^{-1}$. This gives the first equality. The final inequality holds by noting that minimizing the terms $\sigma^2/N_s(k)$ with $N_s(k)=n$, for all $s$, (as any arm can be pulled at most $n$ times in any task) maximizes the mutual information.

We now derive the variance of $\mu_{*,k}$. Let the distribution of $\mu_{*,k}$ at the beginning of stage $s$ is  $\mathcal{N}(\hat{\mu}_{s}(k), \hat{\sigma}^2_{s}(k))$. From the $N_s(k)$ samples of arm $k$, we know $\theta_{s,*}(k) \sim \mathcal{N}(\hat{\theta}_s(k), \sigma^2/N_s(k))$ where $\hat{\theta}_s(k)$ is the empirical mean of arm $k$ in task $s$. Further, $\theta_{s,*}(k) - \mu_{*,k}\sim \mathcal{N}(0, \sigma^2_{0,k})$ by our reward model. Thus, we have 
from the two above relation $$\mu_{*,k} \sim \mathcal{N}\left(\frac{\sigma^2/N_s(k)}{\sigma^2_{0,k} + \sigma^2/N_s(k)}\hat{\theta}_{s}(k), \sigma^2_{0,k} + \sigma^2/N_s(k)\right).$$

However, we also know independently that $\mu_{*,k} \sim \mathcal{N}(\hat{\mu}_{s}(k), \hat{\sigma}^2_{s}(k))$. Therefore, a similar combination gives us $\mu_{*,k} \sim \mathcal{N}(\hat{\mu}_{s+1}(k), \hat{\sigma}^2_{s+1}(k))$ where 
\begin{gather*}
    \hat{\mu}_{s+1}(k) = \hat{\sigma}^{-2}_{s+1}(k)\left(\hat{\mu}_s(k) \hat{\sigma}^2_s(k) + \hat{\theta}_s(k) \sigma^2/N_s(k)\right)\\
    \hat{\sigma}^{-2}_{s+1}(k) = \hat{\sigma}^{-2}_{s}(k) + (\sigma^2_{0,k} + \sigma^2/N_s(k))^{-1}\\
    \hat{\sigma}^{-2}_{s+1}(k) = \sigma^{-2}_{q, k} + \sum_{s'=1}^{s}(\sigma^2_{0,k} + \sigma^2/N_{s'}(k))^{-1}.
\end{gather*}
The last equality follows from induction with the base case $\hat{\sigma}_{0}^2(k) = \sigma^2_{q,k}$.
\end{proof}

\paragraph{Bounding $\Gamma_{s}(k)$:} We finally provide the bound on the $\Gamma_{s}(k)$ and $\Gamma(k)$ terms used in the regret decomposition \cref{lemm:decomp semibandit}.
\begin{lemma}\label{lemm:gamma semibandit}
For all $s\in [m]$, and $(\Gamma_s(k))_{s\in [m]}$ and $\Gamma(k)$ as defined in \cref{lemm:decomp semibandit} admit the following bounds, for any $\delta \in (0,1]$, almost surely 
\begin{gather*}
    \Gamma_{s}(k) \leq  4\sqrt{\frac{\sigma^2_{0,k}\left(1 + \tfrac{(1+\sigma^2/\sigma^2_{0,k})\sigma^2_{q,k}}{(\sigma^2_{0,k} +\sigma^2) + s \sigma^2_{q,k}}\right) }{\tfrac{1}{2}\log\left(1+ \tfrac{\sigma^2_{0,k}}{\sigma^2}\left(1 + \tfrac{(1+\sigma^2/\sigma^2_{0,k})\sigma^2_{q,k}}{(\sigma^2_{0,k} +\sigma^2) + s \sigma^2_{q,k}}\right)\right)} \log(\tfrac{4K}{\delta})},\\
    \Gamma(k) \leq 4\sqrt{\tfrac{\sigma^2_{0,k} +\sigma^2_{q,k}}{\log\left(1+ \tfrac{\sigma^2_{0,k} +\sigma^2_{q,k}}{\sigma^2}\right)} \log(\tfrac{4K}{\delta})}.
\end{gather*}
\end{lemma}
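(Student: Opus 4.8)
The plan is to reduce everything to a single history-independent upper bound on the per-arm task-posterior variance $\hat{\sigma}^2_{s,t-1}(k)$ and then exploit the monotonicity of the prefactor in $\Gamma_{s,t}(k)$. Recall from \cref{lemm:concentration semibandit} that $\Gamma_{s,t}(k) = 4\sqrt{(\hat{\sigma}^2_{s,t-1}(k)/\log(1+\hat{\sigma}^2_{s,t-1}(k)/\sigma^2))\log(4K/\delta)}$, and that the map $x \mapsto \sqrt{x/\log(1+x/\sigma^2)}$ is increasing on $(0,\infty)$, exactly as already used in the proof of \cref{thm:linearinfo}. Consequently, any almost-sure upper bound on $\hat{\sigma}^2_{s,t-1}(k)$ immediately transfers to a bound on $\Gamma_{s,t}(k)$, and taking the appropriate bound gives the desired constants $\Gamma_s(k)$ and $\Gamma(k)$.

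First I would show that within a task the posterior variance only shrinks, so that $\hat{\sigma}^2_{s,t}(k) \le \sigma^2_{0,k} + \hat{\sigma}^2_s(k)$ for every round $t$, where $\hat{\sigma}^2_s(k)$ is the meta-posterior variance at the start of task $s$ and $\sigma^2_{0,k}$ is the task-prior variance. In the scalar per-arm setting this is immediate: each observation of arm $k$ adds the nonnegative term $1/\sigma^2$ to the precision $\hat{\sigma}^{-2}_{s,t}(k)$, so the precision is nondecreasing in $t$ and is at least its initial value $(\sigma^2_{0,k}+\hat{\sigma}^2_s(k))^{-1}$. This mirrors the monotonicity step of \cref{lemm:concentration}, but avoids Weyl's inequality since no matrices are involved.

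Next I would bound $\hat{\sigma}^2_s(k)$ via the meta-posterior recursion already derived in the proof of \cref{lemm:mutual semibandit}, namely $\hat{\sigma}^{-2}_s(k) = \sigma^{-2}_{q,k} + \sum_{s'=1}^{s-1}(\sigma^2_{0,k} + \sigma^2/N_{s'}(k))^{-1}$. For the $\Gamma_s(k)$ bound I would retain only the forced-exploration tasks $s' < s$ in this sum; in each such task arm $k$ is pulled at least once, so $N_{s'}(k) \ge 1$ and $(\sigma^2_{0,k}+\sigma^2/N_{s'}(k))^{-1} \ge (\sigma^2_{0,k}+\sigma^2)^{-1}$. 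Since at least $\sqrt{s}$ forced-exploration tasks precede task $s$, this yields $\hat{\sigma}^{-2}_s(k) \ge \sigma^{-2}_{q,k} + \sqrt{s}\,(\sigma^2_{0,k}+\sigma^2)^{-1}$, and inverting gives $\hat{\sigma}^2_s(k) \le \sigma^2_{q,k}(\sigma^2_{0,k}+\sigma^2)/((\sigma^2_{0,k}+\sigma^2)+\sqrt{s}\,\sigma^2_{q,k})$. Combining with the within-task bound and rewriting $\sigma^2_{0,k}+\hat{\sigma}^2_s(k)$ as $\sigma^2_{0,k}(1 + (1+\sigma^2/\sigma^2_{0,k})\sigma^2_{q,k}/((\sigma^2_{0,k}+\sigma^2)+\sqrt{s}\,\sigma^2_{q,k}))$ reproduces exactly the argument of the square root in the stated $\Gamma_s(k)$. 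For $\Gamma(k)$ I would instead drop the entire sum (the crudest case, with no exploration credit), so that $\hat{\sigma}^2_s(k) \le \sigma^2_{q,k}$ and hence $\hat{\sigma}^2_{s,t}(k) \le \sigma^2_{0,k} + \sigma^2_{q,k}$, giving the claimed $\Gamma(k)$.

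The main obstacle is bookkeeping rather than conceptual: correctly counting the forced-exploration tasks (each costing at most $K$ rounds and guaranteeing at least one pull per arm) to certify that at least $\sqrt{s}$ of them precede task $s$, and then carrying through the algebraic simplification so that the expression matches the lemma verbatim. Both steps parallel the linear-bandit argument of \cref{lemm:concentration}, but are genuinely easier here because the per-arm covariances are scalars, so the eigenvalue manipulations collapse to elementary inequalities.
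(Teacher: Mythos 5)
Your proposal is correct and follows essentially the same route as the paper's proof: bound the within-task posterior variance by $\sigma^2_{0,k}+\hat{\sigma}^2_s(k)$, control $\hat{\sigma}^2_s(k)$ via the scalar meta-posterior precision recursion by keeping only the (at least $\sqrt{s}$) forced-exploration tasks with $N_{s'}(k)\geq 1$, and transfer these variance bounds to $\Gamma_s(k)$ and $\Gamma(k)$ using the monotonicity of $x \mapsto \sqrt{x/\log(1+x/\sigma^2)}$. The only cosmetic difference is that you spell out the precision-accumulation and monotonicity steps that the paper leaves implicit.
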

\begin{proof}
At the beginning of task $s$ we know that $\theta_{s,*}(k) \sim \mathcal{N}(\hat{\mu}_s(k), \sigma^2_{0,k} + \hat{\sigma}^2_s(k))$. And as the variance of $\theta_{s,*}(k)$ decreases during task $s$ with new samples from arm $k$, we have the variance 
$$\hat{\sigma}^2_{s,t}(k) \leq \sigma^2_{0,k} + \hat{\sigma}^2_s(k) \leq \sigma^2_{0,k} + \tfrac{(\sigma^2_{0,k}+\sigma^2)\sigma^2_{q,k}}{(\sigma^2_{0,k} +\sigma^2) + s \sigma^2_{q,k}}.$$  
The inequality holds by taking $N_{s'}(k) = 1$ in the expression of $\hat{\sigma}^2_s(k)$ for all tasks as arm $k$ has been played using forced exploration. 

Therefore, we can bound $\Gamma_{s}(k)$, for any $s$,  as 
$$
\Gamma_{s}(k) \leq  4\sqrt{\frac{\sigma^2_{0,k}\left(1 + \tfrac{(1+\sigma^2/\sigma^2_{0,k})\sigma^2_{q,k}}{(\sigma^2_{0,k} +\sigma^2) + s \sigma^2_{q,k}}\right) }{\tfrac{1}{2}\log\left(1+ \tfrac{\sigma^2_{0,k}}{\sigma^2}\left(1 + \tfrac{(1+\sigma^2/\sigma^2_{0,k})\sigma^2_{q,k}}{(\sigma^2_{0,k} +\sigma^2) + s \sigma^2_{q,k}}\right)\right)} \log(\tfrac{4K}{\delta})} 
$$
This implies $\Gamma(k) \leq 4\sqrt{\frac{\sigma^2_{0,k} +\sigma^2_{q,k}}{\log\left(1+ \tfrac{\sigma^2_{0,k} +\sigma^2_{q,k}}{\sigma^2}\right)} \log(\tfrac{4K}{\delta})}$ by setting $s=0$.
\end{proof}

\paragraph{Deriving Final Regret Bound:} We proceed with our final regret bound as 
\semibanditinfo*
\begin{proof}
We now use the regret decomposition in \cref{lemm:decomp semibandit}, the bounds on terms $\Gamma_{s}(k)$ and $\Gamma(k)$ in \cref{lemm:gamma semibandit}, and the mutual information in \cref{lemm:mutual semibandit} bounds derived earlier to obtain our final regret bound for the semi-bandits. The regret bound follows from the following chain of inequalities. 
\begingroup
\allowdisplaybreaks
\begin{align*}
    &R(m, n) \\
    &\leq \sqrt{m n K L}  \sqrt{\tfrac{1}{K}\sum_{k\in [K]}\Gamma^2(k)I(\mu_{*,k}; H_{1:m})}
    + 2 \sqrt{m K \sum_{k\in K}\left(\mu^2_{q}(k) + \sigma^2_{0,k}+\sigma^2_{q,k}\right)} \\
    & + \sum_{s}\left(\sqrt{n K L} \sqrt{\tfrac{1}{K}\sum_{k\in [K]}\Gamma^2_{s}(k)I(\theta_{s,*}(k); H_s\mid \mu_{*,k}, H_{1:s-1})}  +  n\sqrt{2\delta\tfrac{1}{K}\sum_{k\in [K]} \hat{\sigma}_{s}^2(k)}\right)\\
    & \leq 4\sqrt{ \tfrac{1}{K} \sum_{k\in [K]} \frac{\sigma^2_{0,k}+\sigma^2_{q,k}}{\log\left(1+\tfrac{\sigma^2_{0,k}+\sigma^2_{q,k})}{\sigma^2} \right)}\log\left(1+  m \frac{\sigma^2_{q,k}}{\sigma^2_{0,k} + \sigma^2/n}\right)\log(\tfrac{4K}{\delta})}\sqrt{mn K L} \\
    & + 2 m K^{3/2}\sqrt{ \sum_{k\in K}\left(\mu^2_{q}(k) + \sigma^2_{0,k}+\sigma^2_{q,k}\right)}\\
    & + \sum_{s}\left(\sqrt{n K L} \sqrt{\tfrac{1}{K}\sum_{k\in [K]: \sigma^2_{0,k} = 0}\Gamma^2_{s}(k)I(\theta_{s,*}(k); H_s\mid \mu_{*,k}, H_{1:s-1})}  +  n\sqrt{2\delta\tfrac{1}{K}\sum_{k\in [K]: \sigma^2_{0,k} = 0} \hat{\sigma}_{s}^2(k)}\right)\\
    & + \sum_{s}\left(\sqrt{n K L} \sqrt{\tfrac{1}{K}\sum_{k\in [K]: \sigma^2_{0,k} > 0}\Gamma^2_{s}(k)I(\theta_{s,*}(k); H_s\mid \mu_{*,k}, H_{1:s-1})}  +  n\sqrt{2\delta\tfrac{1}{K}\sum_{k\in [K]: \sigma^2_{0,k} > 0} \hat{\sigma}_{s}^2(k)}\right)\\
    &\leq 4\sqrt{ \tfrac{1}{K} \sum_{k\in [K]} \frac{\sigma^2_{0,k}+\sigma^2_{q,k}}{\log\left(1+\tfrac{\sigma^2_{0,k}+\sigma^2_{q,k})}{\sigma^2} \right)}\log\left(1+  m \frac{\sigma^2_{q,k}}{\sigma^2_{0,k} + \sigma^2/n}\right)\log(\tfrac{4K}{\delta})}\sqrt{mn K L} \\
    & + 2m K^{3/2} \sqrt{ \sum_{k\in K}\left(\mu^2_{q}(k) + \sigma^2_{0,k}+\sigma^2_{q,k}\right)} +   n\sqrt{2 m \delta\tfrac{1}{K}\sum_{k\in [K]: \sigma^2_{0,k}= 0}\sum_{s=1}^{m} \tfrac{\sigma^2 \sigma^2_{q,k}}{ \sigma^2  + s \sigma^2_{q,k}}}\\
    %-------------
    & + \left(m+ {\color{blue} \sum_{s=1}^{m} \max\limits_{k\in [K]: \sigma^2_{0,k}>0}  \tfrac{1}{2\sigma^2_{0,k}}\tfrac{(\sigma^2_{0,k}+\sigma^2)\sigma^2_{q,k}}{(\sigma^2_{0,k} +\sigma^2) + s \sigma^2_{q,k}} }\right)\\
    &\times \left(4\sqrt{ \tfrac{1}{K} \sum_{k\in [K]: \sigma^2_{0,k}>0} \frac{\sigma^2_{0,k}}{\log\left(1+\tfrac{\sigma^2_{0,k}}{\sigma^2} \right)}\log\left(1+   n\frac{\sigma^2_{0,k}}{\sigma^2}\right)\log(\tfrac{4K}{\delta})}\sqrt{n K L} 
    +  n\sqrt{2\delta\tfrac{1}{K}\sum_{k\in [K]: \sigma^2_{0,k}>0} \sigma^2_{0,k}} \right)\\
    &\leq 4\sqrt{ \tfrac{1}{K} \sum_{k\in [K]} \frac{\sigma^2_{0,k}+\sigma^2_{q,k}}{\log\left(1+\tfrac{\sigma^2_{0,k}+\sigma^2_{q,k})}{\sigma^2} \right)}\log\left(1+  m \frac{\sigma^2_{q,k}}{\sigma^2_{0,k} + \sigma^2/n}\right)\log(\tfrac{4K}{\delta})}\sqrt{mn K L} \\
    & + 2 m K^{3/2} \sqrt{\sum_{k\in K}\left(\mu^2_{q}(k) + \sigma^2_{0,k}+\sigma^2_{q,k}\right)} +   n \sqrt{ 2\delta \sigma^2 m\tfrac{1}{K}\sum_{k\in [K]: \sigma^2_{0,k}= 0} \log(1 + m \tfrac{\sigma^2_{q,k}}{\sigma^2})}\\
    %-------------
    & + \left(m+ {\color{blue} \left(1+ \max\limits_{k\in [K]: \sigma^2_{0,k}>0} \tfrac{\sigma^2}{\sigma^2_{0,k}}\right) \log(m) }\right)\\
    &\times \left(4\sqrt{ \tfrac{1}{K} \sum_{k\in [K]: \sigma^2_{0,k}>0} \frac{\sigma^2_{0,k}}{\log\left(1+\tfrac{\sigma^2_{0,k}}{\sigma^2} \right)}\log\left(1+   n\frac{\sigma^2_{0,k}}{\sigma^2}\right)\log(\tfrac{4K}{\delta})}\sqrt{n K L} 
    +  n\sqrt{2\delta\tfrac{1}{K}\sum_{k\in [K]: \sigma^2_{0,k}>0} \sigma^2_{0,k}} \right)
\end{align*}
\endgroup
The derivation follows through steps similar to the corresponding derivations for the linear bandits. In the second inequality we differentiate the arms which has $\sigma^2_{0,k} = 0$ against the rest. Any arm $k$ with $\sigma^2_{0,k}=0$ has no mutual information once $\mu_{*,k}$ is known, i.e. $I(\theta_{s,*}(k); H_s\mid \mu_{*,k}, H_{1:s-1})=0$ for all such $k$. 
\end{proof}

\clearpage

\section{Supplementary Experiments}
\label{sec:supplementary experiments}

We conduct two additional experiments. In \cref{sec:synthetic experiments}, we extend synthetic experiments from \cref{sec:experiments}. In \cref{sec:classification experiments}, we experiment with two real-world classification problems: MNIST \cite{lecun10mnist} and Omniglot \cite{lake15human}.

\subsection{Synthetic Experiments}
\label{sec:synthetic experiments}

\begin{figure*}[t]
  \centering
  \includegraphics[width=5.4in]{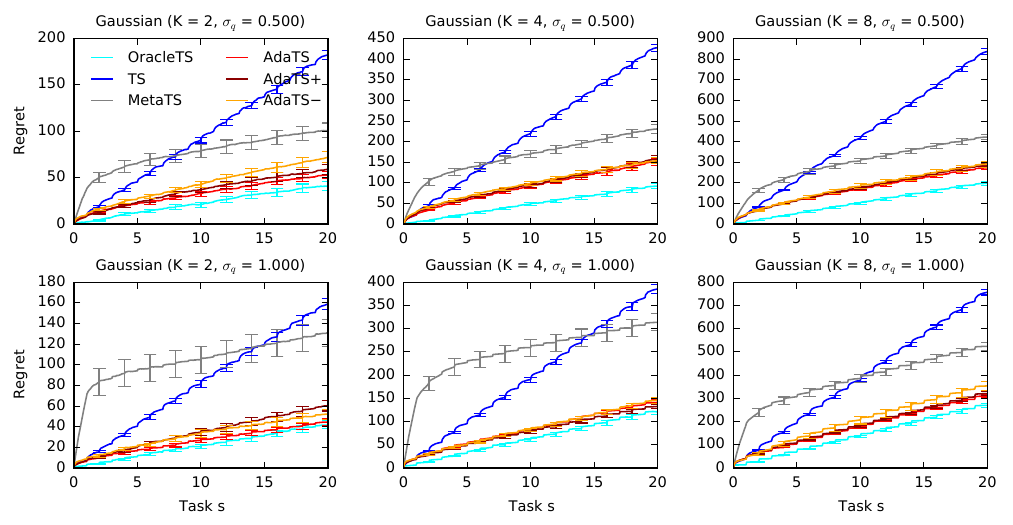}
  \vspace{-0.1in}
  \caption{\adats in a $K$-armed Gaussian bandit. We vary both $K$ and meta-prior width $\sigma_q$.}
  \label{fig:gaussian}
\end{figure*}

\begin{figure*}[t]
  \centering
  \includegraphics[width=5.4in]{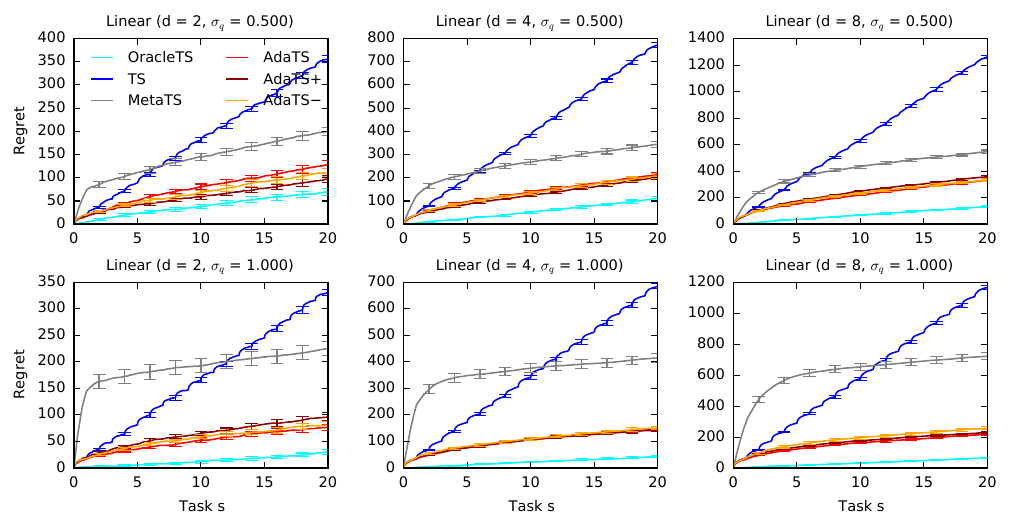}
  \vspace{-0.1in}
  \caption{\adats in a $d$-dimensional linear bandit with $K = 5 d$ arms. We vary both $d$ and meta-prior width $\sigma_q$.}
  \label{fig:linear}
\end{figure*}

This section extends experiments in \cref{sec:experiments} in three aspects. First, we show the Gaussian bandit with $K \in \set{2, 4, 8}$ arms. Second, we show the linear bandit with $d \in \set{2, 4, 8}$ dimensions. Third, we implement \adats with a misspecified meta-prior.

Our results are reported in \cref{fig:gaussian,fig:linear}. The setup of this experiment is the same as in \cref{fig:synthetic}, and it confirms all earlier findings. We also experiment with two variants of misspecified \adats. In \adatsplus, the meta-prior width is widened to $3 \sigma_q$. This represents an overoptimistic agent. In \adatsminus, the meta-prior width is reduced to $\sigma_q / 3$. This represents a conservative agent. We observe that this misspecification has no major impact on the regret of \adats, which attests to its robustness.

\subsection{Online One-Versus-All Classification Experiments}
\label{sec:classification experiments}

\begin{figure}[t]
    \centering
    \centering
	\begin{subfigure}[t]{0.45\textwidth}
    \includegraphics[width=\textwidth]{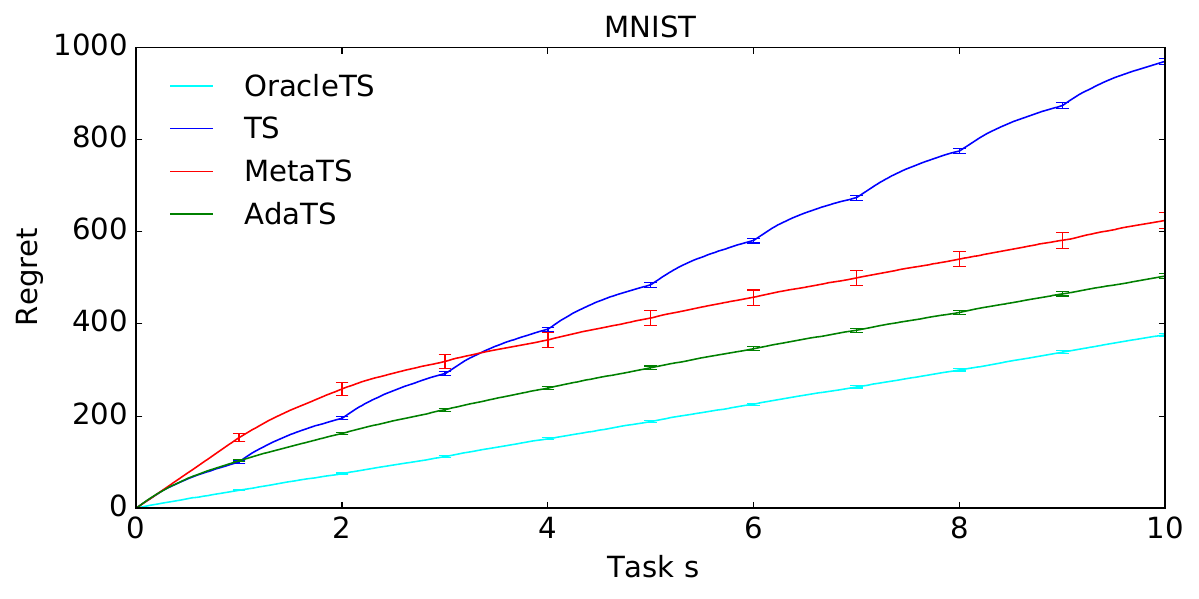}
    \includegraphics[width=0.985\textwidth]{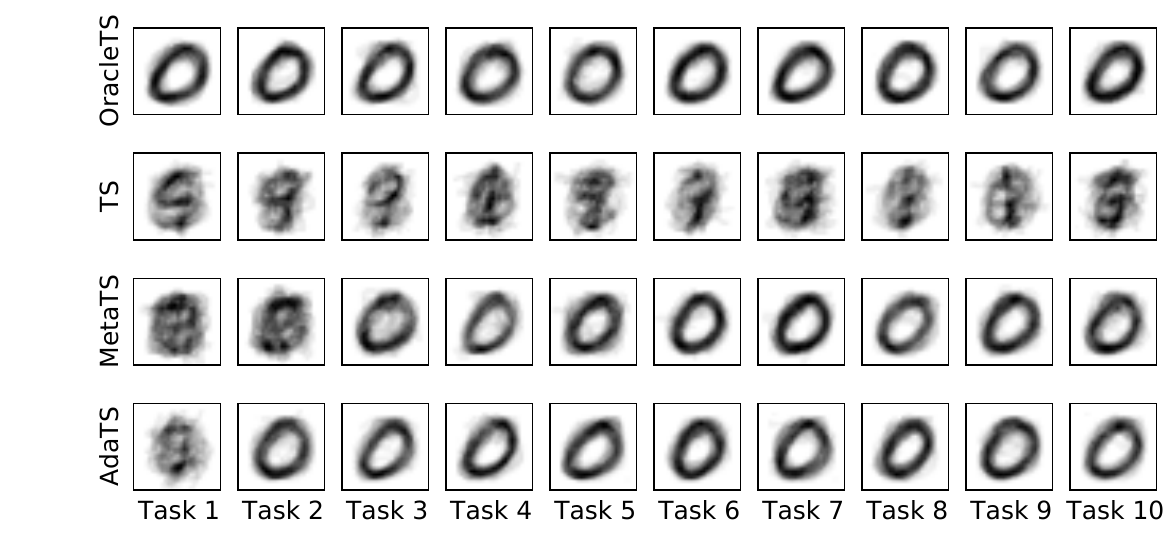}
    \end{subfigure}
    \begin{subfigure}[t]{0.45\textwidth}
    \includegraphics[width=\textwidth]{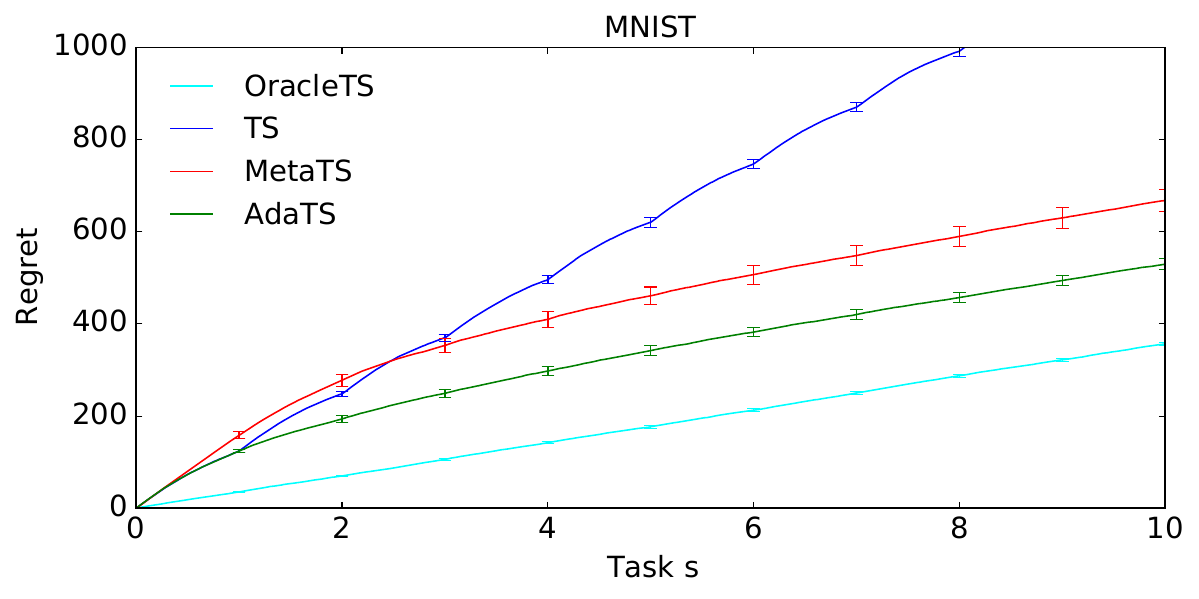}
    \includegraphics[width=0.985\textwidth]{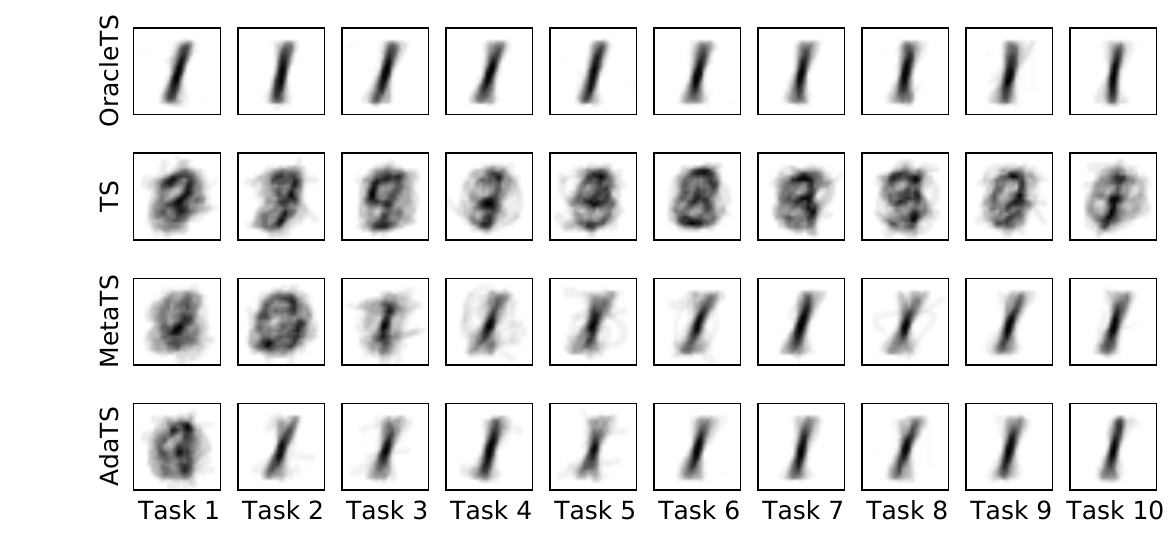}
    \end{subfigure}
    \caption{\adats in two meta-learning problems of digit classification from MNIST. On the top, we plot the cumulative regret as it accumulates over rounds within each task. Below we visualize the average digit, corresponding to the pulled arms in round $1$ of the tasks.}
    \label{fig:mnist}
\end{figure}

\begin{figure}[t]
    \centering
	\begin{subfigure}[t]{0.45\textwidth}
    \includegraphics[width=\textwidth]{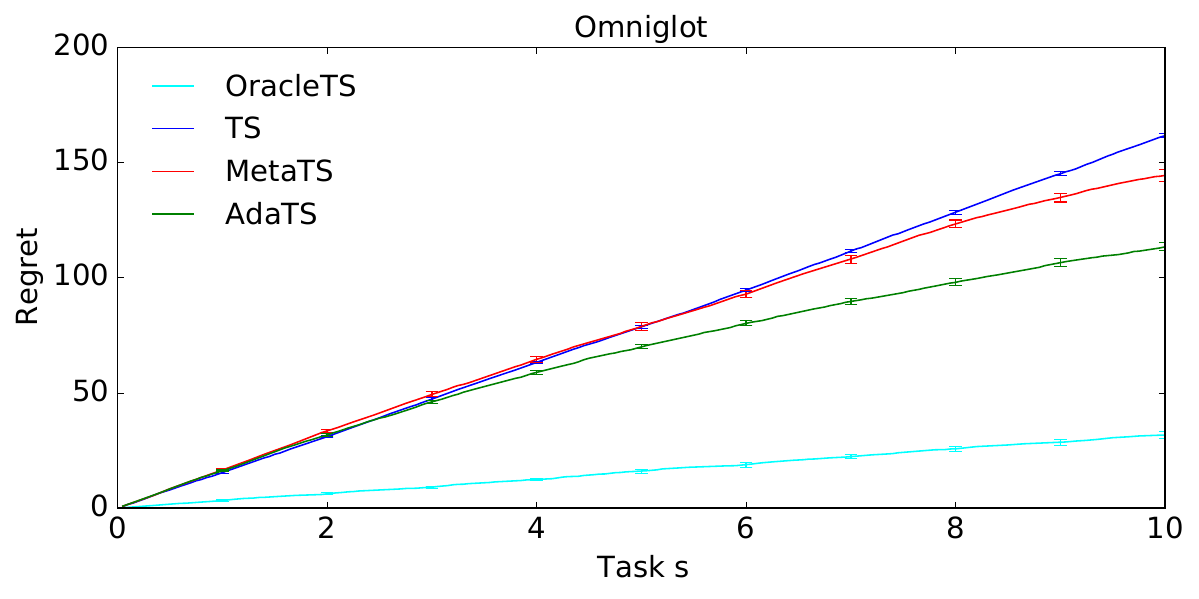}
    \includegraphics[width=0.985\textwidth]{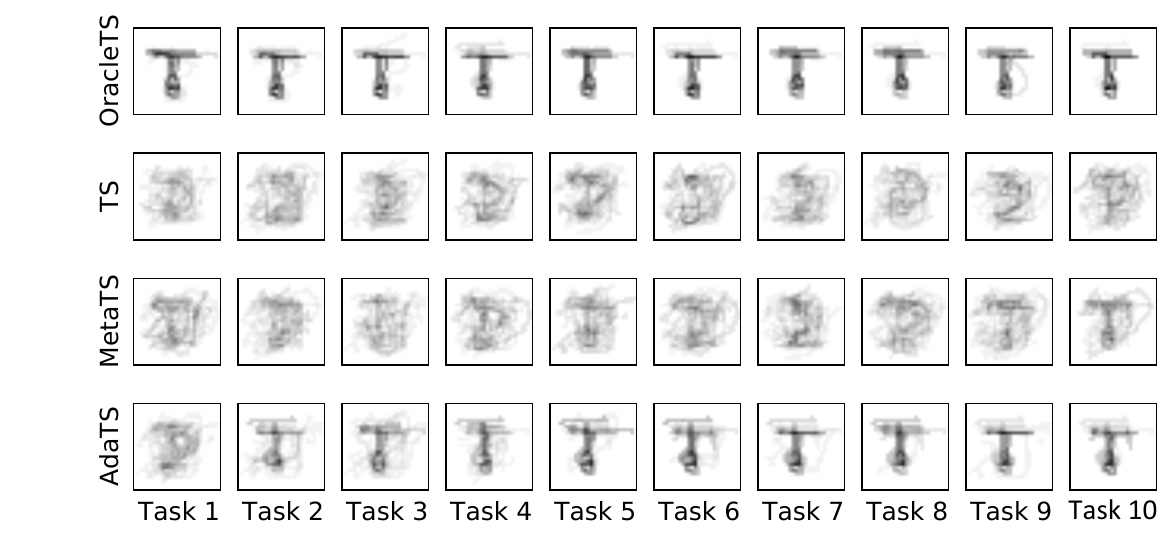}
    \end{subfigure}
    \begin{subfigure}[t]{0.45\textwidth}
    \includegraphics[width=\textwidth]{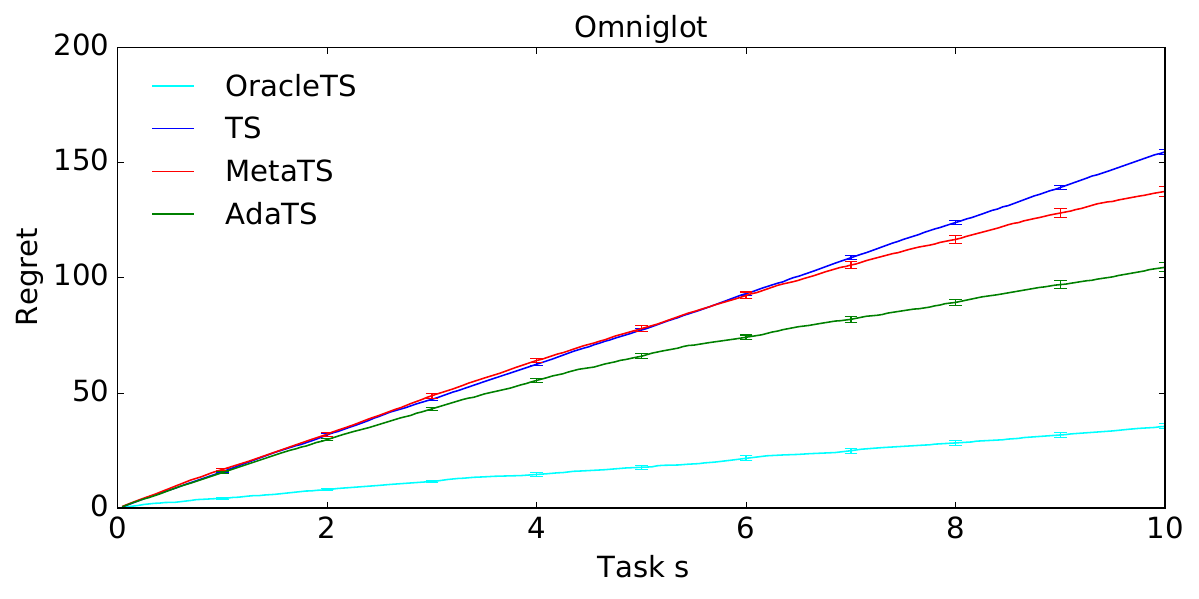}
    \includegraphics[width=0.985\textwidth]{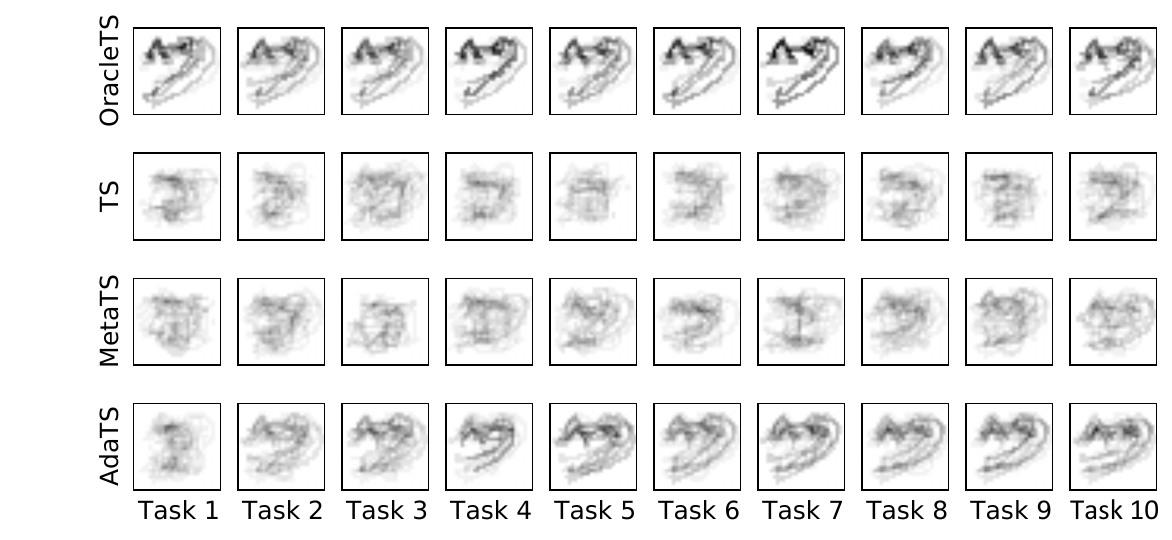}
    \end{subfigure}
    \caption{\adats in two meta-learning problems of character classification from Omniglot. On the top, we plot the cumulative regret as it accumulates over rounds within each task. Below we visualize the average character, corresponding to the pulled arms in round $1$ of the tasks.}
    \label{fig:omni}
\end{figure}

We consider online classification on two real-world datasets, which are commonly used in meta-learning. The problem is cast as a multi-task linear bandit with Bernoulli rewards. Specifically, we have a sequence of image classification tasks where one class is selected randomly to be positive. In each task, at every round, $K$ random images are selected as the arms and the goal is to pull the arm corresponding to an image from the positive class. The reward of an image from the positive class is $\mathrm{Ber}(0.9)$ and for all other classes is $\mathrm{Ber}(0.1)$. Dataset-specific settings are as follows:

\begin{enumerate}
\item \textbf{MNIST} \cite{lecun10mnist}: The dataset contains $60\,000$ images of handwritten digits, which we split into equal-size training and test sets. We down-sample each image to $d = 49$ features and then use these as arm features. The training set is used to estimate $\mu_0$ and $\Sigma_0$ for each digit. The bandit algorithms are evaluated on the test set. In each simulation, we have $m = 10$ tasks with horizon $n = 200$ and $K = 30$ arms.

\item \textbf{Omniglot} \cite{lake15human}: The dataset contains $1\,623$ different handwritten characters from $50$ different alphabets. This is an extremely challenging dataset because we have only $20$ human-drawn images per character. Therefore, it is important to adapt quickly. We train a $4$-layer CNN to extract $d = 64$ features using characters from $30$ alphabets. The remaining $20$ alphabets are split into equal-size training and test sets, with $10$ images per character in each. The training set is used to estimate $\mu_0$ and $\Sigma_0$ for each character. The bandit algorithms are evaluated on the test set. In each simulation, we have $m = 10$ tasks with horizon $n = 10$ and $K = 10$ arms. We guarantee that at least one character from the positive class is among the $K$ arms.
\end{enumerate}

In all problems, the meta-prior is $\cN(\mathbf{0}, I_d)$ and the reward noise is $\sigma = 0.1$. We compare \adats with the same three baselines as in \cref{sec:experiments}, repeat all experiments $20$ times, and report the results in \cref{fig:mnist,fig:omni}. Along with the cumulative regret, we also visualize the average digit / character corresponding to the pulled arms in round $1$ of each task. We observe that \adats learns a very good meta-parameter $\mu_*$ almost instantly, since its average digit / character in task $2$ already resembles the unknown highly-rewarding digit / character. This happens even in Omniglot, where the horizon of each task is only $n = 10$ rounds. Note that the meta-prior was not selected in any dataset-specific way. The fact that \adats still works well attests to the robustness of our method.

\end{document}